\newcolumntype{M}[1]{>{\centering\arraybackslash}m{#1}}
\newcolumntype{N}{@{}m{0pt}@{}}
\newtheorem{theorem}{Theorem}[section]
\newtheorem{prop}[theorem]{Proposition}
\newtheorem{lemma}[theorem]{Lemma}
\newtheorem{remark}[theorem]{Remark}
\def\liminf{\mathop{\rm lim\,inf}\limits}
\def\R{\mathbb{R}}
\def\E{\mathbb{E}}
\def\P{\mathbb{P}}
\def\eps{\varepsilon}
\def\X{\mathcal{X}}
\def\A{\mathbf{A}}
\def\B{\mathbf{B}}
\def\param{\mathcal{D}}
\def\Param{\mathcal{C}^{\textup{dict}}}
\newcommand{\dn}[1]{{\textcolor{magenta}{#1 -dn}}}
\newcommand{\tr}{\textup{tr}}
\newcommand{\had}{\odot}
\newcommand{\kr}{\otimes_{kr}}
\DeclareMathOperator{\Out}{\texttt{Out}}
\DeclareMathOperator*{\argmin}{arg\,min}
\DeclareMathOperator{\mat}{\textup{MAT}}
\newlength\myindent
\newenvironment{customassumption}[1]
{\innercustomassumption}
{\endinnercustomassumption}
\theoremstyle{definition}
\definecolor{hancolor}{rgb}{0.1, 0.0, 0.9}
\newcommand{\commHL}[1]{{\textcolor{black}{#1}}} %  Han comment 
\newcommand{\addresseshere}{%
	\enddoc@text\let\enddoc@text\relax
}
\begin{document}
	
	\title[Online nonnegative CP-dictionary learning ]{Online Nonnegative CP-dictionary Learning\\ for Markovian Data}
	%\title[online NMF for Markovian data]{Online nonnegative matrix factorization for Markovian data}

	\author{Hanbaek Lyu}
	\address{Hanbaek Lyu, Department of Mathematics, University of Wisconsin - Madison WI, 53706, USA}
	\email{\texttt{hlyu@math.wisc.edu}}

	\author{Christopher Strohmeier}
	\address{Christopher Strohmeier, Department of Mathematics, University of California, Los Angeles, CA 90095, USA}
	\email{\texttt{c.strohmeier@math.ucla.edu}}

	\author{Deanna Needell}
	\address{Deanna Needell, Department of Mathematics, University of California, Los Angeles, CA 90095, USA}
	\email{\texttt{deanna@math.ucla.edu}}

	\thanks{The codes for the main algorithm and simulations are provided in \url{https://github.com/HanbaekLyu/OnlineCPDL}}

	\keywords{Online tensor factorization, CP-decomposition, dictionary learning, Markovian data, convergence analysis }
	%\subjclass[2010]{37K40, 60J10, 60F10}

		\begin{abstract}%   <- trailing '%' for backward compatibility of .sty file
		Online Tensor Factorization (OTF) is a  fundamental tool in learning low-dimensional interpretable features from streaming multi-modal data. While various algorithmic and theoretical aspects of OTF have been investigated recently, a general convergence guarantee to stationary points of the objective function without any incoherence or sparsity assumptions is still lacking even for the i.i.d. case. In this work, we introduce a novel algorithm that learns a CANDECOMP/PARAFAC (CP) basis from a given stream of tensor-valued data under general constraints, including nonnegativity constraints that induce interpretability of the learned CP basis. We prove that our algorithm converges almost surely to the set of stationary points of the objective function under the hypothesis that the sequence of data tensors is generated by an underlying Markov chain. Our setting covers the classical i.i.d. case as well as a wide range of application contexts including data streams generated by independent or MCMC sampling. Our result closes a gap between OTF and Online Matrix Factorization in global convergence analysis \commHL{for CP-decompositions}. Experimentally, we show that our algorithm converges much faster than standard algorithms for nonnegative tensor factorization tasks on both synthetic and real-world data. Also, we demonstrate the utility of our algorithm on a diverse set of examples from image, video, and time-series data, illustrating how one may learn qualitatively different CP-dictionaries from the same tensor data by exploiting the tensor structure in multiple ways.
	\end{abstract}
	
	${}$
	\vspace{-0.5cm}
	${}$
	\maketitle
	\section{Introduction}\label{sec:intro}

In modern signal processing applications, there is often a critical need to analyze and understand data that is high-dimensional (many variables), large-scale (many samples), and multi-modal (many attributes). For unimodal (vector-valued) data, \textit{matrix factorization} provides a powerful tool for one to describe data in terms of a linear combination of factors or atoms. In this setting, we have a data matrix $X \in \R^{d \times n}$, and we seek a factorization of $X$ into the product $WH$ for $W \in \R^{d \times R}$ and $H \in \R^{R \times n}$. Including two classical matrix factorization algorithms of Principal Component Analysis (PCA) \cite{wold1987principal} and Nonnegative Matrix Factorization (NMF) \cite{lee1999learning}, this problem has gone by many names over the decades, each with different constraints: dictionary learning, factor analysis, topic modeling, component analysis. It has applications in text analysis, image reconstruction, medical imaging, bioinformatics, and many other scientific fields more generally \cite{sitek2002correction, berry2005email, berry2007algorithms, chen2011phoenix, taslaman2012framework, boutchko2015clustering, ren2018non}.

A \textit{tensor} is a multi-way array that is a natural generalization of a matrix (which is itself a 2-mode tensor) and is suitable for representing multi-modal data. As matrix factorization is for unimodal data, \textit{tensor factorization} (TF) provides a powerful and versatile tool that can extract useful latent information out of multi-modal data tensors. As a result, tensor factorization methods have witnessed increasing popularity and adoption in modern data science. One of the standard tensor factorization paradigms is CANDECOMP/PARAFAC (CP) decomposition \cite{tucker1966some, harshman1970foundations, carroll1970analysis}. In this setting, given a $n$-mode data tensor $\X$, one seeks $n$ \textit{loading matrices} $U^{(1)},\dots,U^{(n)}$, each with $R$ columns, such that $\X$ is approximated by the sum of the outer products of the respective columns of $U_{i}$'s. \commHL{In other words, regarding the $n$-mode tensor $\mathcal{X}$ as the joint probability distribution of $n$ random variables, the CP-decomposition approximates such a joint distribution as the sum of $R$ product distributions, where the columns of the loading matrices give one-dimensional marginal distributions used to form the product distributions.} A particular instance of CP-decomposition is when the data tensor and all of its loading matrices are required to have nonnegative entries. As pointed out in the seminal work of Lee and Seung \cite{lee1999learning} \commHL{(in the matrix case)}, imposing a nonnegativity constraint in \commHL{the} decomposition problem helps one to learn interpretable features from multi-modal data.

%Both matrix and tensor factorization are closely related to the problem of \textit{dictionary-learning} (DL). Algorithms for DL learn over-complete basis elements (atoms) for a dataset such that each element in the dataset can be represented as a \textit{sparse} linear combination of the basis elements. They are applied regularly in text and image data analysis \cite{elad2006image, mairal2007sparse, peyre2009sparse}. 

\begin{wrapfigure}[14]{r}{0.5\textwidth}
	\vspace{-0.6cm}
	\includegraphics[width=0.5\textwidth]{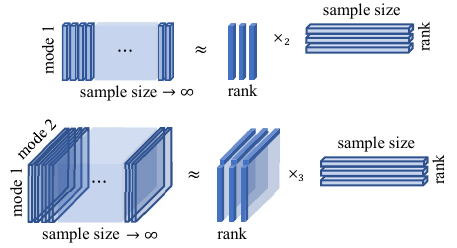}
	\vspace{-0.5cm}
	\caption{Illustration of online MF (top) and online CP-decomposition (bottom). $n$-mode tensors arrive sequentially and past data are not stored. One seeks $n$ loading matrices that give approximate decomposition of all past data.}
	\label{fig:NTF_illustration}
\end{wrapfigure}

Besides being multi-modal, another unavoidable characteristic of modern data is its enormous volume and the rate at which new data are generated. %Hence the challenge is to make full use of such large and multi-modal data despite the difficulty in manipulating (or even storing) the data in the first place. 
\textit{Online learning} algorithms permit incremental processing that overcomes the sample complexity bottleneck inherent to batch processing, which is especially important when storing the entire data set is cumbersome. Not only do online algorithms address capacity and accessibility, but they also have the ability to learn qualitatively different information than offline algorithms for data that admit such a ``sequential" structure (see e.g. \cite{lyu2020applications}). In the literature, many ``online" variants of more classical ``offline" algorithms have been extensively studied --- NMF \cite{mairal2010online, guan2012online, lyu2020online}, TF \cite{zhou2016accelerating, huang2015online, zhou2018online, du2018probabilistic, smith2018streaming}, and dictionary learning \cite{rambhatla2019noodl, arora2015simple, arora2014new, koppel2017d4l}. Online Tensor Factorization (OTF) algorithms with suitable constraints (e.g., nonnegativity) can serve as valuable tools that can extract interpretable features from multi-modal data.

\subsection{Contribution} 

In this work, we develop a novel algorithm and theory for the problem of \textit{online CP-dictionary learning}, \commHL{where the goal is to progressively learn a dictionary of rank-1 tensors (CP-dictionary) from a stream of tensor data.}
%\dn{What do you think about adding a 1-2 sentence punchline here and then going into the details like below?}
\commHL{Namely, given $n$-mode nonnegative tensors $(\X_{t})_{t\ge 0}$, we seek to find an adaptively changing sequence of nonnegative CP-dictionaries such that the current CP-dictionary can approximate all tensor-valued signals in the past as a suitable nonnegative linear combination of its CP-dictionary atoms (see Figure \ref{fig:NTF_illustration} \commHL{in Section \ref{fig:NTF_illustration}}).}  Our framework is flexible enough to handle general situations of an arbitrary number of modes in the tensor data, arbitrary convex constraints in place of the nonnegativity constraint, and a sparse representation of the data using the learned rank-1 tensors. \commHL{In particular, our problem setting includes online nonnegative CP-decomposition.}

Furthermore, we rigorously establish that under mild conditions, our online algorithm produces a sequence of loading matrices that converge almost surely to the set of stationary points of the objective function. In particular, our convergence results hold not only when the sequence of input tensors $(\X_{t})_{t\ge 0}$ are independent and identically distributed (i.i.d.), but also when they form a Markov chain or functions of some underlying Markov chain. Such a theoretical convergence guarantee for online NTF algorithms has not been available even under the i.i.d. assumption on the data sequences. The relaxation to the Markovian setting is particularly useful in practice since often the signals have to be sampled from some complicated or unknown distribution, and obtaining even approximately independent samples is difficult. In this case, the Markov Chain Monte Carlo (MCMC) approach provides a powerful sampling technique (e.g., sampling from the posterior in Bayesian methods \cite{van2018simple} or from the Gibbs measure for Cellular Potts models \cite{voss2012multi}, or motif sampling from sparse graphs \cite{lyu2019sampling}), where consecutive signals can be highly correlated. 

\subsection{Approach}	

Our algorithm combines the Stochastic Majorization-Minimization (SMM) framework \cite{mairal2013stochastic}, which has been used for online NMF algorithms \cite{mairal2010online,guan2012online,zhao2017online,lyu2020online}, and a recent work on block coordinate descent with diminishing radius (BCD-DR)  \cite{lyu2020convergence}. In SMM, one iteratively minimizes a recursively defined surrogate loss function $\hat{f}_{t}$ that majorizes the empirical loss function $f_{t}$. A premise of SMM is that $\hat{f}_{t}$ is convex so that it is easy to minimize, which is the case for online matrix factorization problems in the aforementioned references. However, in the setting of factorizing $n$-mode tensors, $\hat{f}_{t}$ is only convex in each of the $n$ loading matrices and nonconvex jointly in all loading matrices. Our main algorithm (Algorithm \ref{alg:online NTF_highlevel}) only approximately minimizes $\hat{f}_{t}$ by a single round of cyclic block coordinate descent (BCD) in the $n$ loading matrices. This additional layer of relaxation causes a number of technical difficulties in convergence analysis. One of our crucial innovations to handle them is to use a search radius restriction during this process \cite{lyu2020convergence}, which is reminiscent of restricting step sizes in stochastic gradient descent algorithms and is in some sense `dual' to proximal modifications of BCD \cite{grippo2000convergence, xu2013block}. 

Our convergence analysis on dependent data sequences uses the technique of ``conditioning on a distant past", which leverages the fact that while the \commHL{one-step conditional distribution of a Markov chain may be a constant distance away from the stationary} distribution $\pi$, the $N$-step conditional distribution is exponentially close to $\pi$ in $N$. This technique has been developed in \cite{lyu2020online} recently to handle dependence in data streams for online NMF algorithms. 

\subsection{Related work} 
We roughly divide the literature on TF into two classes depending on \textit{structured} or \textit{unstructured} TF problems. The \textit{structured TF problem} concerns recovering exact loading matrices of a  tensor, where a structured tensor decomposition with loading matrices satisfying some incoherence or sparsity conditions is assumed. A number of works address this problem in the offline setting \cite{tang2015guaranteed, anandkumar2015learning, sharan2017orthogonalized, sun2017provable, barak2015dictionary, ma2016polynomial, schramm2017fast}. Recently,  \cite{rambhatla2020provable} addresses an online structured TF problem by reducing it to an online MF problem using sparsity constraints on all but one loading matrices.

On the other hand, \commHL{in the \textit{unstructured TF problem}, one does not make any modeling assumption on the tensor subject to a decomposition so there are no true factors to be discovered. Instead, given an arbitrary tensor, one tries to find a set of factors (matrices or tensors) that gives the best fit of a chosen tensor decomposition model}. In this case, convergence to a globally optimal solution cannot be expected, and global convergence to stationary points of the objective function is desired. For offline problems, global convergence to stationary points of the block coordinate descent method is known to hold under some regularity assumptions on the objective function \cite{grippo2000convergence, grippof1999globally, bertsekas1999nonlinear}. The recent works \cite{zhou2016accelerating, huang2015online, zhou2018online, du2018probabilistic, smith2018streaming} on online TF focus on computational considerations and do not provide a convergence guarantee. For online NMF, almost sure convergence to stationary points of a stochastic majorization-minimization (SMM) algorithm under i.i.d. data assumption is well-known \cite{mairal2010online}, \commHL{which has been recently extended} to the Markovian case in \cite{lyu2020online}. Similar global convergence for online TF is not known even under the i.i.d. assumption. The main difficulty of extending a similar approach to online TF is that the recursively constructed surrogate loss functions are nonconvex and cannot be jointly minimized in all $n$ loading matrices when $n\ge 2$. 

\commHL{There are several recent works improving standard CP-decomposition algorithms such as the alternating least squares (ALS) (see, e.g., \cite{kolda2009tensor}).  \cite{battaglino2018practical} proposes a randomized ALS algorithm, that subsamples rows from each factor matrix update, which is an overdetermined least squares problem. A similar technique of row subsampling was used in the context of high-dimensional online matrix factorization \cite{mensch2017stochastic}. \cite{ma2018randomized} proposed a randomized algorithm for online CP-decomposition but no theoretical analysis was provided. Also, CP-decomposition with structured factor matrices has been investigated in \cite{goulart2015tensor}. On the other hand, \cite{vannieuwenhoven2015computing} considers a more efficient version of gradient descent type algorithms for CP-decomposition. 
}

\commHL{In the context of dictionary learning, there is an interesting body of work considering tensor-dictionary learning based on the Tucker-decomposition \cite{shakeri2016minimax, ghassemi2017stark, shakeri2018identifiability, ghassemi2019learning, shakeri2019sample}. When learning a reconstructive tensor dictionary for tensor-valued data, one can impose additional structural assumptions on the tensor dictionary in order to better exploit the tensor structure of the data and to gain reduced computational complexity. While in this work we consider the CP-decomposition model for the tensor-dictionary (also in an online setting), the aforementioned works consider the Tucker-decomposition instead and obtain various results on sample complexity, identifiability of a Tucker dictionary, and local convergence. }

While our approach largely belongs to the SMM framework, there are related works using stochastic gradient descent (SGD). In \cite{zhao2017online}, an online NMF algorithm based on projected SGD with general divergence in place of the squared $\ell_{2}$-loss is proposed, and convergence to stationary points to the expected loss function for i.i.d. data samples is shown. In \cite{sun2018markov}, a similar convergence result for stochastic gradient descent algorithms for \textit{unconstrained} nonconvex optimization problems with Markovian data samples is shown. While \commHL{none of these} results can be directly applied to our setting of online NTF for Markovian data, it may be possible to develop an SGD based approach for our setting, and it will be interesting to compare the performance of the algorithms based on SMM and SGD. 

\subsection{Organization}

In Section \ref{sec:PF} we first give a background discussion on NTF and CP-decomposition and then state the main optimization problem we address in this paper (see \eqref{eq:def_OCPDL}). In Section \ref{section:main_alg}, we provide the main algorithm (Algorithm \ref{alg:online NTF_highlevel}) and give an overview of the main idea. Section \ref{sec:theory} states the main convergence result in this paper, Theorem \ref{thm:online NTF_convergence}, together with a discussion on necessary assumptions and key lemmas used for the proof. In Section \ref{section:proof} we give the proof of the main result, Theorem \ref{thm:online NTF_convergence}. In Section \ref{section:experimanl_validation}, we compare the performance of our main algorithm on the offline nonnegative CP-decomposition problem against other baseline algorithms -- Alternating Least Squares and Multiplicative Update. We then illustrate our approach on a diverse set of applications in Section \ref{sec:zebras}; these applications are chosen to showcase the advantage of being able to flexibly reshape multi-modal tensor data and learn CP-dictionary atoms for any desired group of modes jointly. 

\commHL{In Appendix \ref{sec:MC_intro}, we provide some additional background on Markov chains and Markov chain Monter Carlo sampling. Appendix \ref{sec:auxiliary_lemmas} contains some auxiliary lemmas. In Appendix \ref{section:statement_alg_bounded_memeory}, we provide a memory-efficient implementation (Algorithm \ref{algorithm:online NTF}) of Algorithm \ref{alg:online NTF_highlevel} that uses bounded memory regardless of the length of the data stream.  }

\subsection{Notation}

For each integer $k\ge 1$, denote $[k]=\{1,2,\dots, k\}$. Fix integers $n, I_{1},\dots,I_{n}\ge 1$. An $n$-mode tensor $\mathbf{X}$ of shape $I_{1}\times \dots \times I_{n}$ is a map $(i_{1}, \dots, i_{n})\mapsto \mathbf{X}(i_{1}, \dots, i_{n})\in \R$ from the multi-index set $[I_{1}]\times \dots \times  [I_{n}]$ into the real line $\R$. We identify 2-mode tensors with matrices and 1-mode tensors with vectors, respectively. We do not distinguish between vectors and columns of matrices. For two real matrices $A$ and $B$, we denote their Frobenius inner product as $\langle A,B \rangle:=\tr(B^{T}A)$ whenever the sizes match. If we have $N$ $n$-mode tensors $\mathbf{X}_{1},\dots,\mathbf{X}_{N}$ of the same shape $I_{1}\times \dots \times I_{n}$, we identify the tuple $[\mathbf{X}_{1},\dots, \mathbf{X}_{N}]$ as the $(n+1)$-mode tensor $\mathcal{X}$ of shape $I_{1}\times \dots \times I_{n}\times N$, whose the $i^{\textup{th}}$ slice along the $(n+1)^{\textup{st}}$ mode equals $\mathbf{X}_{i}$. For given $n$-mode tensors $\A$ and $\B$, denote by $\A\had \B$ and $\A \kr \B$ their Hadamard (pointwise) product and Katri-Rao product, respectively. When $\B$ is a matrix, for each $1\le j \le n$, we also denote their mode-$j$ product by $\A\times_{j} \B$. (See \cite{kolda2009tensor} for an excellent survey of tensor algorithms, albeit with notation that differs from our own). % \cs{mode-j product of tensors?} 

\section{Background and problem formulation}\label{sec:PF}

\subsection{CP-dictionary learning and nonnegative tensor factorization}
\label{subsection:CPD_NTF}

Assume that we are given $N$ observed vector-valued signals $x_{1},\dots,x_{N}\in \R_{\ge 0}^{d}$. Fix an integer $R \ge 1$ and consider the following approximate factorization problem (see \eqref{eq:NMF_intro_2} for a precise statement)%\dn{Have we formally defined $\approx$ in this context?}
%\CSnote{iust l2 loss, suggested from (2) below?}
\begin{align}\label{eq:NMF_intro_1}
	[x_{1},\dots,x_{N}] \approx [u_{1},\dots,u_{R}] \times_{2} H \qquad \Longleftrightarrow \qquad X\approx UH,
\end{align} 
where $\times_{2}$ denotes the mode-2 product, $X=[x_{1},\dots,x_{N}]\in \R_{\ge 0}^{d\times N}$, $U=[u_{1},\dots,u_{R}]\in \R_{\ge 0}^{d\times R}$, and $H\in \R_{\ge 0}^{R\times N}$. The right hand side \eqref{eq:NMF_intro_1} is the well-known \textit{nonnegative matrix factorization} (NMF) problem, where the use of nonnegativity constraint is crucial in obtaining a ``parts-based" representation of the input signals \cite{lee1999learning}.  Such an approximate factorization learns $R$ \textit{dictionary atoms} $u_{1},\dots,u_{R}$ that together can approximate each observed signal $x_{j}$ by using the nonnegative linear coefficients in the $j^{\text{th}}$ column of $H$. The factors $U$ and $H$ in \eqref{eq:NMF_intro_1} above are called the \textit{dictionary} and \textit{code} of the data matrix $X$, respectively. They can be learned by solving the following optimization problem 
\begin{align}\label{eq:NMF_intro_2}
	\argmin_{U'\in \R_{\ge 0}^{d\times R}, H'\in \R_{\ge 0}^{R\times N} } \left( \lVert X - U'H' \rVert_{F}^{2} + \lambda \lVert H' \rVert_{1} \right),
\end{align}
where $\lambda\ge 0$ is a \commHL{regularization parameter} that encourages a sparse representation of the columns of $X$ over the columns of $U$. Note that  \eqref{eq:NMF_intro_2} is also known as a \textit{dictionary learning problem}
\cite{olshausen1997sparse, engan1999frame, lewicki2000learning, elad2006image, lee2005acquiring}, especially when $R \ge d$.  

Next, suppose we have $N$ observed $n$-mode tensor-valued signals $\mathbf{X}_{1},\dots,\mathbf{X}_{N}\in \R_{\ge 0}^{I_{1}\times \dots \times I_{n}}$. A direct tensor analogue of the NMF problem \eqref{eq:NMF_intro_1} would be the following:
\begin{align}\label{eq:NTF_intro_1}
	[\mathbf{X}_{1},\dots, \mathbf{X}_{N}] \approx [\mathbf{D}_{1},\dots, \mathbf{D}_{R}]\times_{n+1} H \qquad \Longleftrightarrow \qquad \mathcal{X}\approx \mathcal{D}\times_{n+1} H,
\end{align}
where $\mathcal{X}=[\mathbf{X}_{1},\dots, \mathbf{X}_{N}]\in \mathbb{R}_{\ge 0}^{I_{1}\times \dots \times I_{n} \times N}$, $\mathcal{D}=[\mathbf{D}_{1},\dots, \mathbf{D}_{R}]\in \mathbb{R}_{\ge 0}^{I_{1}\times \dots \times I_{n} \times R}$, and $H\in \mathbb{R}_{\ge 0}^{R\times N}$. As before, we call $\mathcal{D}$ and $H$ above the dictionary and code of the data tensor $\mathcal{X}$, respectively. Note that this problem is equivalent to \eqref{eq:NMF_intro_1} since 
\begin{align}
	\lVert \mathcal{X}- \mathcal{D}\times_{n+1} H \rVert_{F}^{2} =  \lVert \mat(\mathcal{X})- \mat(\mathcal{D})\times_{\commHL{2}} H \rVert_{F}^{2},
\end{align}
where $\mat(\cdot)$ is the matricization operator that vectorizes  (using lexicographic ordering of entries) each slice with respect to the last mode. For instance, $\mat([\mathbf{X}_{1},\dots, \mathbf{X}_{N}])$ is a $(I_{1}\cdots I_{n})\times N$ matrix whose $i^{\textup{th}}$ column is the vectorization of $\mathbf{X}_{i}$. 

Now, consider imposing an additional structural constraint on the dictionary atoms $\mathbf{D}_{1},\dots, \mathbf{D}_{N}$ in  \eqref{eq:NTF_intro_1}. Specifically, suppose we want each $\mathbf{D}_{i}$ to be \commHL{the sum of $R$ rank 1 tensors}. Equivalently, we assume that there exist \textit{loading matrices} $[U^{(1)},\dots,U^{(n)}]\in \R_{\ge 0}^{I_{1}\times R} \times \dots \times \R_{\ge 0}^{I_{n}\times R}$ such that 
\begin{align}\label{eq:def_OUT}
	[\mathbf{D}_{1},\dots, \mathbf{D}_{R}] &= \Out(U^{(1)},\dots,U^{(n)}) \\
	&:= \left[\bigotimes_{k=1}^{n} U^{(k)}(:,1),\, \bigotimes_{k=1}^{n} U^{(k)}(:,2),\, \dots \,, \bigotimes_{k=1}^{n} U^{(k)}(:,R)  \right] \in \R_{\ge 0}^{I_{1}\times \dots \times I_{n} \times R},
\end{align} 
where $U^{(k)}(:,j)$ denotes the $i^{\textup{th}}$ column of the $I_{k}\times R$ matrix $U^{(k)}$ and $\otimes$ denotes the outer product. Note that we are also defining the operator $\Out(\cdot)$ here, which will be used throughout this paper. %\dn{Maybe also emphasize that you are also defining Out here.} 
In this case, the tensor factorization problem in \eqref{eq:NTF_intro_1} becomes (a more precise statement is given in \eqref{eq:def_OCPDL})
\begin{align}\label{eq:NTF_intro_2}
	[\mathbf{X}_{1},\dots, \mathbf{X}_{N}] \approx \Out(U^{(1)},\dots,U^{(n)})\times_{n+1} H.
\end{align}
When $N=1$ and $\lambda=0$, then $H\in \mathbb{R}_{\ge 0}^{R\times 1}$, so by absorbing the $i^{\textup{th}}$ entry of $H$ into the $\mathbf{D}_{i}$, we see that the above problem \eqref{eq:NTF_intro_2} reduces to 
\begin{align}\label{eq:NTF_intro_4}
	\mathbf{X}\approx \sum \Out(U^{(1)},\dots,U^{(n)}) := \sum_{i=1}^{R} \bigotimes_{k=1}^{n} U^{(k)}(:,i),
\end{align}
which is the nonnegative CANDECOMP/PARAFAC (CP) decomposition  problem \cite{tucker1966some, harshman1970foundations, carroll1970analysis}.
On the other hand, if $n=1$ so that $\mathbf{X}_{i}$ are vector-valued signals, then \eqref{eq:NTF_intro_2} reduces to the classical dictionary learning problem \eqref{eq:NMF_intro_2}. For these reasons, we refer to  \eqref{eq:NTF_intro_2} as the \textit{CP-dictionary learning} (CPDL) problem. We call the $(n+1)$-mode tensor  $\Out(U^{(1)},\dots,U^{(n)})=[\mathbf{D}_{1},\dots,\mathbf{D}_{R}]$ in $\R^{(I_{1}\times \dots \times I_{n}\times R)}$ a \textit{CP-dictionary} and the matrix $H\in \R_{\ge 0}^{R\times N}$ the \textit{code} of the dataset $\mathcal{X}=[\mathbf{X}_{1},\dots,\mathbf{X}_{N}]$, respectively. Here we call the rank-1 tensors $\mathbf{D}_{i}$ the \textit{atoms} of the CP-dictionary. %\commHL{For convenience, we will refer to the tuple of loading matrices $[U^{(1)},\dots,U^{(n)}]$ as CP-dictionary, since it determines a unique CP-dictionary $\Out(U^{(1)},\dots,U^{(n)})$ (the converse is not true).}

\subsection{Online CP-dictionary learning}\label{sec:22}

Next, we consider an \textit{online} version of the CPDL problem we considered in \eqref{eq:NTF_intro_2}. \commHL{Given a continuously arriving sequence of data tensors $(\mathbf{X}_{t})_{t\ge 0}$, can we find an adaptively changing sequence of CP-dictionaries such that the current CP-dictionary can approximate all tensor-valued signals in the past as a suitable nonnegative linear combination of its CP-dictionary atoms (see Figure \ref{fig:NTF_illustration} {in Section \ref{fig:NTF_illustration}})? This online problem can be explicitly formulated as an empirical loss minimization framework, and we will also state an equivalent stochastic program (under some modeling assumption) that we rigorously address.}  

%Let $\X=[\mathbf{X}_{1},\dots,\mathbf{X}_{b}]$ denote a minibatch (collection) of $b\in \mathbb{N}$ data tensors 
\commHL{
	Fix constraint sets for code and loading matrices $\mathcal{C}^{\textup{code}}\subseteq \R^{R\times b}$ and $\mathcal{C}^{(i)}\subseteq \R^{I_{i}\times R}$, $i=1,\dots,n$, respectively (generalizing the nonnegativity constraints in Subsection \ref{subsection:CPD_NTF}). Write  $\mathcal{C}^{\textup{dict}}:=\mathcal{C}^{(1)}\times \cdots \times \mathcal{C}^{(n)}$. For each $\mathcal{X}\in\R_{\ge 0}^{I_{1}\times \dots\times I_{n}\times b}$, $\mathcal{D}:=[U^{(1)},\dots,U^{(n)}]\in \R^{I_{1}\times R}\times \dots \times \R^{I_{n}\times R}$, $H\in \R^{R\times b}$, define  
	\begin{align}
		\ell(\mathcal{X},\mathcal{D},H) &:= \lVert 
		\mathcal{X} - \Out(\mathcal{D}) \times_{n+1} H
		\rVert_{F}^{2} + \lambda \lVert H \rVert_{1},  \label{eq:def_loss_full} \\
		\ell(\mathcal{X},\mathcal{D}) &:= \inf_{H\in \mathcal{C}^{\textup{code}}}\,\, \ell(\mathcal{X},\mathcal{D},H),\label{eq:def_loss}
	\end{align}
	where $\lambda\ge 0$  is a regularization parameter. Fix a sequence of non-increasing weights $(w_{t})_{t\ge 0}$ in $(0,1]$. Here $\mathcal{X}$ denotes a minibatch of $b$ tensors in $\R^{I_{1}\times \dots \times I_{n}}$, so minimizing $\ell(\mathcal{X}, \mathcal{D})$ with respect to $\mathcal{D}$ amounts to fitting the CP-dictionary $\mathcal{D}$ to the minibatch of $b$ tensors in $\mathcal{X}$. }

\commHL{The \textit{online CP-dictionary learning} (online CPDL) problem is the following empirical loss minimization problem: 
	\begin{align}\label{eq:def_OCPDL_ELM}
		\hspace{-1cm} \textup{Upon arrival of $\mathcal{X}_{t}$:}\quad	\mathcal{D}_{t}\in \argmin_{ \mathcal{D} \in \mathcal{C}^{\textup{dict}}} \big( 	f_{t}(\mathcal{D}) := (1-w_{t}) f_{t-1}(\mathcal{D}) + w_{t} \, \ell(\mathcal{X}_{t},\mathcal{D}) \big),
	\end{align}
	where $f_{t}$ is the \textit{empirical loss function} recursively defined by the weighted average  in \eqref{eq:def_OCPDL_ELM} with $f_{0}\equiv 0$.  One can solve the recursion in \eqref{eq:def_OCPDL_ELM} and obtain the more explicit formula for the empirical loss:  
	\begin{align}\label{eq:def_OCPDL_ELM_explicit}
		f_{t}(\mathcal{D}) = \sum_{k=1}^{t} \ell(\X_{k},\mathcal{D}) \, w^{t}_{k}, \qquad w^{t}_{k}:=w_{k}\prod_{i=k+1}^{t} (1-w_{i}).
\end{align}}

\commHL{\noindent The weight $w_{t}$  in \eqref{eq:def_OCPDL_ELM} controls how much we want our new loading matrices in $\mathcal{D}_{t}$ to deviate from minimizing the previous empirical loss $f_{t-1}$ to adapting to the newly observed tensor data $\mathcal{X}_{t}$. In one extreme case of $w_{t}\equiv 1$, $\mathcal{D}_{t}$ is a minimizer of the time-$t$ loss $\ell(\mathcal{X}_{t},\cdot)$ and ignores the past $f_{t-1}$. If $w_{t}\equiv \alpha\in (0,1)$ then the history is forgotten exponentially fast, that is, $f_{t}(\cdot) = \sum_{s=1}^{t} \alpha(1-\alpha)^{t-s}\, \ell(\mathcal{X}_{s},\cdot)$. On the other hand, the `balanced weight' $w_{t}=1/t$ makes the empirical loss to be the arithmetic mean: $f_{t}(\cdot) = \frac{1}{t}\sum_{s=1}^{t}  \ell(\mathcal{X}_{s},\cdot)$, which is the choice made for the online NMF problem in \cite{mairal2010online}. Therefore, one can choose the sequence of weights $(w_{t})_{t\ge 1}$ in Algorithm \ref{alg:online NTF_highlevel} in a desired way to control the sensitivity of the algorithm to the newly observed data. That is, make the weights decay fast for learning average features and decay slow (or keep it constant) for learning trending features. We mention that our theoretical convergence analysis covers only the former case. }

\commHL{We note that the online CPDL problem \eqref{eq:def_OCPDL_ELM} involves solving a constrained optimization problem for each $t$, which is practically infeasible. Hence we may compute a sub-optimal sequence $(\mathcal{D}_{t})_{t\ge 0}$ of tuples of loading matrices (see Algorithm \ref{alg:online NTF_highlevel}) and assess its asymptotic fitness to the original problem \eqref{eq:def_OCPDL_ELM}. We seek to perform some rigorous theoretical analysis at the expense of some suitable but non-restrictive assumption on the data sequence $\mathcal{X}_{t}$ as well as the weight sequence $(w_{t})_{t\ge 1}$. A standard modeling assumption in the literature is to assume the data sequence $\mathcal{X}_{t}$ are independent and identically distributed (i.i.d.) according to some distribution $\pi$ \cite{mairal2010online,mairal2013stochastic,mensch2017stochastic,zhao2017online}. We consider a more relaxed setting where $\mathcal{X}_{t}$ is given as a function of some underlying Markov chain (see \ref{assumption:A1}) and $\pi$ is the stationary distribution of $(\mathcal{X}_{t})_{t\ge 1}$ viewed as a stochastic process. Under this assumption, consider the following stochastic program 
	\begin{align}\label{eq:def_OCPDL}
		\argmin_{\mathcal{D} \in \mathcal{C}^{\textup{dict}}} \big(f(\mathcal{D}):=\mathbb{E}_{\mathcal{X}\sim \pi}\left[ \ell(\mathcal{X}, \mathcal{D})
		\right]\big),
	\end{align}
	where the \textit{random} tensor $\mathcal{X}$ is sampled from the distribution $\pi$, and we call the function $f$ defined in \eqref{eq:def_OCPDL} the \textit{expected loss function}. The connection between the online  \eqref{eq:def_OCPDL_ELM} and the stochastic  \eqref{eq:def_OCPDL} formulation of the CPDL problem  is that, if the parameter space $\mathcal{C}^{\textup{dict}}$ is compact and the weights $w_{t}$ satisfy some condition, then $\sup_{\mathcal{D}} |f_{t}(\mathcal{D}) - f(\mathcal{D})|\rightarrow 0$ almost surely as $t\rightarrow \infty$. (see Lemma \ref{lem:uniform_convergence_asymmetric_weights} \commHL{in Appendix \ref{sec:auxiliary_lemmas}}). Hence, under this setting, we seek to find a sequence $(\mathcal{D}_{t})_{t\ge 1}$ that converges to a solution to \eqref{eq:def_OCPDL}. In other words, fitness to the \textit{single} expected loss function $f$ is enough to deduce the asymptotic fitness to \textit{all} empirical loss functions $f_{t}$. }

\commHL{Once we find an optimal CP-dictionary $\mathcal{D}^{*}=\Out(U^{(1)},\dots,U^{(n)})$ for \eqref{eq:def_OCPDL}, then we can obtain an optimal code matrix $H=H(\mathcal{X})\in \R^{R\times b}$ for each realization of the random tensor $\mathcal{X}$ by solving the convex problem in \eqref{eq:def_loss}. Demanding optimality of the CP-dictionary $\mathcal{D}^{*}$ and leaving the code matrix $H=H(\mathcal{X})$ adjustable in this way is a more flexible framework for CP-decomposition of a random (as well as online) tensor than seeking a pair of jointly optimal CP-dictionary $\mathcal{D}^{*}$ and code matrix $H^{*}$, especially when the variation of the random tensor is large. However, if we have a single deterministic tensor $\mathcal{X}$ to be factorized, then these two formulations are equivalent since $\min_{\mathcal{D},\mathcal{H}}\ell(\mathcal{X},\mathcal{D},H) = \min_{\mathcal{D}}\min_{H} \ell(\mathcal{X},\mathcal{D},H)$. 
}

\section{The Online CP-Dictionary Learning algorithm}\label{section:main_alg}

In this section, we state our main algorithm (Algorithm \ref{alg:online NTF_highlevel}). For simplicity, we first give a preliminary version for the case of $n=2$ modes, minibatch size $b=1$, with nonnegativity constraints. Suppose we have learned $n=2$ loading matrices $\mathcal{D}_{t-1}:=[U_{t-1}^{(1)}, U_{t-1}^{(2)}]$ from the sequence $\X_{1},\dots,\X_{t-1}$ of data tensors, $\X_{i}\in \R^{I_{1}\times I_{2}\times 1}$. Then we compute the updated loading matrices $\mathcal{D}_{t}=[ U^{(1)}_{t}, U^{(2)}_{t} ]$ by
\begin{align}\label{alg:online NTF_special}
	\begin{cases}
		H_t &\leftarrow {\color{black}\underset{H\in \R_{\ge 0}^{R\times 1}}{\argmin} \ell(\X_{t}, \mathcal{D}_{t-1}, H) } \\
		\hat{f}_{t}(\mathcal{D}) &\leftarrow  (1-w_{t}) \hat{f}_{t-1}(\mathcal{D}) + w_{t}  \ell(\X_{t}, \mathcal{D}, H_{t}) \\[5pt]
		U_{t}^{(1)} &\leftarrow  \underset{U\in \R_{\ge 0}^{I_{i}\times R},\, \lVert U - U_{t-1}^{(1)}\rVert_{F}\le c'w_{t}}{\argmin} \,\,  \hat{f}_{t}(U,U_{t-1}^{(2)})\\
		U_{t}^{(2)} &\leftarrow  \underset{U\in \R_{\ge 0}^{I_{i}\times R},\, \lVert U - U_{t-1}^{(2)}\rVert_{F}\le c'w_{t}}{\argmin} \,\,  \hat{f}_{t}(U_{t}^{(1)}, U),
	\end{cases}
\end{align}
where $\lambda\ge 0$ is an \commHL{absolute} constant and $(w_{t})_{t\ge 1}$ \commHL{is a} non-increasing sequence of weights in $(0,1]$. The recursively defined function $\hat{f}_{t}:\mathcal{D}=[U^{(1)},\dots,U^{(n)}]\mapsto [0,\infty)$ is called the \textit{surrogate loss function}, which is quadratic in each factor $U^{(i)}$ but is not jointly convex. Namely, when the new tensor data $\X_{t}$ arrives, one computes the code $H_{t}\in \R^{R\times 1}_{\ge 0}$ for $\X_{t}$ with respect to the previous loading matrices in $\mathcal{D}_{t-1}$,  updates the surrogate loss function $\hat{f}_{t}$, and then \textit{sequentially} minimizes it to find updated loading matrices within diminishing search radius $c'w_{t}$.

%Also it is important to note that $\lVert f_{t} - f \rVert\rightarrow 0$ almost surely as $t\rightarrow \infty$ by Lemma \ref{lem:uniform_convergence_symmetric_weights}, where $f$ is the expected loss function (main objective function) in \eqref{eq:def_OCPDL}. 

\commHL{Note that the surrogate loss function $\hat{f}_{t}$ in \eqref{eq:Alg1_def_surrogate_loss} is defined by the same recursion that defines the empirical loss function in \eqref{eq:def_OCPDL_ELM}. However, notice that the loss term  $\ell(\mathcal{X}_{t},\mathcal{D})$ in the definition of the empirical loss function $f_{t}$ in \eqref{eq:def_OCPDL} involves optimizing over the code matrices $H$ in \eqref{eq:def_loss}, which should be done for every $\mathcal{X}_{s}$, $1\le s \le t$, in order to evaluate $f_{t}$. On the contrary, in the definition of the surrogate loss function $\hat{f}_{t}$ in \eqref{eq:Alg1_def_surrogate_loss}, this term is replaced with the sub-optimal loss $\ell(\mathcal{X}_{t},\mathcal{D},H_{t})$, which is sub-optimal since $H_{t}$ was found by decomposing $\mathcal{X}_{t}$ using the previous CP-dictionary $\Out(\mathcal{D}_{t-1})$. From this, it is clear that $\hat{f}_{t}\ge f_{t}$ for all $t\ge 0$. In other words, $\hat{f}_{t}$ is a majorizing surrogate of $f_{t}$.} 

Now we state our algorithm in the general mode case in  \commHL{Algorithm} \ref{alg:online NTF_highlevel}. Our algorithm combines two key elements: stochastic majorization-minimization (SMM) 
\cite{mairal2013stochastic} and block coordinate descent with diminishing radius (BCD-DR) \cite{lyu2020convergence}. SMM amounts to iterating the following steps: sampling new data points, constructing a strongly convex surrogate loss, and then minimizing the surrogate loss to update the current estimate. This framework has been successfully applied to online matrix factorization problems \cite{mairal2010online, mensch2017stochastic}. However, the biggest bottleneck in using a similar approach in the tensor case is that the surrogate loss function $\hat{f}_{t}$ in \eqref{eq:Alg1_def_surrogate_loss} is only block multi-convex, meaning that it is convex in each block of coordinates but nonconvex jointly. Hence we cannot find an exact minimizer for $\hat{f}_{t}$ to update all $n$ loading matrices \commHL{at the same time}.

\begin{algorithm}[H]
	\caption{Online CP-Dictionary Learning (online CPDL)}
	\label{alg:online NTF_highlevel}
	\begin{algorithmic}[1]
		\State \textbf{Input:} $(\mathcal{X}_{t})_{1\le t\le T}$ (minibatches of data tensors in $\R^{I_{1}\times \dots \times I_{n}\times b}$);  $[U_{0}^{(1)},\dots,U_{0}^{(n)}]\in \R^{I_{1}\times R} \times \dots \times \R^{I_{n}\times R}$ (initial loading matrices); \, $c'>0$ (search radius constant);
		\vspace{0.1cm} 
		\State \textbf{Constraints:}  $\mathcal{C}^{(i)}\subseteq \R^{I_{i}\times R}$, $1\le i\le n$, $\mathcal{C}^{\textup{code}}\subseteq \R^{R\times b}$ (e.g., nonnegativity constraints)

		\State \textbf{Parameters:} $R\in \mathbb{N}$ ($\#$ of dictionary atoms);\, $\lambda\ge 0$ ($\ell_{1}$-\commHL{regularization parameter}); \, $(w_{t})_{t\ge 1}$ (weights in $(0,1]$);
		\vspace{0.1cm}
		\State \quad Initialize surrogate loss $\hat{f}_{0}\equiv 0$;
		
		\State \quad \textbf{For $t=1,\ldots,T$ do:}
		\State  \quad \quad \textit{Coding}: Compute the optimal code matrix
		\begin{align}\label{eq:coding_OCPDL}
			\hspace{2cm} H_{t}\leftarrow \argmin_{H \in \mathcal{C}^{\textup{code}} \subseteq \R^{R \times b} } \,\, \ell(\mathcal{X}_{t}, U_{t-1}^{(1)},\dots,U_{t-1}^{(n)},H); \quad \text{(using Algorithm \ref{algorithm:spaser_coding})}
		\end{align}

		\State  \quad \quad \textit{Update surrogate loss function}:
		\vspace{-0.1cm}
		\begin{align}\label{eq:Alg1_def_surrogate_loss}
			\hat{f}_{t}(U^{(1)},\dots,U^{(n)})\leftarrow (1-w_{t}) \hat{f}_{t-1}(U^{(1)},\dots,U^{(n)}) + w_{t} \,\ell(\mathcal{X}_{t}, U^{(1)},\dots,U^{(n)},H_{t})  
		\end{align}

		\State  \quad \quad  \textit{Update loading matrices by restricted cyclic block coordinate descent:}
		\vspace{0.2cm}
		\State \quad \quad \quad \textbf{For $i=1,\dots,n$}  \textbf{do:}  \label{eq:Alg1_begin_BCD}
		\State 
		\vspace{-0.5cm}
		\begin{align}
			&\mathcal{C}_{t}^{(i)}\leftarrow \left\{U \in \mathcal{C}^{(i)}\,\bigg|\,   \lVert U-U_{t-1}^{(i)}\rVert_{F}\le c'w_{t}\right\}; \label{eq:Alg1_search_restriction} \commHL{\textup{($\triangleright$ \textit{Restrict the search radius by $c'w_{t}$})}}  \\
			&U_{t}^{(i)} \leftarrow \argmin_{U\in \mathcal{C}_{t}^{(i)}}\,\, \hat{f}_{t}\left(U_{t}^{(1)},\dots,U_{t}^{(i-1)},U,U_{t-1}^{(i+1)},\dots,U_{t-1}^{(n)}\right) ; \label{eq:Alg1_loading_update}\\[-0.4cm]
			&\hspace{7cm} \commHL{\textup{($\triangleright$\textit{Update the $i^{\textup{th}}$ loading matrix})}} \nonumber
		\end{align}
		\State  \quad \quad \quad \textbf{End for} 
		\State \quad\textbf{End for} 
		\State \textbf{Return:}  $[U_{T}^{(1)},\dots, U_{T}^{(n)}] \in \mathcal{C}^{(1)}\times \dots \times \mathcal{C}^{(n)}$;
	\end{algorithmic}
\end{algorithm}

%As it was in the special case in \eqref{eq:online NTF_special}, the general surrogate loss function $\hat{f}_{t}$ in \eqref{eq:Alg1_def_surrogate_loss} is quadratic in each of the $n$ loading matrices $U^{(i)}$ but is not jointly convex for $n\ge 2$. }

In order to circumvent this issue, one could try to perform \commHL{a few rounds of} block coordinate descent (BCD) on the surrogate loss function $\hat{f}_{t}$, which can be easily done since $\hat{f}_{t}$ is convex in each loading matrix. However, this results in sub-optimal loading matrices in each iteration, causing a number of difficulties in convergence analysis. Moreover, global convergence of BCD to stationary points is not guaranteed in general  even for the deterministic tensor CP-decomposition problems \commHL{without constraints} \cite{ kolda2009tensor}, and such a guarantee is known only with some additional regularity conditions \cite{grippof1999globally, grippo2000convergence, bertsekas1999nonlinear}. There are other popular strategies of using proximal \cite{grippo2000convergence} or prox-linear \cite{xu2013block} modifications of BCD to improve convergence properties. While these methods only ensure square-summability $ \sum_{t=1}^{\infty} \lVert \mathcal{D}_{t} - \mathcal{D}_{t-1} \rVert_{F}^{2}<\infty$ of changes (see, e.g., \cite[Lem 2.2]{xu2013block}), we find it crucial for our stochastic analysis that we are able to control the individual changes $\lVert \mathcal{D}_{t}- \mathcal{D}_{t-1} \rVert_{F}$ of the loading matrices in each iteration. This motivates to use BCD with diminishing radius in \cite{lyu2020convergence}. More discussions on technical points in convergence analysis are given in  Subsection \ref{subsection:proof_overview}.

%While proximal \cite{grippo2000convergence} or prox-linear \cite{xu2013block} modification of BCD has also been used to ensure global convergence to stationary points, 

The coding step in \eqref{eq:coding_OCPDL} is a convex problem and can be easily solved by a number of known algorithms (e.g., LARS \cite{efron2004least}, LASSO \cite{tibshirani1996regression}, and feature-sign search \cite{lee2007efficient}). As we have noted before, the surrogate loss function $\hat{f}_{t}$ in \eqref{eq:Alg1_def_surrogate_loss} is quadratic in each block coordinate, so each of the subproblems in the factor matrix update step in \eqref{eq:Alg1_loading_update} is a constrained quadratic problem and can be solved by projected gradient descent (see \cite{mairal2010online, lyu2020online}).

Notice that implementing Algorithm \ref{alg:online NTF_highlevel} may seem to require unbounded memory as one needs to store all past data $\mathcal{X}_{1},\dots,\mathcal{X}_{t}$ to compute the surrogate loss function $\hat{f}_{t}$ in \eqref{eq:Alg1_def_surrogate_loss}. However, it turns out that there are certain bounded-sized statistics that aggregates the past information that are sufficient to parameterize $\hat{f}_{t}$ and also to \commHL{update} the loading matrices. This bounded memory implementation of Algorithm \ref{alg:online NTF_highlevel} is given in Algorithm \ref{algorithm:online NTF}, and a detailed discussion on the memory efficiency is relegated to Appendix \ref{section:statement_alg_bounded_memeory}.

\section{Convergence results}\label{sec:theory}

In this section, we state our main convergence result of Algorithm \ref{alg:online NTF_highlevel}. Note that all results that we state here also apply to Algorithm \ref{algorithm:online NTF}, which is a bounded-memory implementation of Algorithm \ref{alg:online NTF_highlevel}.

\subsection{Statement of main results}\label{subsection:statement_results}

We first layout all technical assumptions required for our convergence results to hold. 
\begin{customassumption}{(A1)}\label{assumption:A1}
	The observed minibatch of data tensors $\mathcal{X}_{t}=[\mathbf{X}_{t;1},\dots, \mathbf{X}_{t;b}]$ are given by $\mathcal{X}_{t}=\varphi(Y_{t})$, where $Y_{t}$ is an irreducible and aperiodic Markov chain defined on a finite state space $\Omega$ and $\varphi:\Omega\rightarrow \mathbb{R}^{I_{1}\times \dots \times I_{n}\times b}$ is a bounded function. \commHL{Denote the transition matrix and the unique stationary distribution of $Y_{t}$ by $P$ and $\pi$, respectively. }
\end{customassumption}
\vspace{-0.3cm}
\begin{customassumption}{(A2)}\label{assumption:A2}
	For each $1\le i \le n$, the $i^{\textup{th}}$ loading matrix is constrained to a compact and convex subset $\mathcal{C}^{(i)}\subseteq \R^{I_{i}\times R}$ that contains at least two points. Furthermore, the code matrices $H_{t}$ belong to a compact and convex subset $\mathcal{C}^{\textup{code}}\subseteq \R^{R\times b}$.
\end{customassumption}
\begin{customassumption}{(A3)}\label{assumption:A3}
	\commHL{The sequence of non-increasing weights $w_{t}\in (0,1]$ in Algorithm \ref{alg:online NTF_highlevel} $\sum_{t=1}^{\infty} w_{t}=\infty$, and $\sum_{t=1}^{\infty} w_{t}^{2}\sqrt{t}<\infty$. Furthermore, $w_{t+1}^{-1}-w_{t}^{-1}\le 1$ for all sufficiently large $t$.} 
\end{customassumption}
\vspace{-0.3cm}
\begin{customassumption}{(A4)}\label{assumption:A4}
	The expected loss function $f$ defined in \eqref{eq:def_OCPDL} is continuously differentiable and has Lipschitz gradient.  
\end{customassumption}

It is standard to assume that the sequence of signals is drawn from a data distribution of compact support in an independent fashion \cite{mairal2010online, mairal2013optimization}, which enables the processing of large data by using i.i.d. subsampling of minibatches. However, when the signals have to be sampled from some complicated or unknown distribution, obtaining even approximately independent samples is difficult. In this case, Markov Chain Monte Carlo (MCMC) provides a powerful sampling technique (e.g., sampling from the posterior in Bayesian methods \cite{van2018simple} or from the Gibbs measure for Cellular Potts models \cite{voss2012multi}, or motif sampling from sparse graphs \cite{lyu2019sampling}), where consecutive signals could be highly correlated. (\commHL{See Appendix \ref{sec:MC_intro}  for more background on Markov chains and MCMC.}) 

\commHL{An important notion in MCMC sampling is ``exponential mixing'' of the Markov chain\footnote{For our analysis, it is in fact sufficient to have a sufficiently fast polynomial mixing of the Markov chain. See \ref{assumption:A1'} in Appendix \ref{sec:MC_intro} for a relaxed assumption using countable state-space.}. For a simplified discussion, suppose in \ref{assumption:A1} that our data tensors $\X_{t}$ themselves form a Markov chain with unique stationary distribution $\pi$. Under the assumption of finite state space, irreducibility, and aperiodicity in \ref{assumption:A1}, the Markov chain $\X_{t}$ ``mixes'' to the stationary distribution $\pi$ at an exponential rate. Namely, for any $\eps>0$, one can find a constant $\tau=\tau(\eps)=O(\log \eps^{-1})$, called the ``mixing time'' of $\X_{t}$,  such that the conditional distribution of
	$\X_{t+\tau}$ given $\X_{t}$ is within total variation distance $\eps$ from $\pi$ regardless of the distribution of $\X_{t}$ (see \eqref{eq:def_TV} for the definition of total variation distance).  This mixing property of Markov chains is crucial both for practical applications of MCMC sampling  as well as our theoretical analysis. For instance, a common practice of using MCMC sampling to obtain approximate i.i.d. samples is to first obtaining a long Markov chain trajectory $(\X_{t})_{t\ge 1}$ and then thinning it to the subsequence $(\X_{k\tau})_{k\ge 1}$ \cite[Sec. 1.11]{brooks2011handbook}. Due to the choice of mixing time $\tau$, this forms an $\eps$-approximate i.i.d. samples from $\pi$. }

\commHL{
	However, thinning a Markov chain trajectory does not necessarily produce truly independent samples, so classical stochastic analysis that relies crucially on independence between data samples is not directly applicable. For instance, if $\X_{t}$ is a reversible Markov chain then the correlation within the subsequence is nonzero and at least of order $\eps$ (see Appendix \ref{subsection:MCMC} for the definition of reversibility and detailed discussion). In order to obtain truly independent samples, one may independently re-initialize a Markov chain trajectory, run it for $\tau$ iterations, and keep the last samples in each run (e.g., see the discussion in \cite{sun2018markov}). However, in both approaches, only one out of $\tau$ Markov chain samples are used for optimization, which could be extremely wasteful if the Markov chain converges to the stationary distribution slowly so that the implied constant in $\tau(\eps) = O(\log \eps^{-1})$ is huge. }

Instead, our assumption on input signals in \ref{assumption:A1} allows us to use every single sample in the same MCMC trajectory without having to "burn" lots of samples. Such Markovian extension of the classical OMF algorithm in \cite{mairal2010online} has recently been achieved in \cite{lyu2020online}, which has applications in dictionary learning, denoising, and edge inference problems for network data \cite{lyu2021learning}. %The second author is currently developing a similar framework for \textit{tensor network data} using the framework for online CPDL developed in the present work.     

Assumption \ref{assumption:A2} is also standard in the literature of dictionary learning (see \cite{mairal2010online, mairal2013stochastic}). A particular instance of interest is when they are confined to having nonnegative entries, in which case the learned dictionary components give a ``parts-based'' representation of the input signals \cite{lee1999learning}. 

\commHL{ Assumption \ref{assumption:A3} states that the sequence of weights $w_{t}\in (0,1]$ we use to recursively define the empirical loss \eqref{eq:def_OCPDL_ELM} and surrogate loss  \eqref{eq:Alg1_def_surrogate_loss} does not decay too fast so that $\sum_{t=1}^{\infty} w_{t}=\infty$ but  decay fast enough so that $\sum_{t=1}^{\infty} w_{t}^{2}\sqrt{t}<\infty$. This is analogous to requirements for stepsizes in stochastic gradient descent algorithms, where the stepsizes are usually required to be non-summable but square-summable (see, e.g., \cite{sun2018markov}). The additioanl factor $\sqrt{t}$ is used in our analysis to deduce the uniform convergence of the empirical loss $f_{t}$ to the expected loss $f$ (see Lemma \ref{lem:uniform_convergence_asymmetric_weights} \commHL{in Appendix \ref{sec:auxiliary_lemmas}}), which was also the case in the literature \cite{mairal2010online, mairal2013stochastic, mensch2017stochastic,lyu2020online}. Also, the condition $w_{t}^{-1}-w_{t-1}^{-1}\le 1$ for all suficiently large $t$ is equivalent to saying the recursively defined weights $w^{t}_{k}$ in \eqref{eq:def_OCPDL_ELM_explicit} are non-decreasing in $k$ for all sufficiently large $k$, which is required to use Lemma \ref{lem:uniform_convergence_asymmetric_weights} in Appendix \ref{sec:auxiliary_lemmas}. We also remark that \ref{assumption:A3} is implied by the following simpler condition: 
	\begin{customassumption}{(A3')}\label{assumption:A3'}
		The sequence of non-increasing weights $w_{t}\in (0,1]$ in Algorithm \ref{alg:online NTF_highlevel} satisfy either $w_{t}=t^{-1}$ for $t\ge 1$ or $w_{t}=\Theta(t^{-\beta}(\log t)^{-\delta})$ for some $\delta \ge 1$ and $\beta\in [3/4, 1)$. 
	\end{customassumption}
}

For Assumption \ref{assumption:A4}, we remark that it follows from the uniqueness of the solution of \eqref{eq:coding_OCPDL} (see \cite[Prop. 1]{mairal2010online}). This can be enforced for example by the elastic net penalization \cite{zou2005regularization}. Namely, we may add a quadratic regularizer $\lambda' \lVert H \rVert_{F}^{2}$ to the loss function $\ell$ in 	\eqref{eq:def_loss_full} for some $\lambda'>0$. Then the resulting quadratic function is strictly convex and hence it has a unique minimizer in the convex constraint set $\mathcal{C}^{\textup{code}}$. (See \cite[Sec. 4.1]{mairal2010online} and \cite[Sec. 4.1]{lyu2020online} for more detailed discussion on this assumption).

The main result in this paper, which is stated below in Theorem \ref{thm:online NTF_convergence}, states that the sequence $\mathcal{D}_{t}$ of CP-dictionaries produced by Algorithm \ref{alg:online NTF_highlevel} converges to the set of stationary points of the expected loss function $f$ defined in \eqref{eq:def_OCPDL}. 	To the best of our knowledge, Theorem \ref{thm:online NTF_convergence} is the first convergence guarantee for any online \textit{constrained} dictionary learning algorithm for tensor-valued signals or as an online \textit{unconstrained} CP-factorization algorithm, which have not been available even under the classical i.i.d. assumption on input signals. Recall that $f_{t}$ and $\hat{f}_{t}$ denote the empirical and surrogate loss function defined in \eqref{eq:def_OCPDL_ELM} and \eqref{eq:Alg1_def_surrogate_loss}, respectively.   

\begin{theorem}\label{thm:online NTF_convergence}
	Suppose \ref{assumption:A1}-\ref{assumption:A3} hold. Let $(\mathcal{D}_{t})_{t\ge 1}$ be an output of Algorithm \ref{alg:online NTF_highlevel}. Then the following hold. 
	\begin{description}
		\item[(i)] $\lim_{t\rightarrow \infty} \E[f_{t}(\mathcal{D}_{t})]  =\lim_{t\rightarrow \infty} \E[\hat{f}_{t}(\mathcal{D}_{t})] <\infty$.

		\item[(ii)] $f_{t}(\mathcal{D}_{t})-\hat{f}_{t}(\mathcal{D}_{t})\rightarrow 0$ and $f(\mathcal{D}_{t})-\hat{f}_{t}(\mathcal{D}_{t})\rightarrow 0$ as $t\rightarrow \infty$ almost surely.

		\item[(iii)] Further assume \ref{assumption:A4}. Then the distance (measured by block-wise Frobenius distance) between $\mathcal{D}_{t}$ and the set of all stationary points of $f$ in $\mathcal{C}^{\textup{dict}}$ converges to zero almost surely. 
		
	\end{description}
\end{theorem}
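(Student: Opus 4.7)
The plan is to follow the Stochastic Majorization--Minimization (SMM) framework of \cite{mairal2010online, mairal2013stochastic}, extended to Markovian data streams along the lines of \cite{lyu2019online}, with the essential new ingredient being the careful treatment of the alternating-least-squares (ALS) dictionary update. The central object is the recursively defined surrogate $\hat{f}_{t}$, which by construction majorizes $f_{t}$ and agrees with it at $\mathcal{D}_{t-1}$. A crucial preliminary observation is that, thanks to the intermediate aggregation in Algorithm \ref{algorithm:inter_aggregation}, each inner update \eqref{eq:dict_update_OCPDL} in Algorithm \ref{algorithm:online NTF} exactly minimizes the marginal restriction of $\hat{f}_{t}$ in the corresponding block, so a full ALS sweep monotonically decreases $\hat{f}_{t}$; furthermore \ref{assumption:C2} makes the block Hessian $\overline{A}_{j}$ uniformly positive definite, giving a second-order lower bound on the per-sweep decrease.

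For part (i), I would argue that $\hat{f}_{t}(\mathcal{D}_{t})$ is a convergent nonnegative quasi-martingale. Combining the monotonicity $\hat{f}_{t}(\mathcal{D}_{t}) \le \hat{f}_{t}(\mathcal{D}_{t-1})$ with the recursive identity
\begin{align}
\hat{f}_{t}(\mathcal{D}_{t-1}) - \hat{f}_{t-1}(\mathcal{D}_{t-1}) = w_{t}\bigl( \ell(\X_{t},\mathcal{D}_{t-1}) - \hat{f}_{t-1}(\mathcal{D}_{t-1}) \bigr),
\end{align}
the positive variations of $\hat{f}_{t}(\mathcal{D}_{t})$ are controlled by $w_{t}\bigl(f(\mathcal{D}_{t-1}) - \hat{f}_{t-1}(\mathcal{D}_{t-1})\bigr)^{+}$ modulo a noise term $w_{t}\bigl(\ell(\X_{t},\mathcal{D}_{t-1}) - f(\mathcal{D}_{t-1})\bigr)$. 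Under the Markov assumption \ref{assumption:A1}, following the ``conditioning on a distant past'' trick of \cite{lyu2019online}, one conditions on $Y_{t-N(t)}$ for $N(t) \to \infty$ slowly and exploits exponential mixing of the irreducible aperiodic chain on the finite state space $\Omega$ to approximate the conditional law of $\X_{t}$ by $\pi$, paying an error of order $\rho^{N(t)}$ with $\rho < 1$. Balancing $N(t)$ against $w_{t} = t^{-\beta}$ with $\beta \in (3/4, 1]$ yields summability of the expected positive variations; a Robbins--Siegmund type argument then gives convergence of $\E[\hat{f}_{t}(\mathcal{D}_{t})]$, and the sandwich $f_{t} \le \hat{f}_{t}$ together with the agreement at $\mathcal{D}_{t-1}$ forces $\E[f_{t}(\mathcal{D}_{t})]$ to the same limit.

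For part (ii), the same quasi-martingale argument also yields $\sum_{t} w_{t}\bigl(\hat{f}_{t-1}(\mathcal{D}_{t-1}) - f(\mathcal{D}_{t-1})\bigr) < \infty$ almost surely. Combined with the Lipschitz continuity of $\mathcal{D} \mapsto \hat{f}_{t}(\mathcal{D}) - f_{t}(\mathcal{D})$ uniformly in $t$ (a consequence of \ref{assumption:A2} and the quadratic structure of $\hat{f}_{t}$) and a bound on the per-step variation $\lVert \mathcal{D}_{t} - \mathcal{D}_{t-1}\rVert = O(w_{t})$ coming from the second-order lower bound, a standard deterministic lemma (as in \cite[Lem.~8]{mairal2010online}) forces $\hat{f}_{t}(\mathcal{D}_{t}) - f_{t}(\mathcal{D}_{t}) \to 0$ a.s. The further convergence $f(\mathcal{D}_{t}) - \hat{f}_{t}(\mathcal{D}_{t}) \to 0$ then follows from the uniform law of large numbers $\lVert f_{t} - f\rVert_{\infty} \to 0$ on the compact feasible set (the empirical-process remark below \eqref{eq:def_f_t}), extended to Markovian samples via the same mixing bound.

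For part (iii), I would upgrade convergence of function values to convergence of gradients. By \ref{assumption:C2} the block Hessian of $\hat{f}_{t}$ is uniformly positive definite, so the KKT condition for \eqref{eq:dict_update_OCPDL} reads $\langle \nabla_{U_{j}} \hat{f}_{t}(\mathcal{D}_{t}), U - U_{j}^{(t)}\rangle \ge 0$ for all feasible $U$, a block-stationarity statement for $\hat{f}_{t}$. The main obstacle, and where the Lindeberg replacement strategy alluded to in the introduction enters, is that $\mathcal{D}_{t}$ is not a joint minimizer of $\hat{f}_{t}$ but only a coordinatewise one across a single ALS sweep; to pass to joint stationarity I would compare the gradient of $\hat{f}_{t}$ evaluated at each partially updated tuple $(U_{1}^{(t)},\dots,U_{j}^{(t)},U_{j+1}^{(t-1)},\dots,U_{n}^{(t-1)})$ with that at the fully updated $\mathcal{D}_{t}$ by a block-by-block replacement, controlling each telescoped difference via the smoothness of $\hat{f}_{t}$ and the per-block increment, which is $o(1)$ thanks to the second-order bound from part (i) and the convergence in part (ii). Combined with $\nabla \hat{f}_{t}(\mathcal{D}_{t}) - \nabla f(\mathcal{D}_{t}) \to 0$ (from \ref{assumption:C1} together with the uniform convergence in part (ii)), this yields asymptotic joint stationarity of $f$ at $\mathcal{D}_{t}$, hence convergence to the set of local extrema.
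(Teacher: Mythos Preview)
Your outline follows the same architecture as the paper's proof: the second-order growth property for the ALS sweep with intermediate aggregation (Lemma~\ref{lemma:g_hat_second_order_growth}), the ``conditioning on a distant past'' technique for the Markovian stream, iterate stability $\lVert \mathcal{D}_{t}-\mathcal{D}_{t-1}\rVert_{F}=O(w_{t})$, the summability arguments for (i)--(ii) via the deterministic lemma of \cite{mairal2010online}, and the passage to stationarity in (iii). One small correction: the aggregate surrogate $\hat{f}_{t}$ does \emph{not} agree with $f_{t}$ at $\mathcal{D}_{t-1}$ (only the fresh one-step term $\ell(\X_{t},\mathcal{D}_{t-1})$ does; the recursive part carries the earlier gap $\hat{f}_{t-1}(\mathcal{D}_{t-1})-f_{t-1}(\mathcal{D}_{t-1})\ge 0$), but this is harmless since only the majorization $\hat{f}_{t}\ge f_{t}$ is actually used downstream.

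The one genuine gap is in part (iii): the claim $\nabla \hat{f}_{t}(\mathcal{D}_{t}) - \nabla f(\mathcal{D}_{t}) \to 0$ does not follow from \ref{assumption:C1} together with the convergence $\hat{f}_{t}(\mathcal{D}_{t})-f(\mathcal{D}_{t})\to 0$ of function values established in (ii). Pointwise (or even uniform) convergence of function values plus Lipschitz gradients does not imply gradient convergence; the oscillation $n^{-1}\sin(nx)\to 0$ with $\cos(nx)\not\to 0$ is the standard obstruction. What is actually needed is the \emph{touching} argument from SMM theory: along any subsequence on which $(\mathcal{D}_{t},A_{t},B_{t})$ converges (such subsequences exist by compactness, Proposition~\ref{prop:H_bdd} and \ref{assumption:A2}), the limiting surrogate $\hat{f}$ built from $(A_{\infty},B_{\infty})$ satisfies $\hat{f}\ge f$ everywhere (since $\hat{f}_{t}\ge f_{t}$ and $f_{t}\to f$ uniformly) while $\hat{f}(\mathcal{D}_{\infty})=f(\mathcal{D}_{\infty})$ by (ii); thus $\hat{f}-f$ is nonnegative with a minimum at $\mathcal{D}_{\infty}$, and a Taylor expansion using the Lipschitz gradients forces $\nabla\hat{f}(\mathcal{D}_{\infty})=\nabla f(\mathcal{D}_{\infty})$. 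This is precisely the subsequence route the paper takes, and it slots directly into your Lindeberg-replacement step for passing from block to joint stationarity --- a step that your proposal in fact articulates more carefully than the paper's own proof, which simply asserts that $\nabla\hat{g}_{t}(\mathcal{D}_{t})$ lies in the normal cone.
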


We acknowledge that a similar asymptotic convergence result under Markovian dependency in data samples has been recently obtained in \cite[Thm. 2]{sun2018markov} in the context of stochastic gradient descent for nonconvex optimization problems. The results are not directly comparable since \cite[Thm. 2]{sun2018markov} only handles unconstrained problems.

\subsection{Key lemmas and overview of the proof of Theorem \ref{thm:online NTF_convergence}.} 
\label{subsection:proof_overview}

In this subsection, we state the key lemmas we use to prove Theorem \ref{thm:online NTF_convergence} and illustrate our contribution to techniques for convergence analysis. 

As we mentioned in Section \ref{section:main_alg}, there is a major difficulty in convergence analysis in the multi-modal case $n\ge 2$ as the surrogate loss function $\hat{f}_{t}$ (see \eqref{eq:Alg1_def_surrogate_loss}) to be minimized for updating the loading matrices is only multi-convex in $n$ blocks. Note that we can view our algorithm (Algorithm \ref{alg:online NTF_highlevel}) as a multi-modal extension of stochastic majorization-minimization (SMM) in the sense that it reduces to standard SMM in the \commHL{case of vector-valued signals ($n=1$)}. We first list the properties of SMM that have been critically used in convergence analysis in related works \cite{mairal2010online, mairal2013stochastic,mensch2017stochastic, lyu2020online}:
\begin{enumerate}[label=\textbf{{\arabic*}}, leftmargin=0.6cm, itemsep=0.1cm]
	\item (Surrogate Optimality) \quad $\mathcal{D}_{t}$ is a minimizer of $\hat{f}_{t}$ over $\mathcal{C}^{\textup{dict}}$. \label{eq:cond_Surrogate Optimality}
	\item (Forward Monotonicity) \quad  $\hat{f}_{t}(\mathcal{D}_{t-1})\ge \hat{f}_{t}(\mathcal{D}_{t})$.  \label{eq:cond_forward_monotonicity}
	\item (Backward Monotonicity) \quad  $\hat{f}_{t-1}(\mathcal{D}_{t-1})\le \hat{f}_{t-1}(\mathcal{D}_{t})$.  \label{eq:cond_backward_monotonicity}
	\item (Second-Order Growth Property) \quad $\hat{f}_{t}(\mathcal{D}_{t-1})- \hat{f}_{t}(\mathcal{D}_{t}) \ge c \lVert \mathcal{D}_{t} - \mathcal{D}_{t-1} \rVert_{F}^{2}$ for some constant $c>0$.
	\label{eq:cond_2nd_order_growth}
	\item (Stability of Estimates) \quad $\lVert \mathcal{D}_{t} - \mathcal{D}_{t-1} \rVert_{F} = O(w_{t})$.\label{eq:cond_stability_estimates}
	\item (Stability of Errors) \quad For $h_{t}:=\hat{f}_{t}-f_{t}\ge 0$, $|h_{t}(\mathcal{D}_{t}) - h_{t-1}(\mathcal{D}_{t-1})|=O(w_{t})$. \label{eq:cond_stability_errors}
\end{enumerate}
For $n=1$, it is crucial that $\hat{f}_{t}$ is convex so that $\mathcal{D}_{t}$ is a minimizer of $\hat{f}_{t}$ in the convex constraint set $\mathcal{C}^{\textup{dict}}$, as stated in \labelcref{eq:cond_Surrogate Optimality}. From this the  monotonicity properties \labelcref{eq:cond_forward_monotonicity} and \labelcref{eq:cond_backward_monotonicity} follow immediately.  The second-order growth property \labelcref{eq:cond_2nd_order_growth} requires additional assumption that the surrogates $\hat{f}_{t}$ are strongly convex uniformly in $t$. Then \labelcref{eq:cond_backward_monotonicity} and \labelcref{eq:cond_2nd_order_growth} imply  \labelcref{eq:cond_stability_estimates}, which then implies  \labelcref{eq:cond_stability_errors}. Lastly, \labelcref{eq:cond_Surrogate Optimality} is also crucially used to conclude that every limit point of $(\mathcal{D}_{t})_{t\ge 1}$ is a stationary point of $f$ over $\mathcal{C}^{\textup{dict}}$. Now in the multi-modal case $n\ge 2$, we do not have \labelcref{eq:cond_Surrogate Optimality} so all of the implications mentioned above are not guaranteed. Hence the analysis in the multi-modal case requires a significant amount of technical innovation.     

Now we state our key lemma that handles the nonconvexity of the surrogate loss $\hat{f}_{t}$ in the general multi-modal case $n\ge 1$. 

\begin{lemma}[Key Lemma]\label{lem:main_analytic_lemma}
	Assume \ref{assumption:A1}-\ref{assumption:A3}. Let $(\mathcal{D}_{t})_{t\ge 1}$ be an output of Algorithm \ref{alg:online NTF_highlevel}. For all $t\ge 1$, the following hold:
	\begin{description}[itemsep=0.1cm]
		\item[(i)] (Forward Monotonicity) \quad $\hat{f}_{t}(\mathcal{D}_{t-1})\ge \hat{f}_{t}(\mathcal{D}_{t}) $;
		\item[(ii)] (Stability of Estimates) \quad  $\lVert \mathcal{D}_{t}-\mathcal{D}_{t-1}\rVert_{F} = O( w_{t})$;
		\item[(iii)] (Stability of Errors) \quad $|h_{t}(\mathcal{D}_{t}) - h_{t-1}(\mathcal{D}_{t-1})|=O(w_{t})$, where $h_{t}:=\hat{f}_{t}-f_{t}$. 
		
		\item[(iv)] (Asymptotic Surrogate Stationarity)  \quad Let $(t_{k})_{k\ge 1}$ be any sequence such that $\mathcal{D}_{t_{k}}$ and $\hat{f}_{t_{k}}$ converges almost surely. Then $\mathcal{D}_{\infty}:=\lim_{k\rightarrow \infty} \mathcal{D}_{t_{k}}$ is almost surely a stationary point of $\hat{f}_{\infty}:=\lim_{k\rightarrow\infty} \hat{f}_{t_{k}}$ over $\mathcal{C}^{\textup{dict}}$. 
	\end{description}
\end{lemma}

We show Lemma \ref{lem:main_analytic_lemma} \textbf{(i)} using a monotonicity property of block coordinate descent. One of our key observations is that we can directly ensure the stability properties \labelcref{eq:cond_stability_estimates} and \labelcref{eq:cond_stability_errors} (Lemma \ref{lem:main_analytic_lemma} \textbf{(ii)} and \textbf{(iii)}) by using a search radius restriction (see  \labelcref{eq:Alg1_search_restriction} in Algorithm \ref{alg:online NTF_highlevel}). In turn, we do not need the properties \labelcref{eq:cond_backward_monotonicity} and \labelcref{eq:cond_2nd_order_growth}. In particular, our analysis does not require strong convexity of the surrogate loss $\hat{f}_{t}$ in each loading matrices as opposed to the existing analysis (see, e.g., \cite[Assumption \textbf{B}]{mairal2010online} and \cite[Def. 2.1]{mairal2013stochastic}). Lastly, our analysis requires that estimates $\mathcal{D}_{t}$ are only asymptotically stationary to the limiting surrogate loss function along convergent subsequences, as stated in Lemma \ref{lem:main_analytic_lemma} \textbf{(iv)}. The proof of this statement is nontrivial and requires a substantial work. On a high level, the argument consists of demonstrating that the effect of search radius restriction by $O(w_{t})$ vanishes in the limit, and the negative gradient $-\nabla \hat{f}_{\infty}(\mathcal{D}_{\infty})$ is in the normal cone of $\mathcal{C}^{\textup{dict}}$ at $\mathcal{D}_{\infty}$.

The second technical challenge is to handle dependence on input signals, as stated in \ref{assumption:A1}. The theory of quasi-martingales \cite{fisk1965quasi, rao1969quasi} is a key ingredient in convergence analysis under i.i.d input in \cite{mairal2010online,mairal2013stochastic, agarwal2019online}. However, dependent signals do not induce quasi-martingale since conditional on the information $\mathcal{F}_{t}$ at time $t$, the following signal $\mathcal{X}_{t+1}$ could be heavily biased. We use the recently developed technique in \cite{lyu2020online} to overcome this issue of data dependence. \commHL{The essential fact is that for irreducible and aperiodic Markov chains on finite state space, the $N$-step conditional distribution converges exponentially in $N$ to the unique stationary distribution regardless of the initial distribution (exponential mixing). The key insight in \cite{lyu2020online} was that in the analysis, one can condition on ``distant past'' $\mathcal{F}_{t-\sqrt{t}}$, not on the present $\mathcal{F}_{t}$, in order to allow the underlying Markov chain to mix  close enough to the stationary distribution $\pi$ for $\sqrt{t}$ iterations. This is opposed to a common practice of thinning Markov chain samples in order to reduce the dependence between sample points we mentioned earlier in Subsection \ref{subsection:statement_results}. We provide the estimate based on this technique in Lemma \ref{lemma:increment_bd}. }

\commHL{For the statement of Lemma \ref{lemma:increment_bd}, recall that under \ref{assumption:A1}, the data tensor at time $t$ is given by $\X_{t}=\varphi(Y_{t})$, where $Y_{t}$ is an irreducible and aperiodic Markov chain on a finite state space $\Omega$ with transition matrix $P$. For $y,y'\in \Omega$ and $k\in \mathbb{N}$, $P^{k}(y,y')$ equals the $k$-step transition probability of $Y_{t}$ from $y$ to $y'$, and $P^{k}(y,\cdot)$ equals the distribution of $Y_{k}$ conditional on $Y_{0}=y$. We also use the notation $a^{+}=\max(0,a)$ for $a\in \R$. }

\begin{lemma}[Convergence of Positive Variation] \label{lemma:increment_bd}
	Let $(\mathcal{D}_{t})_{t\ge 1}$ be an output of Algorithm \ref{alg:online NTF_highlevel}. Suppose \ref{assumption:A1'}, \ref{assumption:A2}, and \ref{assumption:A3} hold.
	\begin{description}
		\item[(i)] Let $(a_{t})_{t\ge 0}$ be a sequence of non-decreasing non-negative integers such that $a_{t}= o(t)$. Then there exists absolute constants $C_{1},C_{2},C_{3}>0$ such that for all sufficiently large $t\ge 0$, 
		\begin{align}
			& \mathbb{E}\left[ \E\left[ w_{t+1}\big(\ell(\mathcal{X}_{t+1},\mathcal{D}_{t}) - f_{t}(\mathcal{D}_{t}) \big)\,\bigg|\, \mathcal{F}_{t-a_{t}} \right]^{+} \right] \\
			&\qquad\le C_{1}w_{t-a_{t}}^{2}\sqrt{t} + C_{2}w_{t}^{2}a_{t} + C_{3}w_{t} \sup_{\mathbf{y}\in \Omega} \lVert P^{a_{t}+1}(\mathbf{y},\cdot) - \pi \rVert_{TV}.
		\end{align}	
		
		\item[(ii)] $\displaystyle \sum_{t=0}^{\infty}  \left( \E\left[ \hat{f}_{t+1}(\mathcal{D}_{t+1}) - \hat{f}_{t}(\mathcal{D}_{t}) \right]\right)^{+}  \le \sum_{t=0}^{\infty}  w_{t+1}\left( \E\left[ \left(  \ell(\mathcal{X}_{t+1},\mathcal{D}_{t}) - f_{t}(\mathcal{D}_{t}) \right) \right]\right)^{+} <\infty.$
		
	\end{description}
\end{lemma}

\commHL{ We give some remarks on the statement of Lemma \ref{lemma:increment_bd}. According to Proposition \ref{prop:Wt_bd} in Section \ref{section:proof}, one of the main quantities we would like to bound is $\E[\ell(\X_{t+1},\mathcal{D}_{t})-\hat{f}_{t}(\mathcal{D})]^{+}$, which is the expected positive variation of the one-step difference between the one-point error $\ell(\X_{t+1},\mathcal{D}_{t})$ of factorizing the new data $\X_{t+1}$ using the current dictionary $\mathcal{D}_{t}$ and the empirical error $\hat{f}_{t}(\mathcal{D}_{t})$. According to the recursive update of the empirical and surrogate losses in \eqref{eq:def_OCPDL_ELM} and \eqref{eq:Alg1_def_surrogate_loss}, we want the weighted sum of such expected positive variations in Lemma \ref{lemma:increment_bd} \textbf{(ii)} is finite. This follows from the bound in Lemma \ref{lemma:increment_bd} \textbf{(i)}, as long as $\sum_{t=1}^{\infty} w_{t}^{2}\sqrt{t}<\infty$, $a_{t}=O(\sqrt{t})$, and the Markov chain $Y_{t}$ modulating the data tensor $\X_{t}$ mixes fast enough  (see \ref{assumption:A1}). Such conditions are satisfied from the assumptions \ref{assumption:A1} and \ref{assumption:A3}.
}

\section{Proof of the main result}
\label{section:proof}

%In this section, we prove Theorem \ref{thm:online NTF_convergence}. We prove the result in a slightly more general setting, where we assume our underlying Markov chain $Y_{t}$ modulating the data tensors $\mathcal{X}_{t}=\varphi(Y_{t})$ has countable state space (see \ref{assumption:A1'}) instead of finite state space as in \ref{assumption:A1}. See Remark \ref{remark:mixing_rate} and Section \ref{sec:MC_intro} for more details on Markov chains and mixing.  

In this section, we prove our main convergence result, Theorem \ref{thm:online NTF_convergence}. Throughout this section, we assume the code matrices $H_{t}$ and loading matrices $U_{t}^{(i)}$ belong to convex and compact constraint sets $H_{t}\in \mathcal{C}^{\textup{code}}\subseteq \R^{R\times b}$, $U_{t}^{(i)}\in \mathcal{C}^{(i)} \subseteq \R^{I_{i}\times R}$ as in \ref{assumption:A2} and denote  $\mathcal{C}^{\textup{dict}}=\mathcal{C}^{(1)}\times \dots \times \mathcal{C}^{(n)}\subseteq \R^{I_{1}\times R}\times \dots \times \R^{I_{n}\times R}$.

\subsection{Deterministic analysis}

In this subsection, we provide some deterministic analysis of our online algorithm (Algorithm \ref{alg:online NTF_highlevel}), which will be used in the forthcoming stochastic analysis. 

\commHL{First, we derive a parameterized form of Algorithm \ref{alg:online NTF_highlevel}, where the surrogate loss function $\hat{f}_{t}$ is replaced with $\hat{g}_{t}$, which is a block-wise quadradtic function with recursively updating parameters. This will be critical in our proof of Lemma \ref{lem:main_analytic_lemma} \textbf{(iv)} as well as deriving the bounded-memory implementation of Algorithm \ref{alg:online NTF_highlevel} stated in Algorithm \ref{algorithm:online NTF} in Appendix \ref{section:statement_alg_bounded_memeory}. } Consider the following block optimization problem 
\begin{align}\label{eq:scheme_online NTF_surrogate_full}
	&\text{Upon arrival of $\,\,\mathcal{X}_{t}$:} 
	\qquad
	\begin{cases}
		H_{t}= \argmin_{H\in \mathcal{C}^{\textup{code}}\subseteq \R^{R\times b}} \,\, \ell(\mathcal{X}_{t}, \mathcal{D}_{t-1}, H)\\
		A_{t} = (1-w_{t}) A_{t-1} + w_{t} H_{t}H_{t}^{T} \\
		\commHL{\B}_{t} = (1-w_{t}) \commHL{\B}_{t-1} + w_{t} (\mathcal{X}_{t}\times_{n+1} H_{t}^{T}) \\
		\mathcal{D}_{t}= \underset{\substack{\mathcal{D}=[U^{(1)},\dots,U^{(n)}]\in \mathcal{C}^{\textup{dict}} \\  \lVert U^{(i)} - U_{t-1}^{(i)}\rVert_{F}\le c'w_{t}\,\, \forall i} }{\argmin}\, \hat{g}_{t}(\mathcal{D})
	\end{cases},
\end{align}
where for each $\mathcal{D}=[U_{1},\dots,U_{n}]\in \mathcal{C}^{\textup{dict}}$  (here we use subscripts to denote modes for taking their transpose) \commHL{and $\hat{g}_{t}$ in \eqref{eq:scheme_online NTF_surrogate_full} is defined as }
\begin{align}\label{eq:def_g_hat}
	\hat{g}_{t}(\mathcal{D}) := \tr(A_{t} \: (U_n^T U_n \had \dots \had U_1^T U_1)) - 2\tr\left( \commHL{\B}_{t}^{(n+1)} (U_n \kr \dots \kr U_1)^{T}  \right),
\end{align}
where $\commHL{\B}_{t}^{(n+1)}\in \R^{I_{1}\cdots I_{n}\times R}$ denotes the mode-$(n+1)$ unfolding of $\commHL{\B}_{t}\in \R^{I_{1}\times \dots \times I_{n} \times R }$, and $A_{0}\in \R^{R\times R}$ and $\commHL{\B}_{0}\in \R^{I_{1}\times \dots \times I_{n}\times \commHL{R}}$ are tensors of all zero entries. \commHL{The following proposition shows that minimizing $\hat{f}_{t}$ in \eqref{eq:Alg1_def_surrogate_loss} is equivalent to minimizing $\hat{g}_{t}$ in  \eqref{eq:scheme_online NTF_surrogate_full}. This shows that $\hat{f}_{t}$, which requires the storage of all past tensors $\X_{1},\dots,\X_{t}$ for its definition, can in fact be parameterized by an aggregate matrix $A_{t}\in \R^{R\times R}$ and an aggregate tensor $\commHL{\B}_{t}\in \R^{I_{1}\times \cdots \times I_{n}\times R}$, whose dimensions are independent of $t$. This implies that Algorithm \ref{alg:online NTF_highlevel} can be implemented using a bounded memory, without needing to store a growing number of full-dimensional data tensors $\X_{1},\dots,\X_{t}$. See Appendix \ref{section:statement_alg_bounded_memeory} for more details. }

\begin{prop}\label{prop:g_t_derivation}
	The following hold: 
	\begin{description}
		\item[(i)] Let $\hat{f}_{t}$ be as in \eqref{eq:Alg1_def_surrogate_loss} and $\hat{g}_{t}$ be in \eqref{eq:def_g_hat}. Then \begin{align}
			\hat{f}_{t}(\commHL{\mathcal{D}}) &= \hat{g}_{t}(\commHL{\mathcal{D}}) + \sum_{s=1}^{t}\tr\left( \mat(\mathcal{X}_{s}) \mat(\mathcal{X}_{s})^{T}\right) + \lambda\sum_{s=1}^{t} \lVert H_{s} \rVert_{1},
		\end{align}
		
		\item[(ii)] For each $1\le j \le n$ and $t\ge 1$,  let $\overline{A}_{t;j}\in \R^{R\times R}$, $\overline{B}_{t;j}\in \R^{I_{\commHL{j}}\times R}$ be the output of Algorithm \ref{algorithm:inter_aggregation} with input $A_{t},\commHL{\B}_{t},U_{1},\dots,U_{n}$, and $j$. Then can rewrite $\hat{g}_{t}(\mathcal{D})=\hat{g}_{t}(U_{1},\dots,U_{n})$ in \eqref{eq:def_g_hat} as 
		\begin{align}
			\hat{g}_{t}(U_{1},\dots,U_{n}) =	\tr\left( U_{i} \overline{A}_{t;j} U_{i}^{T} \right)  - 2\tr\left( U_{i}\overline{B}_{t;j}^{T}\right).
		\end{align}

		%\vspace{0.1cm}
		%\item[(iii)] Algorithm \ref{algorithm:online NTF} solves \eqref{eq:scheme_online NTF_surrogate_full}. \commHL{additional constraint}
	\end{description}
\end{prop}

\begin{proof}
	Let $\mat(\mathcal{X}_{s})=[\textup{vec}(\X_{s;1}),\dots,  \textup{vec}(\X_{s;b})]\in \R^{(I_{1}\dots I_{n})\times b}$ denote the matrix whose $i^{\textup{th}}$ column is the vectorization $\textup{vec}(\X_{s;j})$ of the tensor $\X_{s;j}\in \R^{I_{1}\times \dots \times I_{n}}$. The first assertion follows easily from observing that, for each $[U_{1},\dots,U_{n}]\in \mathcal{C}^{\textup{dict}}$ and $H\in \R^{R\times b}$,
	\begin{align*}
		&\left\lVert \mathcal{X}_{s} - \mathtt{Out}(U_{1},\dots, U_{n}) \times_{n+1} H \right\rVert_{F}^{2} \\
		&\qquad = \left\lVert \mat(\mathcal{X}_{s})  - (U_n \kr \dots \kr U_1 ) H  \right\rVert_{F}^{2} \\
		&\qquad= \tr\left( (U_n \kr \dots \kr U_1) H H^{T} (U_n \kr \dots \kr U_1)^{T}  \right) \\ 
		&\qquad\qquad - 2\tr\left( \mat(\mathcal{X}_{s}) H^{T} (U_n \kr \dots \kr U_1)^{T}  \right) + \tr\left( \mat(\mathcal{X}_{s}) \mat(\mathcal{X}_{s})^{T}\right),
	\end{align*}
	and also note that 
	\begin{align*}
		&\tr\left( (U_n \kr \dots \kr U_1) H H^{T} (U_n \kr \dots \kr U_1)^{T}  \right) \\
		&\qquad = \tr(H H^{T}(U_n \kr \dots \kr U_1)^T(U_n \kr \dots \kr U_1))\\
		&\qquad = \tr(H H^{T} \: (U_n^T U_n \had \dots \had U_1^T U_1)).
	\end{align*}
	Then the linearity of trace show
	\begin{align}\label{eq:f_hat_g_hat}
		\hat{f}_{t}(U_{1},\dots,U_{n}) &= \tr(A_{t} \: (U_n^T U_n \had \dots \had U_1^T U_1)) - 2\tr\left( \widetilde{\B}_{t}(U_n \kr \dots \kr U_1)^{T}  \right) \\ 	
		& \qquad + \sum_{s=1}^{t}\tr\left( \mat(\mathcal{X}_{s}) \mat(\mathcal{X}_{s})^{T}\right) + \lambda\sum_{s=1}^{t} \lVert H_{s} \rVert_{1}, \nonumber
	\end{align}
	where \commHL{$A_{t}$ is recursively defined in \eqref{eq:scheme_online NTF_surrogate_full}} and $\widetilde{\B}_{t}\in \R^{(I_{1}\times \dots \times I_{n})\times b}$ is defined recursively by 
	\begin{align*}
		\widetilde{\B}_{s} = (1-w_{t})\widetilde{\B}_{s-1} + w_{t}\mat(\mathcal{X}_{s}) H_{s}^{T}.
	\end{align*}
	By a simple induction argument, one can show that $\widetilde{B}_{t}$ equals the mode-$(n+1)$ unfolding $\commHL{\B}_{t}^{(n+1)}$ of $\commHL{\B}_{t}$ \commHL{defined recursively in \eqref{eq:scheme_online NTF_surrogate_full}}, as desired.  
	
	For \textbf{(ii)}, first note that 
	\begin{align*}
		&\tr(A \: (U_n^T U_n \had \dots \had U_1^T U_1)) \\
		&\qquad = \tr((A \had U_1^TU_1 \had \dots U_{i - 1}^TU_{i - 1} \had U_{i + 1}^T U_{i + 1} \had \dots \had U_n^TU_n)\:U_j^TU_j) \\
		&\qquad= \tr(U_j \: (A \had U_1^TU_1 \had \dots U_{i - 1}^TU_{i - 1} \had U_{i + 1}^T U_{i + 1} \had \dots \had U_n^TU_n)\:U_j^T) \\
		&\qquad = \tr(U_j \: \overline{A}_{t;j}\:U_j^T).
	\end{align*}
	Also, recall that $\commHL{\B}_{t}^{(n+1)}$ and $U_n \kr \dots \kr U_1$ are  $\left(\prod_{i = 1}^n I_j \right) \times R$ matrices. Let $\commHL{\B}_{t}(,r)\in \R^{I_{1}\times \dots \times I_{n}}$  denote the $r^{\textup{th}}$ mode-$(n+1)$ slice of $\commHL{\B}_{t}$. We note that  
	{\small
		\begin{align*}
			&\tr\left( \commHL{\B}_{t}^{(n+1)}(U_n \kr \dots \kr U_1)^{T}  \right)\\
			& =  \sum_{r = 1}^{R}  \tr\left( \commHL{\B}_{t}(,r) \times_1 U_1(:, r) \times_2 \dots \times_{i - 1} U_{i - 1}(:, r) \times_{i} U_{i}(:,r) \times_{i + 1}U_{i + 1}(:, r) \times_{i + 2} \dots \times_{n} U_n(:, r)\right)   \\
			& =  \tr\left( \sum_{r = 1}^{R}  \left[ \commHL{\B}_{t}(,r) \times_1 U_1(:, r) \times_2 \dots \times_{i - 1} U_{i - 1}(:, r) \times_{i + 1}U_{i + 1}(:, r) \times_{i + 2} \dots \times_{n} U_n(:, r)\right] U_{i}(:,r)^{T} \right) \\
			&= \tr\left( U_{i} \overline{B}_{t;j}^{T} \right),
		\end{align*}
	}
	where $\overline{B}_{t;j}^{T}$ is as in the assertion. Then the assertion follows. 
\end{proof}

\begin{proof}[\textbf{Proof of Lemma \ref{lem:main_analytic_lemma} \textup{\textbf{(i)}-\textbf{(iii)}}}.] First, we show \textbf{(i)}. Write $\mathcal{D}_{t-1}=[U_{1},\dots,U_{n}]$ and $\mathcal{D}_{t}=[U_{1}',\dots,U_{n}']$ (here we use subscripts to denote modes). Using Proposition \ref{prop:g_t_derivation} \textbf{(i)}, we write 
	\begin{align}
		&\hat{f}_{t}(\mathcal{D}_{t-1}) - \hat{f}_{t}(\mathcal{D}_{t}) \\
		& \qquad =\hat{f}_{t}([U_{1},\dots, U_{n}]) - \hat{f}_{t}([U_{1}',\dots, U_{n}']) \\
		&\qquad = \sum_{i=1}^{n} \hat{f}_{t}([U_{1}',\dots, U_{i-1}',U_{i},U_{i+1}, \dots, U_{n}]) - \hat{f}_{t}([U_{1}',\dots, U_{i-1}',U_{i}', U_{i+1}, \dots, U_{n}]).
	\end{align}
	Recall that $U_{i}'$ is a minimizer of the function $U\mapsto \hat{f}_{t}([U_{1}',\dots, U_{i-1}',U,U_{i+1}, \dots, U_{n}])$ (which is convex by Proposition \ref{prop:g_t_derivation}) over the convex set $\mathcal{C}_{i}$ defined in Algorithm \ref{alg:online NTF_highlevel}. Also, $U_{i}'$ belongs to $\mathcal{C}_{i}$. Hence each summand in the last expression above is nonnegative. This shows $\hat{f}_{t}(\mathcal{D}_{t-1}) - \hat{f}_{t}(\mathcal{D}_{t})\ge 0$, as desired. Also note that \textbf{(ii)} is trivial by the search radius restriction in Algorithm \ref{alg:online NTF_highlevel}.

	Lastly, we show \textbf{(iii)}. Both $\hat{f}_{t}$ and $f_{t}$ are uniformly bounded and Lipschitz by Lemma \ref{lem:loss_Lipschitz} \commHL{in Appendix \ref{sec:auxiliary_lemmas}}. Hence $h_{t}=\hat{f}_{t}-f_{t}$ is also Lipschitz with some constant $C>0$ independent of $t$. Then by the recursive definitions of $\hat{f}_{t}$ and $f_{t}$ (see \eqref{eq:Alg1_def_surrogate_loss} and \eqref{eq:def_OCPDL_ELM}) and noting that $\ell(\mathcal{X}_{t}, \mathcal{D}_{t-1},H_{t})=\ell(\mathcal{X}_{t}, \mathcal{D}_{t-1})$, we have 
	\begin{align}\label{eq:pf_thm_1_f_fhat_same_hypothesis}
		&|h_{t}(\mathcal{D}_{t}) - h_{t-1}(\mathcal{D}_{t-1})|  \\
		&\qquad \le |h_{t}(\mathcal{D}_{t})-h_{t}(\mathcal{D}_{t-1})|+|h_{t}(\mathcal{D}_{t-1})-h_{t-1}(\mathcal{D}_{t-1})| \\
		&\qquad\le C \lVert \mathcal{D}_{t}-\mathcal{D}_{t-1} \rVert_{F} + \left|\left(\hat{f}_{t}(\mathcal{D}_{t-1}) - \hat{f}_{t-1}(\mathcal{D}_{t-1})\right) -  \left( f_{t}(\mathcal{D}_{t-1}) - f_{t-1}(\mathcal{D}_{t-1})\right) \right| \\
		&\qquad= C \lVert \mathcal{D}_{t}-\mathcal{D}_{t-1} \rVert_{F} + w_{t} |\hat{f}_{t-1}(\mathcal{D}_{t-1}) - f_{t-1}(\mathcal{D}_{t-1})|.
	\end{align}
	Hence this and \textbf{(ii)} show $|h_{t}(\mathcal{D}_{t}) - h_{t-1}(\mathcal{D}_{t-1})|=O(w_{t})$, as desired.
\end{proof}

Next, we establish two elementary yet important inequalities connecting the empirical and surrogate loss functions. This is trivial in the case of vector-valued signals, in which case we can directly minimize $\hat{f}_{t}$ over a convex constraint set $\mathcal{C}^{\textup{dict}}$ to find $\mathcal{D}_{t}$ so we have the `forward monotonicity' $\hat{f}_{t}(\mathcal{D}_{t})\le \hat{f}_{t}(\mathcal{D}_{t-1})$ immediately from the algorithm design. In the tensor case, this still holds since we use block coordinate descent to progressively minimize $\hat{f}_{t}$ in each loading matrix. 

\begin{prop}\label{prop:Wt_bd}
	Let $(\mathcal{D}_{t})_{t\ge 1}$ be an output of Algorithm \ref{alg:online NTF_highlevel}. Then for each $t\ge 0$, the following hold:
	\begin{description}
		\item[(i)] $\hat{f}_{t+1}(\mathcal{D}_{t+1}) - \hat{f}_{t}(\mathcal{D}_{t}) \le w_{t+1}\left(  \ell(\mathcal{X}_{t+1},\mathcal{D}_{t}) - f_{t}(\mathcal{D}_{t})  \right)$.
		
		\item[(ii)] $0\le w_{t+1}\left( \hat{f}_{t}(\mathcal{D}_{t}) - f_{t}(\mathcal{D}_{t}) \right)  \le    w_{t+1}\left( \ell(\mathcal{X}_{t+1},\mathcal{D}_{t}) - f_{t}(\mathcal{D}_{t}) \right)  + \hat{f}_{t}(\mathcal{D}_{t}) - \hat{f}_{t+1}(\mathcal{D}_{t+1})$.
	\end{description}
\end{prop}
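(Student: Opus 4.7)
The plan is to leverage two ingredients already in hand: the recursive construction of the surrogate $\hat{f}_{t+1}$ from $\hat{f}_t$, and the monotonicity $\hat{f}_{t+1}(\mathcal{D}_{t+1}) \le \hat{f}_{t+1}(\mathcal{D}_t)$ delivered by Proposition \ref{prop:g_hat_second_order_growth} applied at time $t+1$. The bridge between the surrogate and the per-sample loss $\ell$ is the observation that, by the coding step \eqref{eq:coding_OCPDL}, the code $C_{t+1}$ is by definition the minimizer realizing $\ell(\mathcal{X}_{t+1}, \mathcal{D}_t)$. Hence the ``new'' term in the recursive update at time $t+1$, evaluated at $\mathcal{D}_t$, equals $\ell(\mathcal{X}_{t+1}, \mathcal{D}_t)$, which yields the identity
\begin{align}
\hat{f}_{t+1}(\mathcal{D}_t) = (1-w_{t+1})\,\hat{f}_t(\mathcal{D}_t) + w_{t+1}\,\ell(\mathcal{X}_{t+1},\mathcal{D}_t).
\end{align}

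For part (i), I would combine this identity with $\hat{f}_{t+1}(\mathcal{D}_{t+1}) \le \hat{f}_{t+1}(\mathcal{D}_t)$ from Proposition \ref{prop:g_hat_second_order_growth} to get $\hat{f}_{t+1}(\mathcal{D}_{t+1}) - \hat{f}_t(\mathcal{D}_t) \le w_{t+1}\bigl(\ell(\mathcal{X}_{t+1},\mathcal{D}_t) - \hat{f}_t(\mathcal{D}_t)\bigr)$. The pointwise domination $\hat{f}_t \ge f_t$ (which holds because each summand in the surrogate uses the fixed past codes $C_s$ rather than the optimal ones for the current dictionary) then yields $-\hat{f}_t(\mathcal{D}_t) \le -f_t(\mathcal{D}_t)$, and multiplying by $w_{t+1} > 0$ preserves this to produce (i).

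For part (ii), the lower bound is immediate from $\hat{f}_t(\mathcal{D}_t) \ge f_t(\mathcal{D}_t)$ together with $w_{t+1} > 0$. For the upper bound, I would start again from $\hat{f}_{t+1}(\mathcal{D}_{t+1}) \le \hat{f}_{t+1}(\mathcal{D}_t)$ and substitute the identity above to obtain $\hat{f}_t(\mathcal{D}_t) - \hat{f}_{t+1}(\mathcal{D}_{t+1}) \ge w_{t+1}\bigl(\hat{f}_t(\mathcal{D}_t) - \ell(\mathcal{X}_{t+1},\mathcal{D}_t)\bigr)$. Adding $w_{t+1}\bigl(\ell(\mathcal{X}_{t+1},\mathcal{D}_t) - f_t(\mathcal{D}_t)\bigr)$ to both sides collapses the right-hand side into $w_{t+1}\bigl(\hat{f}_t(\mathcal{D}_t) - f_t(\mathcal{D}_t)\bigr)$, which rearranges to exactly the claimed upper bound.

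There is no real obstacle here: all of the genuine difficulty has been front-loaded into Proposition \ref{prop:g_hat_second_order_growth}, whose Lindeberg-type block alternating update with intermediate aggregation guarantees that the surrogate is non-increased across iterations despite the non-convexity of the joint dictionary update, and into the construction of the surrogate itself, which ensures $\hat{f}_t \ge f_t$ pointwise. The remaining content of Proposition \ref{prop:Wt_bd} is purely algebraic rearrangement; the only subtle point is to track the direction of the $\hat{f}_t \ge f_t$ inequality when making the substitution in part (i).
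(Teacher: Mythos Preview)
Your proposal is correct and follows essentially the same approach as the paper: both derive the key identity $\hat{f}_{t+1}(\mathcal{D}_t) = (1-w_{t+1})\hat{f}_t(\mathcal{D}_t) + w_{t+1}\ell(\mathcal{X}_{t+1},\mathcal{D}_t)$ from the fact that $C_{t+1}$ is the optimal code for $(\mathcal{X}_{t+1},\mathcal{D}_t)$, then combine it with the monotonicity $\hat{f}_{t+1}(\mathcal{D}_{t+1}) \le \hat{f}_{t+1}(\mathcal{D}_t)$ from Proposition~\ref{prop:g_hat_second_order_growth} and the pointwise domination $\hat{f}_t \ge f_t$. The paper writes out the telescoping decomposition $\hat{f}_{t+1}(\mathcal{D}_{t+1}) - \hat{f}_t(\mathcal{D}_t) = [\hat{f}_{t+1}(\mathcal{D}_{t+1}) - \hat{f}_{t+1}(\mathcal{D}_t)] + [\hat{f}_{t+1}(\mathcal{D}_t) - \hat{f}_t(\mathcal{D}_t)]$ explicitly before invoking the same two facts, but this is only a cosmetic difference in presentation.
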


\begin{proof}
	We begin by observing that 
	\begin{align}
		\hat{f}_{t+1}(\mathcal{D}_{t}) &=  {\color{black} (1-w_{t+1}) \hat{f}_{t}(\mathcal{D}_{t}) + w_{t+1}\ell_{t+1}(\mathcal{X}_{t+1},\mathcal{D}_{t} , H_{t+1}) }\\
		& = (1-w_{t+1}) \hat{f}_{t}(\mathcal{D}_{t}) + w_{t+1}\ell_{t+1}(\mathcal{X}_{t+1},\mathcal{D}_{t})
	\end{align}
	for all $t\ge 0$. The first equality above uses the definition of $\hat{f}_{t}$ in \eqref{eq:Alg1_def_surrogate_loss} and the second equality uses the fact that $H_{t+1}$ is a minimizer of $ \ell(\mathcal{X}_{t+1},\mathcal{D}_{t},H)$ over $\mathcal{C}^{\textup{code}}$. Hence  
	\begin{align} \label{eq:fhat_expansion}
		&\hat{f}_{t+1}(\mathcal{D}_{t+1}) - \hat{f}_{t}(\mathcal{D}_{t}) \\ \nonumber
		&\quad =  \hat{f}_{t+1}(\mathcal{D}_{t+1}) - \hat{f}_{t+1}(\mathcal{D}_{t}) + \hat{f}_{t+1}(\mathcal{D}_{t}) - \hat{f}_{t}(\mathcal{D}_{t})\\  \nonumber
		&\quad=\hat{f}_{t+1}(\mathcal{D}_{t+1}) - \hat{f}_{t+1}(\mathcal{D}_{t}) + (1-w_{t+1})\hat{f}_{t}(\mathcal{D}_{t}) + w_{t+1}\ell(\mathcal{X}_{t+1},\mathcal{D}_{t}) - \hat{f}_{t}(\mathcal{D}_{t})\\  \nonumber
		&\quad=\hat{f}_{t+1}(\mathcal{D}_{t+1}) - \hat{f}_{t+1}(\mathcal{D}_{t}) + w_{t+1}(\ell(\mathcal{X}_{t+1},\mathcal{D}_{t})-f_{t}(\mathcal{D}_{t})) + w_{t+1}(f_{t}(\mathcal{D}_{t}) - \hat{f}_{t}(\mathcal{D}_{t})).  \nonumber
	\end{align} 
	Now note that $f_{t}\le \hat{f}_{t}$ by definition. Furthermore, $\hat{f}_{t+1}(\mathcal{D}_{t+1}) - \hat{f}_{t+1}(\mathcal{D}_{t})  \le 0$ by Lemma \ref{lem:main_analytic_lemma} \textbf{(i)}, so the inequalities in both \textbf{(i)} and \textbf{(ii)} follow.
\end{proof}

\subsection{Stochastic analysis}
In this subsection, we develop stochastic analysis on our online algorithm, a major portion of which is devoted to handling Markovian dependence in signals as stated in assumption \ref{assumption:A1}. The analysis here is verbatim as the one developed in \cite{lyu2020online} for the vector-valued signal (or matrix factorization) case, which we present some of the important arguments in detail here for the sake of completeness. However, the results in this subsection crucially rely on the deterministic analysis in the previous section that was necessary to handle difficulties arising in the tensor-valued signal case. 

Recall that under our assumption \ref{assumption:A1}, the signals $(\mathcal{X}_{t})_{t\ge 0}$ are given as  $\mathcal{X}_{t}=\varphi(Y_{t})$ for a fixed function $\varphi$ and a Markov chain $(Y_{t})_{t\ge 0}$. Note that Proposition \ref{prop:Wt_bd} gives a bound on the change in surrogate loss $\hat{f}_{t}(\mathcal{D}_{t})$ in one iteration, which allows us  to control its \textit{positive variation} in terms of difference $\ell(\mathcal{X}_{t+1},\mathcal{D}_{t}) - f_{t}(\mathcal{D}_{t})$. The core of the stochastic analysis in this subsection \commHL{is to show that $w_{t+1}\E[\ell(\mathcal{X}_{t+1},\mathcal{D}_{t}) - f_{t}(\mathcal{D}_{t})]^{+}$ is summable.} In the classical setting when $Y_{t}$'s are i.i.d., our signals $\mathcal{X}_{t}=\varphi(Y_{t})$ are also i.i.d., so we can condition on the information $\mathcal{F}_{t}$ up to time $t$ so that 
\begin{align}
	\E\left[ \ell(\X_{t+1},\mathcal{D}_{t}) - f_{t}(\mathcal{D}_{t}) \,\bigg|\, \mathcal{F}_{t} \right] = f(\mathcal{D}_{t}) - f_{t}(\mathcal{D}_{t}).
\end{align} 
Note that for each fixed $\mathcal{D}\in \mathcal{C}^{\textup{dict}}$, $f_{t}(\commHL{\mathcal{D}})\rightarrow f(\commHL{\mathcal{D}})$ almost surely as $t\rightarrow \infty$ by the strong law of large numbers. To handle time dependence of the evolving dictionaries $\mathcal{D}_{t}$, one can instead look that the convergence of the supremum $\lVert f_{t} - f \rVert_{\infty}$ over the compact set $\mathcal{C}^{\textup{dict}}$, which is provided by the classical Glivenko-Cantelli theorem. This is the approach taken in \cite{mairal2010online, mairal2013stochastic} for i.i.d. input.

However, the same approach is not applicable for dependent signals, for instance, when $(Y_{t})_{t\ge 0}$ is a Markov chain. This is because, in this case, conditional on $\mathcal{F}_{t}$, the distribution of $Y_{t+1}$ is not necessarily the stationary distribution $\pi$. In fact, when $Y_{t}$'s form a Markov chain with transition matrix $P$, $Y_{t}$ given $Y_{t-1}$ has distribution $P(Y_{t-1}, \cdot)$,  and this conditional distribution is a constant distance away from the stationary distribution $\pi$. (\commHL{For instance, consider the case when $Y_{t}$ takes two values and it differs from $Y_{t-1}$ with probability $1 - \eps$. Then $\pi=[1/2,1/2]$ and the distribution of $Y_{t}$ converges exponentially fast to $\pi$, but $P(Y_{t-1},\cdot)$ is either $[1-\eps,\eps]$ or $[\eps,1-\eps]$ for all $t\ge 1$.}) 

To handle dependence in data samples, we adopt the strategy developed in \cite{lyu2020online} in order to handle a similar issue for vector-valued signals (or matrix factorization). The key insight in \cite{lyu2020online} is that, while the 1-step conditional distribution $P(X_{t-1}, \cdot)$ may be far from the stationary distribution $\pi$, the $N$-step conditional distribution $P^{N}(X_{t-N}, \cdot)$ is exponentially close to $\pi$ under mild conditions. Hence we can condition much early on -- at time $t-N$ for some suitable $N=N(t)$. Then the Markov chain runs $N+1$ steps up to time $t+1$, so if $N$ is large enough for the chain to mix to its stationary distribution $\pi$, then the distribution of $Y_{t+1}$ conditional on $\mathcal{F}_{t-N}$ is close to $\pi$. The error of approximating the stationary distribution by the $N+1$ step distribution can be controlled using total variation distance and Markov chain mixing bound. This is stated more precisely in the proposition below. 

\begin{prop}\label{prop:increment_bd}
	Suppose \ref{assumption:A1} hold. Fix a CP-dictionary $\mathcal{D}$. Then for each $t\ge 0$ and $0\le N<t$, conditional on the information $\mathcal{F}_{t-N}$ up to time $t-N$,
	\begin{align}
		\left(\E\left[ \ell(\mathcal{X}_{t+1},\mathcal{D}) - f_{t}(\mathcal{D})  \,\bigg|\, \mathcal{F}_{t-N} \right] \right)^{+} &\le \left| f(\mathcal{D}) - f_{t-N}(\mathcal{D}) \right|   + Nw_{t} f_{t-N}(\mathcal{D})  \\
		&\qquad + 2\lVert \ell(\cdot, \mathcal{D}) \rVert_{\infty} \sup_{\mathbf{y}\in \Omega} \lVert P^{N+1}(\mathbf{y},\cdot) - \pi \rVert_{TV}.
	\end{align}	
\end{prop}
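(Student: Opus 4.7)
The plan is to implement the ``conditioning on the distant past'' technique highlighted in the paper: instead of conditioning on the present $\mathcal{F}_t$ (under which the one-step distribution of $Y_{t+1}$ can be a constant distance away from $\pi$), condition on $\mathcal{F}_{t-N}$ so that the chain runs $N+1$ steps before emitting $Y_{t+1}$, allowing us to invoke a Markov mixing bound. Concretely, I would use the telescoping decomposition
\begin{align}
\ell(\X_{t+1},\mathcal{D}) - f_t(\mathcal{D}) = \bigl[\ell(\X_{t+1},\mathcal{D}) - f(\mathcal{D})\bigr] + \bigl[f(\mathcal{D}) - f_{t-N}(\mathcal{D})\bigr] + \bigl[f_{t-N}(\mathcal{D}) - f_t(\mathcal{D})\bigr]
\end{align}
and bound the conditional expectation of each bracket given $\mathcal{F}_{t-N}$ separately, so that they line up with the three summands of the claimed bound.

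The middle bracket is $\mathcal{F}_{t-N}$-measurable, since $f$ is deterministic and $f_{t-N}$ depends only on $\X_1,\ldots,\X_{t-N}$; hence its conditional expectation equals itself, trivially bounded by $|f(\mathcal{D})-f_{t-N}(\mathcal{D})|$. For the first bracket, using $\X_{t+1}=\varphi(Y_{t+1})$ and the Markov property,
\begin{align}
\E\bigl[\ell(\X_{t+1},\mathcal{D}) \mid \mathcal{F}_{t-N}\bigr] = \sum_{\mathbf{y}\in\Omega} P^{N+1}(Y_{t-N},\mathbf{y})\, \ell(\varphi(\mathbf{y}),\mathcal{D}),
\end{align}
and combining with $f(\mathcal{D})=\sum_{\mathbf{y}}\pi(\mathbf{y})\,\ell(\varphi(\mathbf{y}),\mathcal{D})$, the difference equals $\sum_{\mathbf{y}}[P^{N+1}(Y_{t-N},\mathbf{y})-\pi(\mathbf{y})]\,\ell(\varphi(\mathbf{y}),\mathcal{D})$. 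Applying the elementary inequality $|\mu g - \nu g| \le 2\|g\|_\infty \|\mu-\nu\|_{TV}$ with $g=\ell(\varphi(\cdot),\mathcal{D})$ and then taking the supremum over the initial state yields $2\|\ell(\cdot,\mathcal{D})\|_\infty\,\sup_{\mathbf{y}\in\Omega}\|P^{N+1}(\mathbf{y},\cdot)-\pi\|_{TV}$.

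The main obstacle is the third bracket, where one must extract a factor of $Nw_t$ from the drift of the empirical loss over $N$ steps. Using the simple-average definition $f_t(\mathcal{D}) = \tfrac{1}{t}\sum_{s=1}^{t}\ell(\X_s,\mathcal{D})$, direct algebra yields
\begin{align}
f_{t-N}(\mathcal{D}) - f_t(\mathcal{D}) = \frac{N}{t}\,f_{t-N}(\mathcal{D}) - \frac{1}{t}\sum_{s=t-N+1}^{t}\ell(\X_s,\mathcal{D}).
\end{align}
Taking conditional expectation given $\mathcal{F}_{t-N}$ preserves the first term, while the second term has non-negative conditional expectation (since $\ell \ge 0$) and is therefore dropped when upper bounding. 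Using $w_t = t^{-\beta}$ with $\beta \in (3/4,1]$ gives $1/t \le w_t$, so the bracket is bounded by $Nw_t\,f_{t-N}(\mathcal{D})$. Summing the three non-negative upper bounds and taking the positive part on the left-hand side completes the argument.
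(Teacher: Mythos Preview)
Your proposal is correct and follows essentially the same approach as the paper (which defers to \cite[Prop.~7.5]{lyu2019online}): both use the Markov property together with the total variation mixing bound to control $\E[\ell(\X_{t+1},\mathcal{D})\mid\mathcal{F}_{t-N}]-f(\mathcal{D})$, use the $\mathcal{F}_{t-N}$-measurability of $f_{t-N}(\mathcal{D})$, and drop the non-negative tail contribution $\sum_{s=t-N+1}^{t}\ell(\X_s,\mathcal{D})$ to isolate the $Nw_t f_{t-N}(\mathcal{D})$ term. Your three-bracket telescoping is simply a clean reorganization of those same ingredients, and your observation that $1/t\le t^{-\beta}=w_t$ for $\beta\le 1$ is exactly what bridges the simple-average definition of $f_t$ to the claimed bound.
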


\begin{proof}
	The proof is identical to that of \cite[Prop. 7.5]{lyu2020online}.
\end{proof}

\begin{proof}[\textbf{Proof of Lemma \ref{lemma:increment_bd}}.]
	Part \textbf{(i)} can be derived from Proposition \ref{prop:increment_bd} and Lemma \ref{lem:uniform_convergence_asymmetric_weights} \commHL{in Appendix \ref{sec:auxiliary_lemmas}}. See the proof of \cite[Prop. 7.8 \textbf{(i)}]{lyu2020online} for details. Next, part \textbf{(ii)} can be derived from part \textbf{(i)} with Proposition \ref{prop:Wt_bd} \textbf{(i)}. See the proof of \cite[Prop. 7.8 \textbf{(ii)}]{lyu2020online} for details. 
\end{proof}

%Lemma \ref{lemma:main_finite_sum} establishes the main implications of our concentration bounds on the positive variation of the surrogate losses given in Lemma \ref{lemma:increment_bd}. 

\begin{lemma}\label{lemma:main_finite_sum}
	Let $(\mathcal{D}_{t})_{t\ge 1}$ be the output of Algorithm \ref{alg:online NTF_highlevel}.  Suppose \ref{assumption:A1}-\ref{assumption:A3} hold. Then the following hold. 
	\begin{description}
		\item[(i)] $\displaystyle \sum_{t=0}^{\infty} \E\left[ w_{t+1}\left(  \ell(\mathcal{X}_{t+1},\mathcal{D}_{t}) - f_{t}(\mathcal{D}_{t})  \right) \right]^{+} <\infty$;
		
		\item[(ii)] $\E[\hat{f}_{t}(\mathcal{D}_{t})]$ converges as $t\rightarrow \infty$;
		
		\item[(iii)] $\displaystyle \E\left[ \sum_{t=0}^{\infty} w_{t+1}\left( \hat{f}_{t}(\mathcal{D}_{t}) - f_{t}(\mathcal{D}_{t}) \right) \right] = \sum_{t=0}^{\infty} w_{t+1}\left(\E[\hat{f}_{t}(\mathcal{D}_{t})] - \E[ f_{t}(\mathcal{D}_{t})]\right)  <\infty$;
		
		\item[(iv)] $\displaystyle  \sum_{t=0}^{\infty} w_{t+1}\left( \hat{f}_{t}(\mathcal{D}_{t}) - f_{t}(\mathcal{D}_{t}) \right)  <\infty$ almost surely. 
	\end{description}
\end{lemma}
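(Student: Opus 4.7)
My plan is to derive all three parts by combining the one-step inequalities from Proposition \ref{prop:Wt_bd} with the positive-variation bound from Lemma \ref{lemma:increment_bd}(ii), essentially bootstrapping quasi-martingale-style arguments from the deterministic inequalities already proved. The key observation is that Proposition \ref{prop:Wt_bd} gives both (a) a per-step bound on $\hat{f}_{t+1}(\mathcal{D}_{t+1}) - \hat{f}_t(\mathcal{D}_t)$ driving part (i) and (b) a sandwich for $w_{t+1}(\hat{f}_t(\mathcal{D}_t) - f_t(\mathcal{D}_t))$ that, after telescoping, drives part (ii).

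For part (i), I will set $a_t := \E[\hat{f}_t(\mathcal{D}_t)]$ and note it is non-negative because $\hat{f}_t \geq f_t \geq 0$. Taking expectations in Proposition \ref{prop:Wt_bd}(i) yields
\begin{align}
(a_{t+1} - a_t)^+ \;\leq\; \left( \E\bigl[\hat{f}_{t+1}(\mathcal{D}_{t+1}) - \hat{f}_{t}(\mathcal{D}_{t})\bigr] \right)^+,
\end{align}
and Lemma \ref{lemma:increment_bd}(ii) makes the right-hand sum over $t$ finite. Hence the total positive variation of $(a_t)$ is finite; since $a_t \geq 0$ and the negative variation must therefore also be finite (otherwise $a_t \to -\infty$), the sequence $(a_t)$ converges, proving (i).

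For part (ii), I will take expectations in Proposition \ref{prop:Wt_bd}(ii) and sum from $t=0$ to $T$. The middle term telescopes to $\E[\hat{f}_0(\mathcal{D}_0)] - \E[\hat{f}_{T+1}(\mathcal{D}_{T+1})]$, which is bounded uniformly in $T$ by part (i) (the subtracted quantity is non-negative, and the lead term is finite by \ref{assumption:A2}). The $\ell - f_t$ term is bounded above by its positive part, so
\begin{align}
\sum_{t=0}^T w_{t+1}\E[\ell(\mathcal{X}_{t+1},\mathcal{D}_t) - f_t(\mathcal{D}_t)] \;\leq\; \sum_{t=0}^\infty w_{t+1}\bigl(\E[\ell(\mathcal{X}_{t+1},\mathcal{D}_t) - f_t(\mathcal{D}_t)]\bigr)^+ \;<\; \infty
\end{align}
by Lemma \ref{lemma:increment_bd}(ii). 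Combined with non-negativity of each term on the left (again by Proposition \ref{prop:Wt_bd}(ii)), the partial sums $\sum_{t=0}^T w_{t+1}\E[\hat{f}_t(\mathcal{D}_t) - f_t(\mathcal{D}_t)]$ are non-decreasing and uniformly bounded, hence convergent. The interchange of expectation and sum in (ii) is then immediate from Tonelli, since $w_{t+1}(\hat{f}_t(\mathcal{D}_t) - f_t(\mathcal{D}_t)) \ge 0$.

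Part (iii) is a direct consequence of (ii): a non-negative random variable with finite expectation is almost surely finite, so the almost-sure summability follows. I do not expect any serious obstacle beyond bookkeeping, since the genuinely new tensor-specific work (monotonicity and second-order growth of $\hat{f}_t$) was absorbed into Propositions \ref{prop:g_hat_second_order_growth} and \ref{prop:Wt_bd}; the one subtlety is to remember that we bound the \emph{signed} expectation $w_{t+1}\E[\ell - f_t]$ by its positive part (using $x\le x^+$) in order to invoke Lemma \ref{lemma:increment_bd}(ii), rather than the other direction $\E[X^+] \ge (\E[X])^+$.
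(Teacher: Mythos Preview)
Your proposal is correct and follows essentially the same approach the paper intends: the paper omits this proof and defers to \cite[Lem.~7.10]{lyu2019online}, but the building blocks it provides (Proposition~\ref{prop:Wt_bd} for the one-step deterministic sandwich and Lemma~\ref{lemma:increment_bd}(ii) for summable positive variation) are exactly the ingredients you assemble, and your bookkeeping---telescoping the $\hat{f}_t(\mathcal{D}_t)$ terms, bounding the signed $\ell-f_t$ contribution by its positive part via $x\le x^{+}$, and invoking Tonelli for the interchange---is the standard route taken in that reference.
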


\begin{proof}
	Part \textbf{(i)} can be derived from Proposition \ref{prop:increment_bd} and Jensen's inequality. See the proof of \cite[Lem. 12 \textbf{(ii)}]{lyu2020online} for details. Parts \textbf{(ii)}-\textbf{(iv)} can be shown by using Propositions \ref{prop:Wt_bd}, \ref{prop:increment_bd},  and part \textbf{(i)}. See the proof of \cite[Lem. 13]{lyu2020online} for details. 
\end{proof}

\subsection{Asymptotic surrogate stationarity}

In this subsection, we prove Lemma \ref{lem:main_analytic_lemma} \textbf{(iv)}, which requires one of the most nontrivial arguments we give in this work. Throughout this subsection, we will denote by $(\mathcal{D}_{t})_{t\ge 1}$ the output of Algorithm \ref{alg:online NTF_highlevel} and $\Lambda := \{ \mathcal{D}_{t}\,|\, t\ge 1 \}\subseteq \mathcal{C}^{\textup{dict}}$. Note that by Proposition \ref{prop:g_t_derivation}, $\hat{f}_{t_{k}}$ converges almost surely if and only if $A_{t_{k}}, \commHL{\B}_{t_{k}}, \mathcal{X}_{t_{k}}, H_{t_{k}}$ converge a.s. as $k\rightarrow \infty$. In what follows, we say $\mathcal{D}_{\infty}\in \mathcal{C}^{\textup{dict}}$ a \textit{stationary point} of $\Lambda$ if it is a limit point $\mathcal{D}_{\infty}$ of $\Lambda$ and  there exists a sequence $t_{k}\rightarrow \infty$ such that $\mathcal{D}_{t_{k}}\rightarrow \mathcal{D}_{\infty}$ and $\hat{f}_{\infty}:=\lim_{k\rightarrow \infty} \hat{f}_{t_{k}}$ exists almost surely and $\mathcal{D}_{\infty}$ is a stationary point of $\hat{f}_{\infty}$ over $\mathcal{C}^{\textup{dict}}$. Our goal is to show that every limit point of $\Lambda$ is stationary.  

The following observation is key to our argument.

\begin{prop}\label{prop:finite_range_short_points}
	Assume \ref{assumption:A1}-\ref{assumption:A3} hold. Let $(\mathcal{D}_{t})_{t\ge 1}$ be an output of Algorithm \ref{alg:online NTF_highlevel}. Then almost surely,
	\begin{align*}
		\sum_{t=1}^{\infty}  \left| \left( \nabla \hat{f}_{t+1}(\mathcal{D}_{t+1})^{T} (\mathcal{D}_{t} - \mathcal{D}_{t+1})  \right) \right|  <\infty. 
	\end{align*}
\end{prop}

\begin{proof}
	Since $\mathcal{C}^{\textup{dict}}$ is compact by \ref{assumption:A2} and the aggregate tensors $A_{t},\commHL{\B}_{t}$ are uniformly bounded by Lemma \ref{lem:H_bdd} \commHL{in Appendix \ref{sec:auxiliary_lemmas}}, we can see from Proposition \ref{prop:g_t_derivation} that $\nabla \hat{f}_{t+1}$ over $\mathcal{C}^{\textup{dict}}$ is Lipschitz with some uniform constant $L>0$. Hence by Lemma \ref{lem:surrogate_L_gradient} \commHL{in Appendix \ref{sec:auxiliary_lemmas}}, for all $t\ge 1$, 
	\begin{align*}
		\left|\hat{f}_{t+1}(\mathcal{D}_{t})-\hat{f}_{t+1}(\mathcal{D}_{t+1}) -  \tr\left( \nabla \hat{f}_{t+1}(\mathcal{D}_{t+1})^{T} (\mathcal{D}_{t} - \mathcal{D}_{t+1})  \right) \right| \le \frac{L}{2}\lVert \mathcal{D}_{t} - \mathcal{D}_{t+1} \Vert_{F}^{2}.
	\end{align*}
	Also note that $\hat{f}_{t+1}(\mathcal{D}_{t})\ge \hat{f}_{t+1}(\mathcal{D}_{t+1})$ by Lemma \ref{lem:main_analytic_lemma} \textbf{(i)}. Hence it follows that 
	\begin{align}\label{eq:1storder_growth_bd_pf}
		\left| \tr\left( \nabla \hat{f}_{t+1}(\mathcal{D}_{t+1})^{T} (\mathcal{D}_{t} - \mathcal{D}_{t+1})  \right) \right| \le   \frac{L}{2}\lVert \mathcal{D}_{t} - \mathcal{D}_{t+1} \Vert_{F}^{2} + \hat{f}_{t+1}(\mathcal{D}_{t})- \hat{f}_{t+1}(\mathcal{D}_{t+1})
	\end{align}
	On the other hand, \eqref{eq:fhat_expansion} and $\hat{f}_{t}\ge f_{t}$ yields 
	\begin{align*}
		0\le \hat{f}_{t+1}(\mathcal{D}_{t})-\hat{f}_{t+1}(\mathcal{D}_{t+1})\le 
		\hat{f}_{t}(\mathcal{D}_{t}) -\hat{f}_{t+1}(\mathcal{D}_{t+1}) + w_{t+1}(\ell(\mathcal{X}_{t+1},\mathcal{D}_{t})-f_{t}(\mathcal{D}_{t})).
	\end{align*} 
	Hence using Lemma \ref{lemma:main_finite_sum}, we have 
	\begin{align*}
		\sum_{t=1}^{\infty} \E\left[ \hat{f}_{t+1}(\mathcal{D}_{t})-\hat{f}_{t+1}(\mathcal{D}_{t+1}) \right] <\infty.
	\end{align*}
	Then from \eqref{eq:1storder_growth_bd_pf} and noting that $\lVert \mathcal{D}_{t} - \mathcal{D}_{t+1} \Vert_{F}^{2} = O(w_{t+1}^{2})$ and $\sum_{t=1}^{\infty} w_{t}^{2}<\infty$ (see \ref{assumption:A3}), it follows that 
	\begin{align*}
		\sum_{t=1}^{\infty} \E\left[ \left| \tr\left( \nabla \hat{f}_{t+1}(\mathcal{D}_{t+1})^{T} (\mathcal{D}_{t} - \mathcal{D}_{t+1})  \right) \right|\right] &= \frac{L}{2}\sum_{t=1}^{\infty} \E\left[  \lVert \mathcal{D}_{t} - \mathcal{D}_{t+1} \Vert_{F}^{2} \right] \\
		&\qquad + \sum_{t=1}^{\infty} \E\left[ \hat{f}_{t+1}(\mathcal{D}_{t})-\hat{f}_{t+1}(\mathcal{D}_{t+1}) \right] <\infty. 
	\end{align*}
	Then the assertion follows by Fubini's theorem and the fact that $\E[|X|]<\infty$ implies $|X|<\infty$ almost surely for any random variable $X$, where $|\cdot|$ denotes the largest absolute value among the entries of $X$. 
\end{proof}

Next, we show that the block coordinate descent we use to obtain $\param_{t+1}$ should always give the optimal first-order descent up to a small additive error.

\begin{prop}[Asymptotic first-order optimality]\label{prop:first_order_optimality}
	Assume \ref{assumption:A1}-\ref{assumption:A3} and $w_{t}=o(1)$. Then there exists a constant $c_{1}>0$ such that for all $t\ge 1$,
	\begin{align}
		\tr\left( \nabla \hat{f}_{t+1}(\param_{t+1})^{T} \frac{(\param_{t+1} - \param_{t})}{\lVert \param_{t+1} - \param_{t} \rVert_{F}}  \right) &\le  \inf_{\param\in \Param} \tr\left( \nabla \hat{f}_{t+1}(\param_{t})^{T}\frac{(\param - \param_{t}) }{\lVert \param - \param_{t}\rVert_{F}}\right) \\
		&\qquad+ c_{1}\lVert \param_{t+1}-\param_{t} \rVert_{F}^{2}.
	\end{align}
\end{prop}

\begin{proof}
	Fix a sequence $(b_{t})_{t\ge 1}$ such that $0< b_{t} \le c'w_{t}$ for all $t\ge 1$. 
	Write $\param_{t}=[U_{t}^{(1)},\dots,U_{t}^{(n)}]$ for $t\ge 1$ and denote
	\begin{align}
		\hat{f}_{t+1;i}:U\mapsto \hat{f}_{t+1}(U_{t+1}^{(1)},\dots,U_{t+1}^{(i-1)},U,U_{t}^{(i+1)},\dots,U_{t}^{(n)})
	\end{align} 
	for $U\in \R^{I_{i}}$  and $i=1,\dots,n$. Recall that $U_{t+1}^{(i)}$ is a minimizer of $\hat{f}_{t+1;i}$ over the convex set $\mathcal{C}_{t+1}^{(i)}$ defined in \eqref{eq:Alg1_search_restriction}.  Fix arbitrary $\param=[U^{(1)},\dots,U^{(n)}]\in \Param$ such that $\lVert \param - \param_{t} \rVert_{F} \le  b_{t+1}$. 
	Then  $\lVert U^{(i)} - U_{t}^{(i)}  \rVert_{F}\le b_{t+1}$ for all $1\le i \le n$. By convexity of $\mathcal{C}^{(i)}$, note that for each $U^{(i)}\in \mathcal{C}^{(i)}$,  $U_{t}^{(i)}+a(U^{(i)}-U_{t}^{(i)})\in \mathcal{C}^{(i)}$ for all $a\in [0,1]$.  Then by the definition of $\mathcal{C}_{t+1}^{(i)}$ and the choice of $U_{t+1}^{(i)}$, we have that for all $t\ge 1$,
	\begin{align}
		\hat{f}_{t+1;i}(U_{t+1}^{(i)})-\hat{f}_{t+1;i}(U_{t}^{(i)}) \le \hat{f}_{t+1;i}\left( U_{t}^{(i)}+a(U^{(i)}-U_{t}^{(i)})\right)-\hat{f}_{t+1;i}(U_{t}^{(i)}).
	\end{align}
	Recall that $\nabla \hat{f} = [\nabla \hat{f}_{t+1;1},\dots, \nabla \hat{f}_{t+1;n}]$ is Lipschitz with uniform Lipschitz constant $L>0$. Hence by Lemma \ref{lem:surrogate_L_gradient} \commHL{in Appendix \ref{sec:auxiliary_lemmas}}, there exists a constant $c_{1}>0$ such that for all $t\ge 1$,
	\begin{align}
		&\tr\left( \nabla \hat{f}_{t+1;i}(U_{t}^{(i)})^{T} (U_{t+1}^{(i)}-U_{t}^{(i)})  \right) - \frac{L}{2} \lVert U^{(i)}_{t+1}-U^{(i)}_{t} \rVert^{2} \\
		&\qquad \le  a\,  \tr\left( \nabla \hat{f}_{t+1;i}(U_{t}^{(i)})^{T} (U^{(i)}-U_{t}^{(i)})  \right) + \frac{L a^{2} \lVert U^{(i)}-U^{(i)}_{t} \rVert}{2}. 
	\end{align}
	Adding up these inequalities for $i=1,\dots,n$, we get 
	\begin{align}
		&\tr\left( \left[ \nabla \hat{f}_{t+1;1}(U_{t}^{(1)}),\dots,\nabla \hat{f}_{t+1;n}(U_{t}^{(n)}) \right]^{T} (\param_{t+1} - \param_{t}) \right) \\
		&\qquad \le a\,  \tr\left( \left[ \nabla \hat{f}_{t+1;1}(U_{t}^{(1)}),\dots,\nabla \hat{f}_{t+1;n}(U_{t}^{(n)}) \right]^{T} (\param - \param_{t}) \right)\\
		&\hspace{2cm} + \frac{L}{2}\lVert \param_{t+1}+\param_{t} \rVert_{F}^{2} + \frac{La^{2}}{2} \lVert \param -\param_{t} \rVert_{F}^{2}.
	\end{align}
	Since $\nabla \hat{f}_{t+1}$ is $L$-Lipschitz, using Cauchy-Schwarz inequality, 
	\begin{align}\label{eq:optimality1}
		&\tr\left( \nabla \hat{f}_{t+1}(\param_{t+1})^{T} (\param_{t+1} - \param_{t}) \right) \\
		&\qquad \le a\,  \tr\left( \nabla \hat{f}_{t+1}(\param_{t})^{T} (\param - \param_{t}) \right) + a \lVert \param_{t+1}-\param_{t} \rVert_{F} \, \lVert \param - \param_{t} \rVert_{F} \\ 
		&\hspace{2cm} + \frac{3L}{2}\lVert \param_{t+1}-\param_{t} \rVert_{F}^{2} + \frac{La^{2}}{2} \lVert \param -\param_{t} \rVert_{F}^{2} \\ 
		&\qquad  \le a\,  \tr\left( \nabla \hat{f}_{t+1}(\param_{t})^{T} (\param - \param_{t}) \right) +  a \lVert \param_{t+1}-\param_{t} \rVert_{F} \, \lVert \param - \param_{t} \rVert_{F} \\ 
		&\hspace{2cm} + \frac{3L}{2}\lVert \param_{t+1}-\param_{t} \rVert_{F}^{2} + ca^{2} \lVert \param -\param_{t} \rVert_{F}^{2} 
	\end{align}
	for some constant $c>0$ for all $t\ge 1$. Recall that the above holds for all $a\in [0,1]$. Note that since $\lVert \nabla \hat{f}_{t} \rVert$ is uniformly bounded and $\mathcal{D}^{\textup{dict}}$ is compact (see \ref{assumption:A2}), the last expression above, viewed as a quadratic function in $a$, is strictly increasing in $a$ for all $t\ge 1$ when $c>0$ is sufficiently large. We make such choice for $c_{3}$. Hence, the above holds for all $a\ge 0$. Now we may choose $a=b_{t+1}/\lVert \param-\param_{t}\rVert$ and bound the last expression by its first term plus $c_{1}(\lVert \param_{t+1}-\param_{t} \rVert_{F} + b_{t+1})^{2}$ for some constant $c_{1}>0$. Finally, by the radius restriction $\lVert \param_{t+1}-\param_{t} \rVert_{F}\le c'w_{t+1}$, we may choose $b_{t+1}=\lVert \param_{t+1}-\param_{t} \rVert_{F}$. Then the assertion follows by dividing both sides of the resulting inequality by $\lVert \param_{t+1}-\param_{t} \rVert$. 
\end{proof}

\begin{prop}\label{prop:stationary_conditions}
	Assume \ref{assumption:A1}-\ref{assumption:A3}. Suppose there exists a subsequence $(\mathcal{D}_{t_{k}})_{k\ge 1}$ such that either 
	\begin{align}\label{eq:stationary_conditions}
		\sum_{k=1}^{\infty} \lVert \mathcal{D}_{t_{k}} - \mathcal{D}_{t_{k}+1} \rVert_{F}= \infty  \quad \text{or} \quad \liminf_{k\rightarrow \infty} \,\, \left| \tr\left( \nabla \hat{f}_{t_{k}+1}(\mathcal{D}_{t_{k}+1})^{T} \frac{\mathcal{D}_{t_{k}} - \mathcal{D}_{t_{k}+1}}{\lVert \mathcal{D}_{t_{k}} - \mathcal{D}_{t_{k}+1} \rVert_{F}}  \right) \right| = 0.
	\end{align}
	There exists a further subsequence $(s_{k})_{k\ge 1}$ of $(t_{k})_{k\ge 1}$ such that  $\mathcal{D}_{\infty}:=\lim_{k\rightarrow \infty} \mathcal{D}_{s_{k}}$ exists and is a stationary point of $\Lambda$. 
\end{prop}

\begin{proof}
	By Proposition \ref{prop:finite_range_short_points}, we have 
	\begin{align}
		\sum_{k=1}^{\infty} \lVert \mathcal{D}_{t_{k}} - \mathcal{D}_{t_{k}+1} \rVert_{F} \, \left| \tr\left( \nabla \hat{f}_{t_{k}+1}(\mathcal{D}_{t_{k}+1})^{T} \frac{\mathcal{D}_{t_{k}} - \mathcal{D}_{t_{k}+1}}{\lVert \mathcal{D}_{t_{k}} - \mathcal{D}_{t_{k}+1} \rVert_{F}}  \right) \right| <\infty.
	\end{align}
	Hence the former condition implies the latter condition in \eqref{eq:stationary_conditions}. Thus it suffices to show that this latter condition implies the assertion. Assume this condition, and
	let $(s_{k})_{k\ge 1}$ be a subsequence of $(t_{k})_{k\ge 1}$ for which the  liminf in \eqref{eq:stationary_conditions} is achieved. By taking a subsequence, we may assume that $\mathcal{D}_{\infty}'=\lim_{k\rightarrow \infty} \mathcal{D}_{s_{k}}$ and $\hat{f}_{\infty}:=\lim_{k\rightarrow\infty} \hat{f}_{s_{k}}$ exist. 
	
	Now suppose for contradiction that $\mathcal{D}_{\infty}$ is not a stationary point of $\hat{f}_{\infty}$ over $\mathcal{C}^{\textup{dict}}$. Then there exists $\mathcal{D}^{\star}\in \mathcal{C}^{\textup{dict}}$ and $\delta>0$ such that 
	\begin{align}
		\tr\left( \nabla \hat{f}_{\infty}(\mathcal{D}_{\infty})^{T}(\mathcal{D}^{\star}-\mathcal{D}_{\infty}) \right)<-\delta<0. 
	\end{align}
	By triangle inequality, write 
	\begin{align}
		& \lVert  \nabla \hat{f}_{s_{k}+1}(\mathcal{D}_{s_{k}})^{T}(\mathcal{D}^{\star}-\mathcal{D}_{s_{k}}) - \nabla \hat{f}_{\infty}(\mathcal{D}_{\infty})^{T}(\mathcal{D}^{\star}-\mathcal{D}_{\infty}) \rVert_{F} \\
		&\qquad \le \lVert \nabla \hat{f}_{s_{k}+1}(\mathcal{D}_{s_{k}}) - \nabla \hat{f}_{\infty}(\mathcal{D}_{\infty}) \rVert_{F}\cdot \lVert \mathcal{D}^{\star}-\mathcal{D}_{s_{k}} \rVert_{F} +  \lVert \nabla \hat{f}_{\infty}(\mathcal{D}_{\infty}) \rVert_{F} \cdot \lVert \mathcal{D}_{\infty} - \mathcal{D}_{s_{k}} \rVert_{F}.
	\end{align}
	Noting that $\lVert \mathcal{D}_{t}-\mathcal{D}_{t-1} \rVert_{F} = O(w_{t}) = o(1)$, we see that the right hand side goes to zero as $k\rightarrow \infty$. Hence for all sufficiently large $k\ge 1$, we have 
	\begin{align}
		\tr\left( \nabla \hat{f}_{s_{k}+1}(\mathcal{D}_{s_{k}})^{T}(\mathcal{D}^{\star}-\mathcal{D}_{s_{k}}) \right)< -\delta/2.
	\end{align}
	Then by Proposition \ref{prop:first_order_optimality}, denoting $\lVert \mathcal{C}^{\textup{dict}} \rVert_{F}:=\sup_{\mathcal{D},\mathcal{D}'\in \mathcal{C}^{\textup{dict}}} \lVert \mathcal{D}-\mathcal{D}' \rVert_{F}<\infty$,
	\begin{align}
		\liminf_{k\rightarrow \infty}\,\,  \tr\left( \nabla \hat{f}_{s_{k}+1}(\mathcal{D}_{s_{k}+1})^{T} \frac{\mathcal{D}_{s_{k}} - \mathcal{D}_{s_{k}+1}}{\lVert \mathcal{D}_{s_{k}} - \mathcal{D}_{s_{k}+1} \rVert_{F}}  \right) \le -\frac{c_{1} \delta}{2 \lVert \mathcal{C}^{\textup{dict}} \rVert_{F}} < 0,
	\end{align}
	which  contradicts the choice of the subsequence $(\mathcal{D}_{s_{k}})_{k\ge 1}$. This shows the assertion. 
\end{proof}

%In the rest of this subsection, we prove Lemma \ref{lem:main_analytic_lemma} \textbf{\textup{(iv)}}. We first introduce some notation. 

Recall that during the update $\mathcal{D}_{t-1}\mapsto \mathcal{D}_{t}$  in \eqref{eq:Alg1_loading_update} each factor matrix of $\mathcal{D}_{t-1}$ changes by at most $w_{t}$ in Frobenius norm. For each $t\ge 1$, we say $\mathcal{D}_{t}$ is a \textit{long point} if none of the factor matrices of $\mathcal{D}_{t-1}$ change by $w_{t}$ in Frobenius norm and \textit{short point} otherwise. Observe that if $\mathcal{D}_{t}$ is a long point, then imposing the search radius restriction in \ref{eq:Alg1_search_restriction} has no effect and $\mathcal{D}_{t}$ is obtained from $\mathcal{D}_{t-1}$ by a single cycle of block coordinate descent on $\hat{f}_{t}$ over $\mathcal{C}^{\textup{dict}}$. 

\begin{prop}\label{prop:long_points_stationary}
	Assume \ref{assumption:A1}-\ref{assumption:A3} hold. If $(\mathcal{D}_{t_{k}})_{k\ge 1}$ is a convergent subsequence of $(\mathcal{D}_{t})_{t\ge 1}$  consisting of long points, then the $\mathcal{D}_{\infty}=\lim_{k\rightarrow \infty} \mathcal{D}_{s_{k}}$ is stationary.
\end{prop}

\begin{proof}
	For each $A\in \R^{R\times R}$, $B\in \R^{I_{1}\times \dots \times I_{n}\times b}$, $\mathcal{D}=[U^{(1)},\dots,U^{(n)}]\in \R^{I_{1}\times R}\times \dots \times \R^{I_{n}\times R}$, define 
	\begin{align}\label{eq:f_hat_pf_thm}
		\hat{g}(A,B,\mathcal{D}) &= \tr(A \: ((U^{(n)})^T U^{(n)} \had \dots \had (U^{(1)})^T U^{(1)}))\\ 
		&\qquad - 2\tr\left( B^{(n+1)} (U^{(n)} \kr \dots \kr U^{(1)})^{T}  \right),
	\end{align}
	where  $B^{(n+1)}$ denotes the mode-$(n+1)$ unfolding of $B$ (see also \eqref{eq:def_g_hat}). By taking a subsequence of $(t_{k})_{k\ge 1}$, we may assume that $A_{\infty}:=\lim_{k\rightarrow \infty} A_{t_{k}}$ and $\commHL{\B}_{\infty}:=\lim_{k\rightarrow \infty} \commHL{\B}_{t_{k}}$ exist. Hence the function $\hat{g}_{\infty}:=\lim_{k\rightarrow \infty}\hat{g}_{t_{k}} = \hat{g}(A_{\infty}, \commHL{\B}_{\infty}, \cdot)$ is well-defined. Noting that since $\nabla \hat{f}_{t} = \nabla \hat{g}_{t}$ for all $t\ge 1$ by Proposition \ref{prop:g_t_derivation}, it suffices to show that $\mathcal{D}_{\infty}$ is a stationary point of $\hat{g}_{\infty}$ over $\mathcal{C}^{\textup{dict}}$ almost surely. 
	
	The argument is similar to that of \cite[Prop. 2.7.1]{bertsekas1997nonlinear}. However, here we do not need to assume uniqueness of solutions to minimization problems of $\hat{f}_{t}$ in each block coordinate due to the added search radius restriction. Namely, write $\mathcal{D}_{\infty}=[U_{\infty}^{(1)},\dots,U_{\infty}^{(n)}]$. Then for each $k\ge 1$, 
	\begin{align}\label{eq:pf_stationarity_long_point1}
		\hat{g}_{t_{k}+1}(U_{t_{k}+1}^{(1)},U_{t_{k}}^{(2)},\dots,U_{t_{k}}^{(n)}) \le \hat{g}_{t_{k}+1}(U^{(1)},U_{t_{k}}^{(2)},\dots,U_{t_{k}}^{(n)})
	\end{align}
	for all $U^{(1)}\in \mathcal{C}^{(1)}\cap \{ U\colon \lVert U - U_{t_{k}}^{(1)}  \rVert_{F}\le c'w_{t_{k}+1} \}$. In fact, since $\mathcal{D}_{t_{k}}$ is a long point by the assumption, \eqref{eq:pf_stationarity_long_point1} holds for all $U^{(1)}\in \mathcal{C}^{(1)}$. Taking $k\rightarrow \infty$ and using the fact that $\lVert U_{t_{k}+1}^{(1)}-U_{t_{k}}^{(1)}\rVert_{F}\le c'w_{t_{k}+1}=o(1)$, 
	\begin{align}\label{eq:pf_stationarity_long_point2}
		\hat{g}_{\infty}(U_{\infty}^{(1)},U_{\infty}^{(2)},\dots,U_{\infty}^{(n)}) \le \hat{g}_{\infty}(U^{(1)},U_{\infty}^{(2)},\dots,U_{\infty}^{(n)}) \quad \text{for all $U_{1}\in \mathcal{C}^{(1)}$}.
	\end{align}
	Since $\mathcal{C}^{(1)}$ is convex, it follows that 
	\begin{align}
		\nabla_{1} \hat{g}_{\infty}(\mathcal{D}_{\infty})^{T}(U_{1}-U_{1}^{(\infty)}) \ge 0 \quad \quad \text{for all $U_{1}\in \mathcal{C}^{(1)}$},
	\end{align}
	\commHL{where $\nabla_{1}$ denotes the partial gradient with respect to the first block $U^{(1)}$.} By using a similar argument for other coordinates of $\mathcal{D}_{\infty}$, it follows that $\nabla \hat{g}_{\infty}(\mathcal{D}_{\infty})^{T}(\mathcal{D}-\mathcal{D}_{\infty}) \ge 0$ for all $\mathcal{D}\in \mathcal{C}^{\textup{dict}}$. This shows the assertion. 
\end{proof}

\begin{prop}\label{prop:non-stationary_nbh}
	Assume \ref{assumption:A1}-\ref{assumption:A3} hold. Suppose there exists a non-stationary limit point $\mathcal{D}_{\infty}$ of $\Lambda$. Then there exists $\eps>0$ such that the $\eps$-neighborhood $B_{\eps}(\mathcal{D}_{\infty}):=\{ \mathcal{D}\in \mathcal{C}^{\textup{dict}}\,|\, \lVert \mathcal{D}-\mathcal{D}_{\infty} \rVert_{F}<\eps\}$ with the following properties: 
	\begin{description}
		\item[(a)] $B_{\eps}(\mathcal{D}_{\infty})$ does not contain any stationary points of $\Lambda$. 
		
		\item[(b)] There exists infinitely many $\mathcal{D}_{t}$'s outside of $B_{\eps}(\mathcal{D}_{\infty})$.
	\end{description}
\end{prop}

\begin{proof}
	We will first show that there exists an $\eps$-neighborhood $B_{\eps}(\mathcal{D}_{\infty})$ of $\mathcal{D}_{\infty}$ that does not contain any long points of $\Lambda$. Suppose for contradiction that for each $\eps>0$, there exists a long point $\Lambda$ in $B_{\eps}(\mathcal{D}_{\infty})$. Then one can construct a sequence of long points converging to $\mathcal{D}_{\infty}$. But then by Proposition \ref{prop:long_points_stationary}, $\mathcal{D}_{\infty}$ is a stationary point, a contradiction.

	Next, we show that there exists $\eps>0$ such that $B_{\eps}(\mathcal{D}_{\infty})$ satisfies  \textbf{(a)}. Suppose for contradiction that there exists no such $\eps>0$. Then we have a sequence $(\mathcal{D}_{\infty;k})_{k\ge 1}$ of stationary points of $\Lambda$ that converges to $\mathcal{D}_{\infty}$. Denote the limiting surrogate loss function associated with $\mathcal{D}_{\infty;k}$ by $\hat{f}_{\infty;k}$. Recall that each $\hat{f}_{\infty;k}$ is parameterized by elements in a compact set (see \ref{assumption:A1}, Proposition \ref{prop:g_t_derivation}, and Lemma \ref{lem:loss_Lipschitz}) \commHL{in Appendix \ref{sec:auxiliary_lemmas}}. Hence by choosing a subsequence, we may assume that $\hat{f}_{\infty}:=\lim_{k\rightarrow \infty} \hat{f}_{\infty;k}$ is well-defined. Fix $\mathcal{D}\in \mathcal{C}^{\textup{dict}}$ and note that by Cauchy-Schwarz inequality,
	\begin{align}
		\nabla \hat{f}_{\infty}(\mathcal{D}_{\infty})^{T}(\mathcal{D}-\mathcal{D}_{\infty}) &\ge -\lVert \nabla\hat{f}_{\infty}(\mathcal{D}_{\infty}) -\nabla \hat{f}_{\infty;k}(\mathcal{D}_{\infty;k})\rVert_{F}  \cdot \lVert \mathcal{D}-\mathcal{D}_{\infty} \rVert_{F} \\
		&\qquad -  \lVert \nabla \hat{f}_{\infty;k}(\mathcal{D}_{\infty;k})\rVert_{F}  \cdot \lVert \mathcal{D}_{\infty}-\mathcal{D}_{\infty;k}  \rVert_{F} \\
		&\qquad \qquad + \nabla \hat{f}_{\infty;k}(\mathcal{D}_{\infty;k})^{T}(\mathcal{D}-\mathcal{D}_{\infty;k}).
	\end{align}
	Note that $\nabla \hat{f}_{\infty;k}(\mathcal{D}_{\infty;k})^{T}(\mathcal{D}-\mathcal{D}_{\infty;k})\ge 0$ since $\mathcal{D}_{\infty;k}$ is a stationary point of $\hat{f}_{\infty;k}$ over $\mathcal{C}^{\textup{dict}}$. Hence by taking $k\rightarrow \infty$, this shows $\nabla \hat{f}_{\infty}(\mathcal{D}_{\infty})^{T}(\mathcal{D}-\mathcal{D}_{\infty})\ge 0$. Since $\mathcal{D}\in \mathcal{D}^{\textup{dict}}$ was arbitrary, this shows that $\mathcal{D}_{\infty}$ is a stationary point of $\hat{f}_{\infty}$ over $\mathcal{C}^{\textup{dict}}$, a contradiction.

	Lastly, from the earlier results, we can choose $\eps>0$ such that $B_{\eps}(\mathcal{D}_{\infty})$ has no long points of $\Lambda$ and also satisfies  \textbf{(b)}.  We will show that $B_{\eps/2}(\mathcal{D}_{\infty})$ satisfies \textbf{(c)}. Then $B_{\eps/2}(\mathcal{D}_{\infty})$ satisfies  \textbf{(a)}-\textbf{(b)}, as desired. Suppose for contradiction there are only finitely many $\mathcal{D}_{t}$'s outside of $B_{\eps/2}(\mathcal{D}_{\infty})$. Then there exists an integer $M\ge 1$ such that $\mathcal{D}_{t}\in B_{\eps/2}(\mathcal{D}_{\infty})$ for all $t\ge M$. Then each $\mathcal{D}_{t}$ for $t\ge M$ is a short point of $\Lambda$. By definition, it follows that $\lVert \mathcal{D}_{t}-\mathcal{D}_{t}\lVert_{F} \ge w_{t}$ for all $t\ge M$, so $\sum_{t=1}^{\infty} \lVert \mathcal{D}_{t}-\mathcal{D}_{t}\lVert_{F}  \ge  \sum_{t=1}^{\infty} w_{t}=\infty$. Then by Proposition \ref{prop:stationary_conditions}, there exists a subsequence $(s_{k})_{k\ge 1}$ such that $\mathcal{D}_{\infty}':=\lim_{k\rightarrow \infty} \mathcal{D}_{t_{k}} $ exists and is stationary.  But since $\mathcal{D}'_{\infty}\in B_{\eps}(\mathcal{D})$, this  contradicts \textbf{(a)} for $B_{\eps}(\mathcal{D})$. This shows the assertion. 
\end{proof}

We are now ready to give a proof of Lemma  \ref{lem:main_analytic_lemma} \textbf{\textup{(iv)}}.

\vspace{0.1cm}
\begin{proof}[\textbf{Proof of Lemma \ref{lem:main_analytic_lemma} \textbf{\textup{(iv)}}}.]
	Assume \ref{assumption:A1}-\ref{assumption:A3} hold. Suppose there exists a non-stationary limit point $\mathcal{D}_{\infty}$ of $\Lambda$. By Proposition \ref{prop:non-stationary_nbh}, we may choose $\eps>0$ such that $B_{\eps}(\mathcal{D}_{\infty})$ satisfies the conditions \textbf{(a)}-\textbf{(b)} of Proposition \ref{prop:non-stationary_nbh}. Choose $M\ge 1$ large enough so that $c'w_{t}<\eps/4$ whenever $t\ge M$. We call an integer interval \commHL{$I:=[\ell,\ell')$} a \textit{crossing} if $\mathcal{D}_{\ell}\in B_{\eps/3}(\mathcal{D}_{\infty})$, $\mathcal{D}_{\ell'}\notin B_{2\eps/3}(\mathcal{D}_{\infty})$, and no proper subset of $I$ satisfies both of these conditions. By definition, two distinct crossings have empty intersection. Fix a crossing $I=[\ell,\ell')$. Then it follows that by triangle inequality,
	\begin{align}\label{eq:upcrossing_ineq}
		\sum_{t=\ell}^{\ell'-1} \lVert \mathcal{D}_{t+1}-\mathcal{D}_{t} \rVert_{F} \ge \lVert \mathcal{D}_{\ell'}-\mathcal{D}_{\ell} \rVert_{F} \ge \eps/3. 
	\end{align}
	Note that since $\mathcal{D}_{\infty}$ is a limit point of $\Lambda$, $\mathcal{D}_{t}$ visits $B_{\eps/3}(\mathcal{D}_{\infty})$ infinitely often. Moreover, by condition \textbf{(a)} of Proposition \ref{prop:non-stationary_nbh}, $\mathcal{D}_{t}$ also exits $B_{\eps}(\mathcal{D}_{\infty})$ infinitely often. It follows that there are infinitely many crossings.  Let $t_{k}$ denote the $k^{\textup{th}}$ smallest integer that appears in some crossing. By definition, $\mathcal{D}_{t_{k}}\in B_{2\eps/3}(\mathcal{D}_{\infty})$ for $k\ge 1$. Then $t_{k}\rightarrow \infty$ as $k\rightarrow \infty$, and by \eqref{eq:upcrossing_ineq}, 
	\begin{align}
		\sum_{k=1}^{\infty} \lVert \param_{t_{k}+1}-\param_{t_{k}} \rVert_{F} \ge (\text{$\#$ of crossings}) \, \frac{\eps}{3} = \infty. 
	\end{align}
	Then by Proposition \ref{prop:stationary_conditions}, there is a further subsequence $(s_{k})_{k\ge 1}$ of $(t_{k})_{k\ge 1}$ such that $\mathcal{D}_{\infty}':=\lim_{k\rightarrow \infty} \mathcal{D}_{s_{k}}$ exists and is stationary. \commHL{However, since $\mathcal{D}_{t_{k}}\in B_{2\eps/3}(\mathcal{D}_{\infty})$ for $k\ge 1$}, we have $\mathcal{D}_{\infty}'\in B_{\eps}(\mathcal{D}_{\infty})$. This contradicts the condition \textbf{(b)} of Proposition \ref{prop:non-stationary_nbh} for $B_{\eps}(\mathcal{D}_{\infty})$ that it cannot contain any stationary point of $\Lambda$. This shows the assertion. 
\end{proof}

\subsection{Proof of the main result}

Now we prove the main result in this paper, Theorem \ref{thm:online NTF_convergence}. 

\vspace{0.2cm}

\begin{proof}[\textbf{Proof of Theorem \ref{thm:online NTF_convergence}}.]
	Suppose \ref{assumption:A1}-\ref{assumption:A3}  hold. We first show \textbf{(i)}. Recall that $\E[\hat{f}_{t}(\mathcal{D}_{t})]$ converges by Lemma \ref{lemma:main_finite_sum}. Jensen's inequality and Lemma \ref{lem:main_analytic_lemma} \textbf{(iv)}  imply 
	\begin{align}
		\left|\E[h_{t+1}(\mathcal{D}_{t+1})] - \E[h_{t}(\mathcal{D}_{t})] \right| \le \E\left[ |h_{t+1}(\mathcal{D}_{t+1}) - h_{t}(\mathcal{D}_{t})| \right] =  O(w_{t+1}).
	\end{align}
	Since $\E[\hat{f}_{t}(\mathcal{D}_{t})]\ge \E[f_{t}(\mathcal{D}_{t})]$, Lemma \ref{lemma:main_finite_sum} \textbf{(ii)}-\textbf{(iii)} and Lemma \ref{lem:positive_convergence_lemma} \commHL{in Appendix \ref{sec:auxiliary_lemmas}} give 
	\begin{align}
		\lim_{t\rightarrow \infty} \E[f_{t}(\mathcal{D}_{t})] &= \lim_{t\rightarrow \infty} \E[\hat{f}_{t}(\mathcal{D}_{t})]  +  \lim_{t\rightarrow \infty} \left( \E[f_{t}(\mathcal{D}_{t})]-\E[\hat{f}_{t}(\mathcal{D}_{t})] \right)\\
		& = \lim_{t\rightarrow \infty} \E[\hat{f}_{t}(\mathcal{D}_{t})] \in (1,\infty).
	\end{align}
	This shows \textbf{(i)}. 
	
	Next, we show \textbf{(ii)}. Triangle inequality gives 
	\begin{align}\label{eq:thm_pf_triangle}
		|f(\mathcal{D}_{t}) - \hat{f}_{t}(\mathcal{D}_{t})| \le \left( \sup_{\mathcal{D}\in \mathcal{C}^{\textup{dict}}}| f(\mathcal{D})-f_{t}(\mathcal{D}) |\right) - h_{t}(\mathcal{D}_{t}).
	\end{align}
	Note that    $|h_{t+1}(\mathcal{D}_{t+1})-h_{t}(\mathcal{D}_{t})|= O(w_{t+1})$ by Lemma \ref{lem:main_analytic_lemma} \textbf{(iii)}. Hence Lemma \ref{lemma:main_finite_sum} \textbf{(iv)} and Lemma \ref{lem:positive_convergence_lemma} \commHL{in Appendix \ref{sec:auxiliary_lemmas}} show that $h_{t}(\mathcal{D}_{t})\rightarrow 0$ almost surely. Furthermore, \eqref{eq:thm_pf_triangle} and \commHL{Lemma \ref{lem:uniform_convergence_asymmetric_weights} in Appendix \ref{sec:auxiliary_lemmas}} show that $|f(\mathcal{D}_{t}) - \hat{f}_{t}(\mathcal{D}_{t})|\rightarrow 0$ almost surely. This completes the proof of \textbf{(ii)}.
	
	Lastly, we show \textbf{(iii)}. Further assume \ref{assumption:A4}. Let $\mathcal{D}_{\infty}\in \mathcal{C}^{\textup{dict}}$ be an arbitrary limit point of the sequence $(\mathcal{D}_{t})_{t\ge 1}$. Recall that $\Sigma_{t}:=(\mathcal{D}_{t}, A_{t},\commHL{\B}_{t},r_{t})_{t\ge 0}$ is bounded by Lemma \ref{lem:H_bdd}\commHL{(in Appendix \ref{sec:auxiliary_lemmas})} and \ref{assumption:A1} and \ref{assumption:A2}. Hence we may choose a random subsequence $(t_{k})_{k\ge 1}$ so that $\mathcal{D}_{t_{k}}\rightarrow \mathcal{D}_{\infty}$. By taking a further subsequence, we may also assume that $\Sigma_{t_{k}}$ converges to some random element $(\mathcal{D}_{\infty}, A_{\infty},\commHL{\B}_{\infty},r_{\infty})$ a.s. as $k\rightarrow \infty$. Then $\hat{f}_{\infty}:=\lim_{k\rightarrow\infty} \hat{f}_{t_{k}}$ exists almost surely. It is important to note that $\mathcal{D}_{\infty}$ is a stationary point of $\hat{f}_{\infty}$ over $\mathcal{C}^{\textup{dict}}$ by Lemma \ref{lem:main_analytic_lemma} \textbf{(iv)}.
	
	Recall that $\hat{f}_{t}(\mathcal{D}_{t}) - f_{t}(\mathcal{D}_{t})\rightarrow 0$ as $t\rightarrow \infty$ almost surely by part \textbf{(ii)}. %Note that if a random variable $X_{n}$ converges to zero in probability, then for any subsequence $X_{n(m)}$, there is a further subsequence such that $X_{n(m_{k})}$ that converges to zero almost surely. (This is a standard fact in probability using Borel-Cantelli lemma, see \cite[Thm. 2.3.2]{Durrett}). Hence we will take another further subsequence of $(t_{k})_{k\ge 1}$, which we will still denote the same, such that $\hat{f}_{t_{k}}(\mathcal{D}_{t_{k}}) - f_{t_{k}}(\mathcal{D}_{t_{k}})\rightarrow 0$ almost surely as $k\rightarrow \infty$. 
	By using continuity of $\hat{f}_{t}$, $f_{t}$, $f$ in parameters (see  \ref{assumption:A4}), it follows that 
	\begin{align}\label{eq:pf_thm3_as_zero}
		\left| \hat{f}_{\infty}(\mathcal{D}_{\infty}) - f(\mathcal{D}_{\infty}) \right| &= \lim_{k\rightarrow \infty} \left| \hat{f}_{t_{k}}(\mathcal{D}_{t_{k}}) -  f_{t_{k}}(\mathcal{D}_{t_{k}})\right| \\
		& \le \lim_{k\rightarrow \infty} \left( \sup_{\mathcal{D}\in \mathcal{C}^{\textup{dict}}} \left|  f - f_{t_{k}}(\mathcal{D})\right|  - h_{t_{k}}(\mathcal{D}_{t_{k}})  \right) = 0,
	\end{align}
	where the last equality also uses Lemma \ref{lem:uniform_convergence_asymmetric_weights} \commHL{in Appendix \ref{sec:auxiliary_lemmas}}.

	Fix $\eps>0$ and $\mathcal{D}\in \R^{I_{1}\times R}\times \dots \times \R^{I_{n}\times R}$. Hence, almost surely,
	\begin{align}
		\hat{f}_{\infty}(\mathcal{D}_{\infty}+\mathcal{D}) = \lim_{k\rightarrow \infty} \hat{f}_{s_{k}}(\mathcal{D}_{s_{k}}+\mathcal{D})\ge \lim_{k\rightarrow \infty} f_{s_{k}}(\mathcal{D}_{s_{k}}+\mathcal{D}) = f(\mathcal{D}_{\infty}+\mathcal{D}),
	\end{align}
	where the last equality follows from Lemma \ref{lem:uniform_convergence_asymmetric_weights}. Since $\nabla \hat{f}$ and $\nabla f$ are both Lipschitz (see \ref{assumption:A4} for the latter), by Lemma \ref{lem:surrogate_L_gradient} \commHL{in Appendix \ref{sec:auxiliary_lemmas}}, we have 
	\begin{align}
		\left| 	\hat{f}_{\infty}(\mathcal{D}_{\infty}+\eps \mathcal{D}) -  \hat{f}_{\infty}(\mathcal{D}_{\infty}) - \tr\left( \nabla \hat{f}_{\infty}(\mathcal{D}_{\infty})^{T}(\eps\mathcal{D}) \right) \right|  \le c_{1}\eps^{2}\lVert \mathcal{D} \rVert_{F}^{2} ,\\
		\left|	f(\mathcal{D}_{\infty}+\eps \mathcal{D}) - f(\mathcal{D}_{\infty}) -  \tr\left( \nabla f(\mathcal{D}_{\infty})^{T}(\eps \mathcal{D}) \right) \right|  \le  c_{1} \eps^{2}\lVert \mathcal{D} \rVert_{F}^{2},
	\end{align}
	for some constant $c_{1}>0$ for all $\eps>0$. Recall that $\hat{f}_{\infty}(\mathcal{D}_{\infty})=f(\mathcal{D}_{\infty})$ a.s. by \eqref{eq:pf_thm3_as_zero}. Hence it follows that there exists some constant $c_{2}>0$ such that almost surely 
	\begin{align}\label{eq:thm3_pf_main_inequality}
		\tr\left( \left( \nabla \hat{f}_{\infty}(\mathcal{D}_{\infty})-\nabla f(\mathcal{D}_{\infty})\right)^{T}(\eps\mathcal{D}) \right) \ge - c_{2}\eps^{2} \lVert \mathcal{D}\rVert_{F}^{2}.
	\end{align}
	After canceling out $\eps>0$ and letting $\eps\searrow 0$ in \eqref{eq:thm3_pf_main_inequality},
	\begin{align}
		\tr\left( \left( \nabla \hat{f}_{\infty}(\mathcal{D}_{\infty})-\nabla f(\mathcal{D}_{\infty})\right)^{T}\mathcal{D} \right)\ge 0 \qquad \text{a.s.}
	\end{align}
	Since this holds for all $\mathcal{D}\in \R^{I_{1}\times R}\dots \times \R^{I_{n}\times R}$, it follows that $\nabla \hat{f}_{\infty}(\mathcal{D}_{\infty})=\nabla f(\mathcal{D}_{\infty})$ almost surely. But since $\mathcal{D}_{\infty}$ is a stationary point of $\hat{f}_{\infty}$ over $\mathcal{C}^{\textup{dict}}$ by Lemma \ref{lem:main_analytic_lemma} \textbf{(iv)}, it follows that $\nabla \hat{f}_{\infty}(\mathcal{D}_{\infty})$ is in the normal cone of $\mathcal{C}^{\textup{dict}}$ at $\mathcal{D}_{\infty}$ (see., e.g., \cite{boyd2004convex}). The same holds for $\nabla f(\mathcal{D}_{\infty})$. This means that $\mathcal{D}_{\infty}$ is a stationary point of $f$ over $\mathcal{C}^{\textup{dict}}$. Since $\mathcal{D}_{\infty}$ is an arbitrary limit point of $\mathcal{D}_{t}$, the desired conclusion follows. 
\end{proof}

\section{Experimental validation} 
\label{section:experimanl_validation}

In this section, we compare the performance of our proposed online CPDL algorithm (Algorithm \ref{alg:online NTF_highlevel}) for the standard (offline) NCPD problem \eqref{eq:NTF_intro_4} against the two most popular algorithms of Alternating Least Squares (ALS), which is a special instance of Block Coordinate Descent, and Multiplicative Update (MU) (see \cite{shashua2005non}) for this task. See Algorithms \ref{algorithm:ALS} and \ref{algorithm:MU} for implementations of ALS and MU.

\commHL{ We give a more precise statement of the NCPD problem we consider here. Given a 3-mode data tensor $\mathbf{X}\in \R^{d_{1}\times d_{2}\times d_{3}}_{\ge 0}$ and an integer $R\ge 1$, we want to find three nonnegative factor matrices $U^{(k)}\in \R^{d_{k}\times R}_{\ge 0}$, $k=1,2,3$, that minimize the following CP-reconstruction error:
	\begin{align}\label{eq:NCPD_experiment}
		\min_{  [U^{(1)},U^{(2)},U^{(3)}] \in \mathcal{C}^{\textup{dict}}_{M} } \left\lVert	\mathbf{X} - \sum_{i=1}^{R} \bigotimes_{k=1}^{3} U^{(k)}(:,i) \right\rVert_{F},
	\end{align}
	where $\mathcal{C}^{\textup{dict}}_{M}$ is the subset of $\R^{d_{1}\times R}_{\ge 0} \times \R^{d_{2}\times R}_{\ge 0}\times \R^{d_{3}\times R}_{\ge 0}$ consisting of factor matrices of Frobenius norm bounded by a fixed constant $M\ge\sqrt{R} \lVert \mathbf{X} \rVert_{F}^{1/3}$. Note that the constraint set $\mathcal{C}^{\textup{dict}}_{M}$ is convex and compact, as required in \ref{assumption:A2} for Theorem \ref{thm:online NTF_convergence} to apply. We claim that the additional bounded norm constraint on the factor matrices does not lose any generality, in the sense that an optimal solution of \eqref{eq:NCPD_experiment} with $M\ge\sqrt{R} \lVert \mathbf{X} \rVert_{F}^{1/3}$ has the same objective value as the optimal solution of \eqref{eq:NCPD_experiment} with $M=\infty$:
	\begin{align}\label{eq:claim_NCPD_bdd_norm}
		\left( \textup{optimal value of \eqref{eq:NCPD_experiment} for $M\ge \sqrt{R} \lVert \mathbf{X} \rVert_{F}^{1/3}$} \right) = \left( \textup{optimal value of \eqref{eq:NCPD_experiment} for $M=\infty$}\right).
	\end{align}
	In order to maintain the flow, we justify this claim at the end of this section. }

%\commHL{In order to satisfy the compactness assumption of the constraint sets in \ref{assumption:A2} for Theorem \ref{thm:online NTF_convergence}, we impose that the entries of the factor matrices are bounded by some large constant $M$, in addition to the nonnegativity constraints. In practice, this additional bounded norm assumption makes almost no difference, as the norms of the factor matrices stay bounded during the course of Algorithm \ref{alg:online NTF_highlevel}. A rigorous way to impose the bounded norm assumption is to add $L_{1}$ or $L_{2}$ regularization terms (with arbitrarily small regularization constants) for the factor matrices to the objective function in \eqref{eq:NTF_intro_4}. In our experiments, this also did not make any meaningful difference.}

\commHL{We consider one synthetic and three real-world tensor data derived from text data and that were used for dynamic topic modeling experiments in \cite{kassab2021detecting}. Each document is encoded as a 5000 or 7000 dimensional word frequency vector using tf-idf vectorizer  \cite{rajaraman2011mining}.}
\begin{enumerate}
	\item $\mathbf{X}_{\text{synth}}\in \R_{\ge 0}^{100\times 100\times 100}$ is generated by $\mathbf{X}_{\text{synth}} = 0.01 * \Out(V_{1},V_{2},V_{3})$, where the loading matrices $V_{1},V_{2},V_{3}\in \R_{\ge 0}^{100\times 50}$ are generated by sampling each of their entries uniformly and independently from the unit interval $[0,1]$.
	\vspace{0.1cm}
	
	\item \commHL{$\mathbf{X}_{\text{20News}}\in \R_{\ge 0}^{40\times 5000\times 26}$ (41.6MB)
		is a tensor representing semi-synthetic text data based on 20 Newsgroups dataset \cite{20news} synthesized in \cite{kassab2021detecting} for dynamic topic modeling, consisting of 40 stacks of 26 documents encoded in 5000 dimensional word space.}
	
	\item $\mathbf{X}_{\textup{Twitter}}\in \R_{\ge 0}^{90\times 5000\times 1000}$ \commHL{(3.6GB)} is an anonymized Twitter text data related to the COVID-19 pandemic from Feb.\ 1 to May 1 of 2020. The three modes correspond to days, words, and tweets, in order. Each day, the top 1000 most retweeted English tweets are collected. The original data was collected in \cite{kassab2021detecting}.\footnote{For code repository, see \url{https://github.com/lara-kassab/dynamic-tensor-topic-modeling}}
	
	\vspace{0.1cm}
	\item $\mathbf{X}_{\textup{Headlines}}\in \R_{\ge 0}^{203\times 7000\times 700}$ \commHL{(8.0GB)} is a tensor derived in \cite{kassab2021detecting} from news headlines published over a period of 17 years sourced from the Australian news source ABC \cite{DVN/SYBGZL_2018}. The three modes correspond to months, words, and headlines, in order. In each month, 700 headlines are chosen uniformly at random.
\end{enumerate}

\begin{figure*}[h]
	\centering
	\includegraphics[width=1 \linewidth]{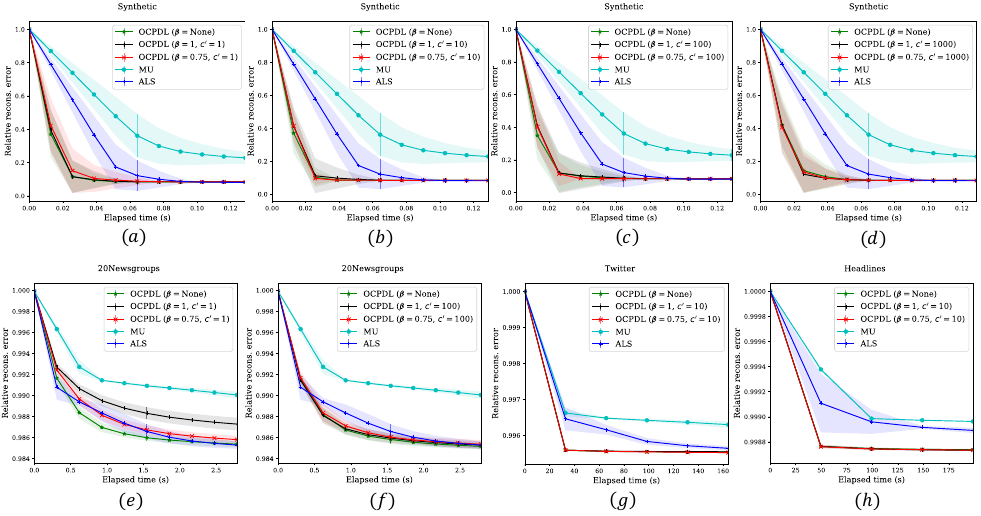}
	\vspace{-0.5cm}
	\caption{ Comparison of performance of online CPDL for the nonnegative tensor factorization problem against Alternating Least Squares (ALS) and Multiplicative Update (MU). For each data tensor, we apply each algorithm to find nonnegative loading matrices $U^{(1)},U^{(2)},U^{(3)}$ of $R=5$ columns. We repeat this multiple times (50 for synthetic, 20 for 20Newsgroups, and 10 for the other two) and the average reconstruction error with 1 standard deviation are shown by the solid lines and shaded regions of respective colors.  
	}
	\label{fig:benchmark}
\end{figure*}

For all datasets, we used all algorithms to learn the loading matrices $U^{(1)},U^{(2)},U^{(3)}$ with $R=5$ columns, that \commHL{evolve} in time as the algorithm proceeds. \commHL{The choice of $R=5$ is arbitrary and is not ideal especially for the real data tensors, but it suffices for the purpose of this experiment as a benchmark of our online CPDL against ALS and MU.} We plot the reconstruction error $\lVert \commHL{\mathbf{X}} - \Out(U^{(1)},U^{(2)},U^{(3)}) \rVert_{F}$ against elapsed time in both cases in Figure \ref{fig:benchmark}. Since these benchmark algorithms are also iterative (see Algorithms \ref{algorithm:ALS} and \ref{algorithm:MU}), we can measure how reconstruction error drops as the three algorithms proceed. In order to make a fair comparison, we compare the reconstruction error against CPU times with the same machine, \commHL{not against iteration counts, since a single iteration may have different computational costs across different algorithms. For ALS and MU, we disregarded the bounded norm constraint in \eqref{eq:NCPD_experiment},} which makes it only favorable to those benchmark methods so it is still a fair comparison of our method. 

We give some implementation details of online CPDL (Algorithm \ref{alg:online NTF_highlevel}) for the offline NCPD problem in \eqref{eq:NCPD_experiment}. From the given data tensor $\mathbf{X}$, we obtain a sequence of tensors $\mathbf{X}_{1},\dots,\mathbf{X}_{T}$ obtained by subsampling \commHL{$1/5$} of the coordinates from the last mode. Hence while ALS and MU require loading the entire tensors into memory, only \commHL{$1/5$} of the data needs to be loaded to execute online CPDL. In Figure \ref{fig:benchmark}, OCPDL ($\beta$) for for  $\beta\in \{ \commHL{0.75},\,1 \}$ denotes Algorithm \ref{alg:online NTF_highlevel} with weights $w_{t}=c't^{-\beta}/\log t$  (with $w_{1}=c'$), \commHL{where $0.75$ and $1$ for $\beta$ correspond to the two extreme values that satisfy the assumption \ref{assumption:A3} (see also \ref{assumption:A3'}) of Theorem \ref{thm:online NTF_convergence}}; \commHL{the case of $\beta=\textup{None}$ uses $c'=\infty$ and $w_{t}\equiv t^{-1}/(\log t)$ (with $w_{1}=1$). In all cases, the weights \commHL{satisfy \ref{assumption:A3} so the algorithm is guaranteed to converge to the stationary points of the objective function almost surely by Theorem \ref{thm:online NTF_convergence}.} The constant $c'$ is chosen from $\{1,10,100,1000\}$ for $\beta\in \{0.75, 1\}$. Initial loading matrices for Algorithm \ref{alg:online NTF_highlevel} are chosen with i.i.d. entries drawn from the uniform distribution on $[0,1]$. }

Note that since the last mode of the full tensors is subsampled, \commHL{the loading matrices we learn from online CPDL have sizes $(d_{1}\times R)$, $(d_{2}\times R)$, and $(d_{3}'\times R)$, where $d_{3}'< d_{3}$ equals the size of the last mode of the subsampled tensors. In order to compute the reconstruction error for the full $d_{1}\times d_{2}\times d_{3}$ tensor, we recompute the last factor matrix of size $d_{3}\times R$} by using the first two factor matrices with the sparse coding algorithm (Algorithm \ref{algorithm:spaser_coding}). This last step of computing a single loading matrix while fixing all the others is equivalent to a single step of ALS in Algorithm \ref{algorithm:ALS}.

\commHL{
	In Figure \ref{fig:benchmark}, each algorithm is used multiple times (50 for Synthetic, 20 for 20Newsgroups, and 10 for Twitter and Headlines)  for the same data, and the plot shows the average reconstruction errors together with their standard deviation (shading). In all cases except the smallest initial radius $c'=1$ on the densest tensor $\mathbf{X}_{\textup{20News}}$, online CPDL is able to obtain significantly lower reconstruction error much more rapidly than the other two algorithms and maintains low average reconstruction accuracy. }

\commHL{
	For $\mathbf{X}_{\textup{20News}}$ with $c'=1$ (Figure \ref{fig:benchmark} (e)), we observe some noticible difference in the performance of online CPDL depending on $\beta$, where larger values of $\beta$ (faster decaying radii) give slower convergence. This seems to be due to the fact that $\mathbf{X}_{\textup{20News}}$ is the densest among the four tensors by orders of magnitude and $c'=1$ gives too small of an initial radius. Namely, the average Frobeinus norm, $\lVert \mathbf{X} \rVert_{F}/(d_{1}d_{2}d_{3} )$ equals $6.26\times 10^{-5}$ for $\mathbf{X}_{\textup{Synthetic}}$, $1.10\times 10^{-3}$ for $\mathbf{X}_{\textup{20News}}$, $6.65\times 10^{-7}$ for $\mathbf{X}_{\textup{Twitter}}$, and $3.78\times 10^{-7}$ and $\mathbf{X}_{\textup{Headlines}}$. However, we did not observe any significant difference in all other cases. In general, when  $c'$ is large enough, it appears that the radius restriction in Algorithm \ref{alg:online NTF_highlevel} enables the theoretical convergence guarantee in Theorem \ref{thm:online NTF_convergence} without any compromise in practical performance, which did not depend significantly on the decay rate paramter $\beta$. In our experiments, $c'=\lVert \mathbf{X} \rVert_{F}$ was sufficiently large, where $\lVert \mathbf{X}_{\textup{Synthetic}} \rVert_{F}=62.61$,  $\lVert \mathbf{X}_{\textup{20News}} \rVert_{F}=32.74$, $\lVert \mathbf{X}_{\textup{Twitter}} \rVert_{F}=299.37$, and $\lVert \mathbf{X}_{\textup{Synthetic}} \rVert_{F}=376.38$. }

\commHL{
	\vspace{0.2cm}
	\begin{proof}[\textbf{Proof of claim \eqref{eq:claim_NCPD_bdd_norm}.}]
		Suppose $\mathcal{D}_{\infty}:=[U^{(1)},U^{(2)},U^{(3)}]$ is an optimal solution of \eqref{eq:NCPD_experiment} without norm restriction (i.e., $M=\infty$). Fix a column index $i\in \{1,\dots,R\}$ and positive scalars $\alpha_{1},\alpha_{2},\alpha_{3}$ such that $\alpha_{1}\alpha_{2}\alpha_{3} = 1$. The objective in \eqref{eq:NCPD_experiment} is invariant under rescaling the three respective columns: $U^{(k)}(:,i)\mapsto \alpha_{k} U^{(k)}(:,i)$ for $k\in \{1,2,3\}$. At the optimal factor matrices, the objective in \eqref{eq:NCPD_experiment} should be at most $\lVert \mathbf{X} \rVert_{F}$. Since all factors are nonnegative, it follows that 
		\begin{align}
			\prod_{k=1}^{3} \lVert \alpha_{k}  U^{(k)}(:,i) \rVert_{F} =  \left\lVert \bigotimes_{k=1}^{3} \alpha_{k} U^{(k)}(:,i) \right\rVert_{F}  \le \lVert \mathbf{X} \rVert_{F}.
		\end{align}
		Then we can choose $\alpha_{k}$'s in a way that $\lVert \alpha_{k}  U^{(k)}(:,i) \rVert_{F}$ is constant in $k$\footnote{e.g., $\alpha_{k}=a_{j}a_{l}/a_{k}^{2}$, where $a_{k}:=\lVert U^{(k)}(:,i) \rVert_{F}$ and $j,k,l\in \{1,2,3\}$ are distinct}, in which case $\lVert \alpha_{k}  U^{(k)}(:,i) \rVert_{F} \le \lVert \mathbf{X} \rVert_{F}^{1/3}$. This argument shows that we can rescale the $i$th columns of the optimal factor matrices in $\mathcal{D}_{\infty}$ in a way that the objective value does not change and the columns have norms bounded by $\lVert \mathbf{X} \rVert_{F}^{1/3}$. This holds for all columns $i$, so we can find a tuple of factor matrices $[V^{(1)},V^{(2)},V^{(3)}]$ in $\mathcal{C}^{\textup{dict}}_{M}$ that has the same objective value as $\mathcal{D}_{\infty}$ as long as $M\ge R \lVert \mathbf{X} \rVert_{F}^{1/3}$. 
	\end{proof}
}

\section{Applications}\label{sec:zebras}

For all our applications in this section, we take the constraint sets $\mathcal{C}^{\textup{code}}$ and $\mathcal{C}^{\textup{dict}}$ in Algorithm \ref{alg:online NTF_highlevel} to consists of \textit{nonnegative} matrices so that the learned CP-dictionary gives a ''parts-based representation" of the subject data as in classical NMF (see \cite{lee1999learning,lee2001algorithms,lee2009semi}). In all our experiments in this section, we used the balanced weight $w_{t}=1/t$, \commHL{which satisfies the assumption \ref{assumption:A3}.}

\begin{comment}
\subsection{Synthetic Data Experiments}
\dn{Are we removing this?} \HL{I think so, I will comment this out for now.}
We first illustrate the performance of our algorithm on synthetic data. Our experiment is as follows. For tensor data of various sizes, we construct a CP-dictionaries consisting of $R = 5$ atoms by choosing our factor matrices to have i.i.d. uniform(0, 1) entries. We then generate $T = 200$ data associated data matrices by taking linear combinations of this fixed, synthetic dictionary with coefficients also selected i.i.d. uniform(0, 1). We apply our online-NTF algorithm using this training data and in doing so recover a tensor dictionary. We then use the \textit{original} tensor dictionary to generate a new data sample in the same way as above. We then use nonnegative least squares to code our new data sample with the \textit{recovered} dictionary. The table below illustrates the resulting relative error and runtime, averaged over $N = 100$ trials. For details of the implementation of our algorithm, consult the appendix. 

\end{comment}

\subsection{Reshaping tensors before CP-decomposition to preserve joint features}
\label{subsection:reshaping}

\commHL{ Before we discuss our real-world applications of the online CPDL method, we first give some remarks on reshaping tensor data before factorization and why it would be useful in applications. }

\commHL{One may initially think that concatenating some modes of a tensor into a single mode before applying CP-decomposition loses joint features corresponding to the concatenated modes. In fact, if we \textit{undo the unfolding} after the decomposition, it actually \textit{preserves} the joint features. Hence in practice, one can exploit the tensor structure in multiple ways before CP-decomposition to disentangle a select set of features in the desired way, which we demonstrate through analyzing a diverse set of examples from image, video, and time-series in Section \ref{sec:zebras}. }

\commHL{To better illustrate our point, suppose we have three discrete random variables $X_{1},X_{2},X_{3}$, where $X_{i}$ takes $n_{i}$ distinct values for $1\le i \le 3$. Denote their $3$-dimensional joint distribution as a $3$-mode tensor $\mathbf{X}\in \R^{n_{1}\times n_{2}\times n_{3}}$. Suppose we have its CP-decomposition $\mathbf{X}\approx \Out(U^{(1)},U^{(2)}, U^{(3)})$. We can interpret this as the sum of $R$ product distributions of the marginal distributions given by the respective columns in the three factor matrices. In an extreme case of a $R=1$ CP-decomposition, any kind of joint features among multiple random variables will be lost in the single product distribution.  }

\commHL{On the other hand, consider combining the first two random variables $(X_{1},X_{2})$ into a single random variable, say, $Y_{1}$, which takes $n_{1}n_{2}$ distinct values. Then the joint distribution of $(Y_{1},X_{3})$ will be represented as a $2$-dimensional tensor $\mathbf{X}^{(12)}\in \R^{n_{1}n_{2}\times n_{3}}$, which corresponds to the tensor obtained by concatenating the first two modes of $\mathbf{X}$. Suppose we have its CP-decomposition $\mathbf{X}^{(12)}\approx \Out(V^{(1)},V^{(2)})$, where $V^{(1)}\in \R^{n_{1}n_{2}\times R}$ and $V^{(2)}\in \R^{n_{3}\times R}$. Then we can reshape each column $V^{(1)}(:,i)$ to a 2-dimensional tensor $V^{(1)}_{n_{1}\times n_{2}}(:,i)\in \R^{n_{1}\times n_{2}}$ by using the ordering of entries in $[n_{1}]\times [n_{2}]$ we used to concatenate $X_{1}$ and $X_{2}$ into $Y_{1}$. In this way, we have approximated the full joint distribution $\mathbf{X}$ as the sum of the product between two- and one-dimensional distributions $V^{(1)}_{n_{1}\times n_{2}}(:,i)\otimes V^{(2)}(:,i)$. In this factorization, the joint features of $X_{1}$ and $X_{2}$ can still be encoded in the 2-dimensional joint distributions $V^{(1)}_{n_{1}\times n_{2}}(:,i)$, and only the joint features between $(X_{1},X_{2})$ and $X_{3}$ are disentangled. }

\commHL{For instance, the tensor for the mouse brain activity video in Subsection \ref{subsection:brain_sim} has four modes, namely, $(\mathtt{time}, \mathtt{horizontal}, \mathtt{vertical},\mathtt{color})$. There is almost no change in the shape of the brain in the video and only the color changes indicate neuronal activation in time. Hence, we do not want to disentangle the horizontal, vertical, and color modes, but instead, concatenate them to maintain the joint feature of the spatial activation pattern (see Figure \ref{fig:brain}). See also Figures \ref{fig:gogh} and \ref{fig:weather} for the effect of various tensor reshaping before factorization in the context of image and time-series data. 
}

\subsection{Image processing applications}

We first apply our algorithm to patch-based image processing. A workflow for basic patch-based image analysis is to extract small overlapping patches from some large images, vectorize these patches, apply some standard dictionary learning algorithm, and reshape back. Dictionaries obtained from this general procedure have a wide variety of uses, including image compression, denoising, deblurring, and inpainting \cite{elad2010sparse, dong2011sparsity, papyan2017convolutional, ma2013dictionary}.

\begin{figure*}[h]
	\centering
	\includegraphics[width=1 \linewidth]{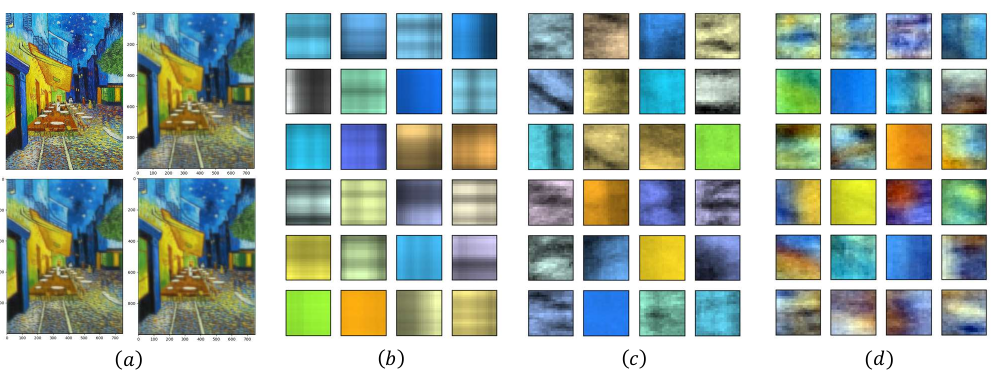}
	\vspace{-0.5cm}
	\caption{\commHL{Color image reconstruction by online CPDL. The original image is shown in the top left of (a). The top right reconstruction in (a) is derived from the dictionary learned from the unmodified tensor decomposition of color image patches, which
			is exemplified in (b). The bottom left reconstruction in (a) uses the dictionary in (c) learned by tensor decomposition of color image patches whose spatial modes are vectorized. The bottom right reconstruction in (a) uses the dictionary learned by a tensor decomposition of fully vectorized color image patches, which is shown in (d). }
	}
	\label{fig:gogh}
\end{figure*}

Although this procedure has produced countless state-of-the-art results, a major drawback to such methods is that vectorizing image patches can greatly slow down the learning process by increasing the effective dimension of the dictionary learning problem. Moreover, by respecting the natural tensor structure of the data, we find that our learned dictionary atoms display a qualitative difference from those trained on reshaped color image patch data. We illustrate this phenomenon in Figure \ref{fig:gogh}. Our experiment is as follows. Figure \ref{fig:gogh} (a) top left is a famous painting (Van Gogh's \textit{Caf\'{e} Terrace at Night}) from which we extracted $1000$ color patches of shape $(\mathtt{vertical}\times \mathtt{horizontal} \times \mathtt{color}) = (20 \times 20 \times 3)$. We applied our online CPDL algorithm (Algorithm \ref{alg:online NTF_highlevel} for 400 iterations with $\lambda=1$) to various reshapings of such patches to learn three separate dictionaries, each consisting of 24 atoms.

The first dictionary, displayed in Figure \ref{fig:gogh} (b), is obtained by applying online CPDL without reshaping the patches. Due to the rank-1 restriction on the atoms as a 3-mode tensor, the spatial features are parallel to the vertical or horizontal axes, and also color variation within each atom is only via scalar multiple (a.k.a. `saturation'). The second dictionary, Figure \ref{fig:gogh} (c), was trained by vectorizing the color image patches along the spatial axes, applying online CPDL to the resulting 2-mode data tensors of shape $(\mathtt{space}\times \mathtt{color} )= (400 \times 3)$ , and reshaping back. Here, the rank-1 restriction on the atoms as a 2-mode tensor separates the spatial and color features, but now the spatial features in the atoms are more `generic' as they do not have to be parallel to the vertical or horizontal axes. Note that the color variation within each atom is still via a scalar multiple. Lastly, the third dictionary, Figure \ref{fig:gogh} (d), is obtained by applying our online CPDL to the fully vectorized image patch data. Here the features in the atoms do not have any rank-1 restriction along with any mode so that they exhibit `fully entangled' spatial and color features. Although dictionary (b) requires much less storage, the reconstructed images from all three dictionaries shown in Figure \ref{fig:gogh} (a) show that it still performs adequately for the task of image reconstruction.

\subsection{Learning spatial and temporal activation patterns in cortex}
\label{subsection:brain_sim}

In this subsection, we demonstrate our method on video data of brain activity across a mouse cortex, and how our online CPDL learns dictionaries for the spatial and temporal activation patterns simultaneously. The original video is due to Barson et al. \cite{barson2020simultaneous} by using genetically encoded calcium indicators to image brain activity transcranially. \commHL{Simultaneous cellular-resolution two-photon calcium imaging of a local microcircuit as well as mesoscopic widefield calcium imaging of the entire cortical mantle in awake mice are used to capture the video (see \cite{barson2020simultaneous} for more details.) }

\begin{figure*}[h]
	\centering
	\includegraphics[width=1 \linewidth]{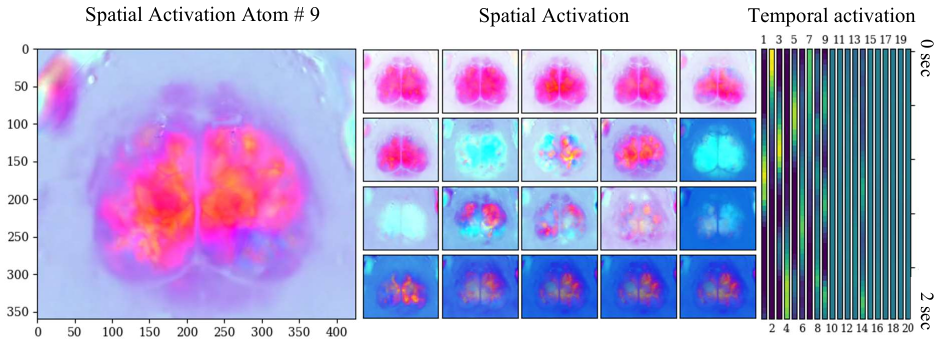}
	\vspace{-0.5cm}
	\caption{Learning 20 CP-dictionary \commHL{atoms} from video frames on brain activity across the mouse cortex. %Original video frame is a tensor of shape $(\text{time}, \text{width}, \text{height},\text{color})=(1501, 360, 426, 3)$, where two consecutive frames are 0.04 sec apart. A 50-frame (2 sec. long) clip is randomly sampled, reshaped into $(\text{time}, \text{(space, color)})=(50, 360*426*3)$ matrix, and then fed into the online CPDL algorithm for 100 iterations with $\lambda=1$. The learned CP-dictionary in space-color mode shows spatial activation patterns and the corresponding time mode shows their temporal activation pattern (white=0 and black=1). For instance, the spacial activation atom \# 16 shown in left is activated at the beginning and at the end of its corresponding temporal activation atom in the right, so such activation pattern has approximate frequency of 2 sec.       
		%\commHL{Shared colorbar for temporal activation}
	}
	\label{fig:brain}
\end{figure*}

%In order to learn periodic activation patterns occurring within 2-seconds, we applied Algorithm \ref{alg:online NTF_highlevel} with $w_{t}=1/t$, $\lambda=2$, and $c'=10^{5}$ for 200 random samples of 2-second-long clips, reshaped into 2-way tensors where one mode combines space and color modes and the other mode is time. Due to the nonnegativity constraint, spatial activation atoms representing localized activation regions in the cortex are learned, while the darker ones represent the background brain shape without activation. 

The original video frame is a tensor of shape $((1501, 360, 426, 3)$ corresponding to the four modes $(\mathtt{time}, \mathtt{horizontal}, \mathtt{vertical},\mathtt{color})$, where frames are 0.04 sec apart, which spans total $60.04$ seconds. We intend to learn weakly periodic patterns of spatial and temporal activation patterns of duration at most 2 seconds. To this end, we sample 50-frame (2 sec. long) clips uniformly at random for 200 times. Each sampled tensor is reshaped into $(\mathtt{time}, \, \mathtt{space * color})=(50, \,\,360*426*3)$ matrix, and then sequentially fed into the online CPDL algorithm with $w_{t}=c'/t$, $\lambda=2$, and $c'=10^{5}$.  Note that we vectorize the horizontal, vertical and color modes into a single mode before factorization in order to \textit{preserve} the spatial structure learned in the loading matrix. Namely, for spatial activation patterns, we desire dictionary atoms of the form of Figure \ref{fig:gogh} (d) rather than (b) or (c).

%The learned CP-dictionary in space-color mode shows spatial activation patterns and the corresponding time mode shows their temporal activation pattern (white=0 and black=1). For instance, the spacial activation atom \# 16 shown on left is activated at the beginning and at the end of its corresponding temporal activation atom on the right, so such activation pattern has an approximate frequency of 2 sec.  
%Short 50-frame clips (2 seconds long) are randomly sampled and fed into the online CPDL algorithm. We reshaped the sampled clips so that space and color axes are unfolded into one while keeping the time axis (of length 50) separate. 
Our algorithm learns a CP-dictionary in the space-color mode that shows spatial activation patterns and the corresponding time mode shows their temporal activation pattern, as seen in Figure \ref{fig:brain}. Due to the nonnegativity constraint, spatial activation atoms representing localized activation regions in the cortex are learned, while the darker ones represent the background brain shape without activation. On the other hand, the activation frequency is simultaneously learned by the temporal activation atoms shown in Figure \ref{fig:brain} (right). For instance, the spacial activation atom \# 9 (numbered lexicographically) activates three times in its corresponding temporal activation atom in the right, so such activation pattern has an approximate \commHL{period} of 2/3 sec.  %%% Made change here for revision  -HL 

\subsection{Joint time series dictionary learning}

A key advantage of online algorithms is that they are well-suited to applications in which data are arriving in real-time. We apply our algorithm to a weather dataset obtained from \cite{Beniaguev2017historical}. Beginning with a $(36 \times 2998 \times 4)$ tensor where the first mode corresponds to cities, the second mode to time in hours, and the third mode to weather data such that the frontal slices correspond to temperature, humidity, pressure, and wind speed. We regularized the data by taking a moving average over up to four hours (in part to impute missing data values), and by applying a separate rescaling of each frontal slice to normalize the magnitudes of the entries. %so that the data entries were all of the order of magnitude 1.

\begin{figure*}[h]
	\centering
	\includegraphics[width=1 \linewidth]{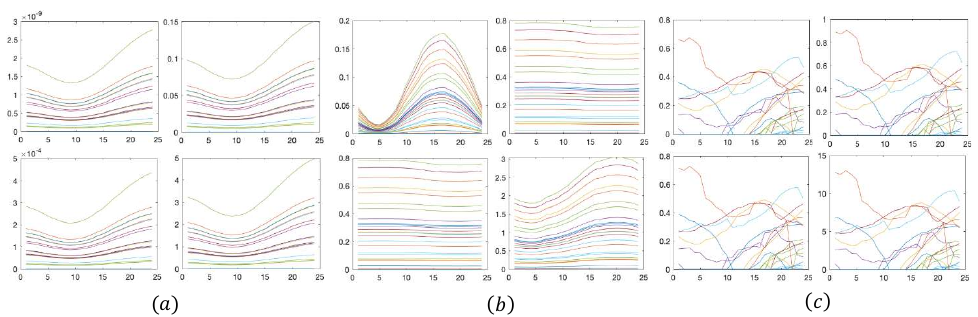}
	\vspace{-0.7cm}
	\caption{Display of one atom from three different dictionaries of $25$ atoms which were obtained from online CPDL on weather data: (a) no reshaping, (b) data which was reshaped to $36 \times (24 \times 4)$, and (c) data which was reshaped to $(36 \times 24) \times 4$. \commHL{For each subplot, the four subplots  represent the evolution of four measurements (temperature (top left), humidity (bottom left), pressure (top right), and wind speed (bottom right)) in time for 24 hours (horizontal axis) in 36 cities (in different colors).}
	}
	\label{fig:weather}
\end{figure*}

\commHL{
	From this large data tensor, we sequentially extracted smaller $(36 \times 24 \times 4)=(\texttt{cities}\times \texttt{time}\times \texttt{measurements})$ tensors  by dividing time into overlapping segments of length 24 hours, with overlap size 4 hours. Our experiment consisted of applying the online CPDL (Algorithm \ref{alg:online NTF_highlevel}) to this dataset to learn a single CP-dictionary atom ($R=1$), say $\mathbf{D}\in \R^{36\times 24\times 4}$, with three different reshaping schemes to preprocess the input tensors of shape $(36 \times 24 \times 4)$: no reshaping $(\texttt{cities}\times \texttt{time}\times \texttt{measurements})$ (Figure \ref{fig:weather} (a));  concatenating time and measurements $(\texttt{cities}\times (\texttt{time} * \texttt{measurements}))$ (Figure \ref{fig:weather} (b)); and concatenating cities and time  $( (\texttt{cities}* \texttt{time} ) \times  \texttt{measurements})$ (Figure \ref{fig:weather} (c)).  (See Subsection \ref{subsection:reshaping} for a discussion on reshaping and CP-dictionary learning). Roughly speaking, the single CP-dictionary atom we learn is a 3-mode tensor of shape $(36 \times 24 \times 4)$ that best approximates the evolution of four weather measurements during a randomly chosen 24-hour period from the original 2998-hour-long data subject to different constraints on the three modes depending how we reshape the input tensors. }

\commHL{In Figure \ref{fig:weather}, for each atom, the top left corner represents the first frontal slice (temperature), the bottom left the second frontal slice (humidity), the top right the third frontal slice (pressure), and the bottom right the fourth frontal slice (wind speed). The horizontal axis corresponds to time (in hours), each individual time series to a ``row" in the first mode, and the vertical axis to the value of the corresponding entry in the CP-dictionary atom. }

\commHL{We emphasize the qualitative difference in the corresponding learned dictionaries. In the first example in Figure \ref{fig:weather} (a), the CP-constraint is applied between all modes, so the single CP-dictionary atom $\mathbf{D}\in \R^{36\times 24\times 4}$ is given by the outer product of three marginal vectors, one for each of the three mode (analogous to Figure \ref{fig:gogh} (a)). Namely, let $\mathbf{D}=u_{1}\otimes u_{2}\otimes u_{3}$, where $u_{1}\in \R^{36}$, $u_{2}\in \R^{24}$, and $u_{3}\in \R^{4}$. Then $u_{2}$ represents a 24-hour long time series, and for example, the humidity of the first city is approximated by $(u_{1}(1) u_{3}(2))u_{2}$. This makes the variability of the time-series across different cities and measurements restrictive, as shown in Figure \ref{fig:weather} (a). }

\commHL{Next, in the second example in Figure \ref{fig:weather} (b), the CP-constraint is applied only between cities and the other two modes combined, so the single CP-dictionary atom $\mathbf{D}\in \R^{36\times 24\times 4}$ is given by $\mathbf{D}=u_{1}\otimes U_{23}$, where $u_{1}\in \R^{36}$ and $U_{23}\in \R^{24\times 4}$. Thus, for each city, the 24-hour evolution of the four measurements need not be some scalar multiple of a single time evolution vector as before, as we can use the full $24\times 4$ entries in $U_{23}$ to encode such information. On the other hand, the variability of the joint 24-hour evolution of the four measurements across the cities should only be given by a scalar multiple, as we can observe in Figure \ref{fig:weather} (b). }

\commHL{Lastly, in the third example in Figure \ref{fig:weather} (c), the CP-constraint is applied only between the measurements (last mode) and the other two modes combined, so the single CP-dictionary atom $\mathbf{D}\in \R^{36\times 24\times 4}$ is given by $\mathbf{D}=U_{12}\otimes u_{3}$, where $U_{12}\in \R^{36\times 24}$ and $u_{3}\in \R^{4}$. Thus, the 24-hour evolution of a latent measurement of the 36 cities can be encoded by the $36\times 24$ matrix $U_{12}$ without rank restriction. For each of the four measurements, this joint evolution pattern encoded in $U_{12}$ is multiplied by a scalar. For example, the temperature evolution across 36 cities is modeled by $u_{3}(1)U_{12}$. %It is more difficult to discern any clear patterns in this case.
}

\section*{Acknowledgement}

HL is partially supported by NSF  DMS-2010035. CS and DN are grateful to and were partially supported by NSF BIGDATA $\#1740325$, NSF DMS $\#2011140$ and NSF DMS $\#2108479$.

%\section{Conclusion and Further Directions}

%In this paper, we have proposed the first nonnegative tensor factorization algorithm. We presented numerous applications in a wide variety of fields, including image, video, audio, processing, as well as time series prediction. Moreover, our applications demonstrate the qualitative differences between online and offline-learned dictionaries as well as tensor dictionaries learned by ``genuine" tensor algorithms as opposed to tensor dictionaries which exploit matricization unnecessarily. 

\vspace{0.3cm}

\small{
	\bibliographystyle{amsalpha}
	\bibliography{mybib}

\newcommand{\etalchar}[1]{$^{#1}$}
\providecommand{\bysame}{\leavevmode\hbox to3em{\hrulefill}\thinspace}
\providecommand{\MR}{\relax\ifhmode\unskip\space\fi MR }
% \MRhref is called by the amsart/book/proc definition of \MR.
\providecommand{\MRhref}[2]{%
  \href{http://www.ams.org/mathscinet-getitem?mr=#1}{#2}
}
\providecommand{\href}[2]{#2}
\begin{thebibliography}{AGMM15}

\bibitem[AGJ15]{anandkumar2015learning}
Animashree Anandkumar, Rong Ge, and Majid Janzamin, \emph{Learning overcomplete
  latent variable models through tensor methods}, Conference on Learning
  Theory, 2015, pp.~36--112.

\bibitem[AGM14]{arora2014new}
Sanjeev Arora, Rong Ge, and Ankur Moitra, \emph{New algorithms for learning
  incoherent and overcomplete dictionaries}, Conference on Learning Theory,
  2014, pp.~779--806.

\bibitem[AGMM15]{arora2015simple}
Sanjeev Arora, Rong Ge, Tengyu Ma, and Ankur Moitra, \emph{Simple, efficient,
  and neural algorithms for sparse coding}, Conference on learning theory,
  PMLR, 2015, pp.~113--149.

\bibitem[APM19]{agarwal2019online}
Abhishek Agarwal, Jianhao Peng, and Olgica Milenkovic, \emph{Online convex
  dictionary learning}, Advances in Neural Information Processing Systems,
  2019, pp.~13242--13252.

\bibitem[BB05]{berry2005email}
Michael~W Berry and Murray Browne, \emph{Email surveillance using non-negative
  matrix factorization}, Computational $\&$ Mathematical Organization Theory
  \textbf{11} (2005), no.~3, 249--264.

\bibitem[BBK18]{battaglino2018practical}
Casey Battaglino, Grey Ballard, and Tamara~G Kolda, \emph{A practical
  randomized cp tensor decomposition}, SIAM Journal on Matrix Analysis and
  Applications \textbf{39} (2018), no.~2, 876--901.

\bibitem[BBL{\etalchar{+}}07]{berry2007algorithms}
Michael~W Berry, Murray Browne, Amy~N Langville, V~Paul Pauca, and Robert~J
  Plemmons, \emph{Algorithms and applications for approximate nonnegative
  matrix factorization}, Computational statistics $\&$ data analysis
  \textbf{52} (2007), no.~1, 155--173.

\bibitem[BBV04]{boyd2004convex}
Stephen Boyd, Stephen~P Boyd, and Lieven Vandenberghe, \emph{Convex
  optimization}, Cambridge university press, 2004.

\bibitem[Ben17]{Beniaguev2017historical}
David Beniaguev, \emph{Historical hourly weather data 2012-2017, version 2},
  \\https://www.kaggle.com/selfishgene/historical-hourly-weather-data (2017).

\bibitem[Ber97]{bertsekas1997nonlinear}
Dimitri~P Bertsekas, \emph{Nonlinear programming}, Journal of the Operational
  Research Society \textbf{48} (1997), no.~3, 334--334.

\bibitem[Ber99]{bertsekas1999nonlinear}
\bysame, \emph{Nonlinear programming}, Athena scientific Belmont, 1999.

\bibitem[BGJM11]{brooks2011handbook}
Steve Brooks, Andrew Gelman, Galin Jones, and Xiao-Li Meng, \emph{Handbook of
  markov chain monte carlo}, CRC press, 2011.

\bibitem[BHS{\etalchar{+}}20]{barson2020simultaneous}
Daniel Barson, Ali~S Hamodi, Xilin Shen, Gyorgy Lur, R~Todd Constable,
  Jessica~A Cardin, Michael~C Crair, and Michael~J Higley, \emph{Simultaneous
  mesoscopic and two-photon imaging of neuronal activity in cortical circuits},
  Nature methods \textbf{17} (2020), no.~1, 107--113.

\bibitem[BKS15]{barak2015dictionary}
Boaz Barak, Jonathan~A Kelner, and David Steurer, \emph{Dictionary learning and
  tensor decomposition via the sum-of-squares method}, Proceedings of the
  forty-seventh annual ACM symposium on Theory of computing, 2015,
  pp.~143--151.

\bibitem[BMB{\etalchar{+}}15]{boutchko2015clustering}
Rostyslav Boutchko, Debasis Mitra, Suzanne~L Baker, William~J Jagust, and
  Grant~T Gullberg, \emph{Clustering-initiated factor analysis application for
  tissue classification in dynamic brain positron emission tomography}, Journal
  of Cerebral Blood Flow $\&$ Metabolism \textbf{35} (2015), no.~7, 1104--1111.

\bibitem[CC70]{carroll1970analysis}
J~Douglas Carroll and Jih-Jie Chang, \emph{Analysis of individual differences
  in multidimensional scaling via an $n$-way generalization of
  ``\textup{E}ckart-\textup{Y}oung'' decomposition}, Psychometrika \textbf{35}
  (1970), no.~3, 283--319.

\bibitem[CWS{\etalchar{+}}11]{chen2011phoenix}
Yang Chen, Xiao Wang, Cong Shi, Eng~Keong Lua, Xiaoming Fu, Beixing Deng, and
  Xing Li, \emph{Phoenix: A weight-based network coordinate system using matrix
  factorization}, IEEE Transactions on Network and Service Management
  \textbf{8} (2011), no.~4, 334--347.

\bibitem[DLZS11]{dong2011sparsity}
Weisheng Dong, Xin Li, Lei Zhang, and Guangming Shi, \emph{Sparsity-based image
  denoising via dictionary learning and structural clustering}, Conference on
  Computer Vision and Pattern Recognition, IEEE, 2011, pp.~457--464.

\bibitem[DZLZ18]{du2018probabilistic}
Yishuai Du, Yimin Zheng, Kuang-chih Lee, and Shandian Zhe, \emph{Probabilistic
  streaming tensor decomposition}, 2018 IEEE International Conference on Data
  Mining, IEEE, 2018, pp.~99--108.

\bibitem[EA06]{elad2006image}
Michael Elad and Michal Aharon, \emph{Image denoising via sparse and redundant
  representations over learned dictionaries}, IEEE Transactions on Image
  processing \textbf{15} (2006), no.~12, 3736--3745.

\bibitem[EAH99]{engan1999frame}
Kjersti Engan, Sven~Ole Aase, and John~Hakon Husoy, \emph{Frame based signal
  compression using method of optimal directions (mod)}, Proceedings of the
  IEEE International Symposium on Circuits and Systems VLSI, vol.~4, IEEE,
  1999, pp.~1--4.

\bibitem[EHJT04]{efron2004least}
Bradley Efron, Trevor Hastie, Iain Johnstone, and Robert Tibshirani,
  \emph{Least angle regression}, The Annals of statistics \textbf{32} (2004),
  no.~2, 407--499.

\bibitem[Ela10]{elad2010sparse}
Michael Elad, \emph{Sparse and redundant representations: from theory to
  applications in signal and image processing}, Springer Science $\&$ Business
  Media, 2010.

\bibitem[Fis65]{fisk1965quasi}
Donald~L Fisk, \emph{Quasi-martingales}, Transactions of the American
  Mathematical Society \textbf{120} (1965), no.~3, 369--389.

\bibitem[GBB{\etalchar{+}}15]{goulart2015tensor}
Jos{\'e} Henrique de~M Goulart, Maxime Boizard, R{\'e}my Boyer, G{\'e}rard
  Favier, and Pierre Comon, \emph{Tensor cp decomposition with structured
  factor matrices: Algorithms and performance}, IEEE Journal of Selected Topics
  in Signal Processing \textbf{10} (2015), no.~4, 757--769.

\bibitem[GS99]{grippof1999globally}
Luigi Grippof and Marco Sciandrone, \emph{Globally convergent block-coordinate
  techniques for unconstrained optimization}, Optimization methods and software
  \textbf{10} (1999), no.~4, 587--637.

\bibitem[GS00]{grippo2000convergence}
Luigi Grippo and Marco Sciandrone, \emph{On the convergence of the block
  nonlinear gauss--seidel method under convex constraints}, Operations research
  letters \textbf{26} (2000), no.~3, 127--136.

\bibitem[GSSB17]{ghassemi2017stark}
Mohsen Ghassemi, Zahra Shakeri, Anand~D Sarwate, and Waheed~U Bajwa,
  \emph{Stark: Structured dictionary learning through rank-one tensor
  recovery}, Proceedings of the IEEE International Workshop on Computational
  Advances in Multi-Sensor Adaptive Processing, IEEE, 2017, pp.~1--5.

\bibitem[GSSB19]{ghassemi2019learning}
\bysame, \emph{Learning mixtures of separable dictionaries for tensor data:
  Analysis and algorithms}, IEEE Transactions on Signal Processing \textbf{68}
  (2019), 33--48.

\bibitem[GTLY12]{guan2012online}
Naiyang Guan, Dacheng Tao, Zhigang Luo, and Bo~Yuan, \emph{Online nonnegative
  matrix factorization with robust stochastic approximation}, IEEE Transactions
  on Neural Networks and Learning Systems \textbf{23} (2012), no.~7,
  1087--1099.

\bibitem[Har70]{harshman1970foundations}
Richard~A Harshman, \emph{Foundations of the parafac procedure: Models and
  conditions for an ``explanatory'' multimodal factor analysis}.

\bibitem[HNHA15]{huang2015online}
Furong Huang, UN~Niranjan, Mohammad~Umar Hakeem, and Animashree Anandkumar,
  \emph{Online tensor methods for learning latent variable models}, The Journal
  of Machine Learning Research \textbf{16} (2015), no.~1, 2797--2835.

\bibitem[KB09]{kolda2009tensor}
Tamara~G Kolda and Brett~W Bader, \emph{Tensor decompositions and
  applications}, SIAM review \textbf{51} (2009), no.~3, 455--500.

\bibitem[KKL{\etalchar{+}}21]{kassab2021detecting}
Lara Kassab, Alona Kryshchenko, Hanbaek Lyu, Denali Molitor, Deanna Needell,
  and Elizaveta Rebrova, \emph{Detecting short-lasting topics using nonnegative
  tensor decomposition (preprint)}, arXiv preprint arXiv:2010.01600 (2021).

\bibitem[Kul18]{DVN/SYBGZL_2018}
Rohit Kulkarni, \emph{{A Million News Headlines}}, 2018.

\bibitem[KWSR17]{koppel2017d4l}
Alec Koppel, Garrett Warnell, Ethan Stump, and Alejandro Ribeiro, \emph{D4l:
  Decentralized dynamic discriminative dictionary learning}, IEEE Transactions
  on Signal and Information Processing over Networks \textbf{3} (2017), no.~4,
  728--743.

\bibitem[LBRN07]{lee2007efficient}
Honglak Lee, Alexis Battle, Rajat Raina, and Andrew~Y Ng, \emph{Efficient
  sparse coding algorithms}, Advances in Neural Information Processing Systems,
  2007, pp.~801--808.

\bibitem[LHK05]{lee2005acquiring}
Kuang-Chih Lee, Jeffrey Ho, and David~J Kriegman, \emph{Acquiring linear
  subspaces for face recognition under variable lighting}, IEEE Transactions on
  pattern analysis and machine intelligence \textbf{27} (2005), no.~5,
  684--698.

\bibitem[LKVP21]{lyu2021learning}
Hanbaek Lyu, Yacoub~H Kureh, Joshua Vendrow, and Mason~A Porter, \emph{Learning
  low-rank latent mesoscale structures in networks}, arXiv preprint
  arXiv:2102.06984 (2021).

\bibitem[LMNS20]{lyu2020applications}
Hanbaek Lyu, Georg Menz, Deanna Needell, and Christopher Strohmeier,
  \emph{Applications of online nonnegative matrix factorization to image and
  time-series data}, Information Theory and Applications Workshop, IEEE, 2020,
  pp.~1--9.

\bibitem[LMS19]{lyu2019sampling}
H~Lyu, F~Memoli, and D~Sivakoff, \emph{Sampling random graph homomorphisms and
  applications to network data analysis}, arXiv:1910.09483 (2019).

\bibitem[LNB20]{lyu2020online}
Hanbaek Lyu, Deanna Needell, and Laura Balzano, \emph{Online matrix
  factorization for markovian data and applications to network dictionary
  learning}, Journal of Machine Learning Research \textbf{21} (2020), no.~251,
  1--49.

\bibitem[LP17]{levin2017markov}
David~A Levin and Yuval Peres, \emph{Markov chains and mixing times}, vol. 107,
  American Mathematical Soc., 2017.

\bibitem[LS99]{lee1999learning}
Daniel~D Lee and H~Sebastian Seung, \emph{Learning the parts of objects by
  non-negative matrix factorization}, Nature \textbf{401} (1999), no.~6755,
  788.

\bibitem[LS00]{lewicki2000learning}
Michael~S Lewicki and Terrence~J Sejnowski, \emph{Learning overcomplete
  representations}, Neural computation \textbf{12} (2000), no.~2, 337--365.

\bibitem[LS01]{lee2001algorithms}
Daniel~D Lee and H~Sebastian Seung, \emph{Algorithms for non-negative matrix
  factorization}, Advances in Neural Information Processing Systems, 2001,
  pp.~556--562.

\bibitem[LYC09]{lee2009semi}
Hyekyoung Lee, Jiho Yoo, and Seungjin Choi, \emph{Semi-supervised nonnegative
  matrix factorization}, IEEE Signal Processing Letters \textbf{17} (2009),
  no.~1, 4--7.

\bibitem[Lyu20]{lyu2020convergence}
Hanbaek Lyu, \emph{Convergence of block coordinate descent with diminishing
  radius for nonconvex optimization}, arXiv preprint arXiv:2012.03503 (2020).

\bibitem[Mai13a]{mairal2013optimization}
Julien Mairal, \emph{Optimization with first-order surrogate functions},
  International Conference on Machine Learning, PMLR, 2013, pp.~783--791.

\bibitem[Mai13b]{mairal2013stochastic}
\bysame, \emph{Stochastic majorization-minimization algorithms for large-scale
  optimization}, Advances in Neural Information Processing Systems, 2013,
  pp.~2283--2291.

\bibitem[MBPS10]{mairal2010online}
Julien Mairal, Francis Bach, Jean Ponce, and Guillermo Sapiro, \emph{Online
  learning for matrix factorization and sparse coding}, Journal of Machine
  Learning Research \textbf{11} (2010), no.~1, 19--60.

\bibitem[MMTV17]{mensch2017stochastic}
Arthur Mensch, Julien Mairal, Bertrand Thirion, and Ga{\"e}l Varoquaux,
  \emph{Stochastic subsampling for factorizing huge matrices}, IEEE
  Transactions on Signal Processing \textbf{66} (2017), no.~1, 113--128.

\bibitem[MMYZ13]{ma2013dictionary}
Liyan Ma, Lionel Moisan, Jian Yu, and Tieyong Zeng, \emph{A dictionary learning
  approach for poisson image deblurring}, IEEE Transactions on medical imaging
  \textbf{32} (2013), no.~7, 1277--1289.

\bibitem[MSS16]{ma2016polynomial}
Tengyu Ma, Jonathan Shi, and David Steurer, \emph{Polynomial-time tensor
  decompositions with sum-of-squares}, IEEE 57th Annual Symposium on
  Foundations of Computer Science, IEEE, 2016, pp.~438--446.

\bibitem[MT12]{meyn2012markov}
Sean~P Meyn and Richard~L Tweedie, \emph{Markov chains and stochastic
  stability}, Springer Science $\&$ Business Media, 2012.

\bibitem[MYW18]{ma2018randomized}
Congbo Ma, Xiaowei Yang, and Hu~Wang, \emph{Randomized online \textup{CP}
  decomposition}, Proceedings of International Conference on Advanced
  Computational Intelligence, IEEE, 2018, pp.~414--419.

\bibitem[Nes98]{nesterov1998introductory}
Yurii Nesterov, \emph{Introductory lectures on convex programming volume i:
  Basic course}, Lecture notes \textbf{3} (1998), no.~4, 5.

\bibitem[OF97]{olshausen1997sparse}
Bruno~A Olshausen and David~J Field, \emph{Sparse coding with an overcomplete
  basis set: A strategy employed by \textup{V}1?}, Vision research \textbf{37}
  (1997), no.~23, 3311--3325.

\bibitem[PRSE17]{papyan2017convolutional}
Vardan Papyan, Yaniv Romano, Jeremias Sulam, and Michael Elad,
  \emph{Convolutional dictionary learning via local processing}, Proceedings of
  the IEEE International Conference on Computer Vision, 2017, pp.~5296--5304.

\bibitem[Rao69]{rao1969quasi}
K~Murali Rao, \emph{Quasi-martingales}, Mathematica Scandinavica \textbf{24}
  (1969), no.~1, 79--92.

\bibitem[Ren08]{20news}
J.~Rennie, \emph{20 {N}ewsgroups}, 2008.

\bibitem[RLH19]{rambhatla2019noodl}
Sirisha Rambhatla, Xingguo Li, and Jarvis Haupt, \emph{Noodl: Provable online
  dictionary learning and sparse coding}, In 7th International Conference on
  Learning Representations (2019).

\bibitem[RLH20]{rambhatla2020provable}
\bysame, \emph{Provable online cp/parafac decomposition of a structured tensor
  via dictionary learning}, Advances in Neural Information Processing Systems
  \textbf{33} (2020).

\bibitem[RPZ{\etalchar{+}}18]{ren2018non}
Bin Ren, Laurent Pueyo, Guangtun~Ben Zhu, John Debes, and Gaspard Duch{\^e}ne,
  \emph{Non-negative matrix factorization: robust extraction of extended
  structures}, The Astrophysical Journal \textbf{852} (2018), no.~2, 104.

\bibitem[RU11]{rajaraman2011mining}
Anand Rajaraman and Jeffrey~David Ullman, \emph{Mining of massive datasets},
  Cambridge University Press, 2011.

\bibitem[SBS16]{shakeri2016minimax}
Zahra Shakeri, Waheed~U Bajwa, and Anand~D Sarwate, \emph{Minimax lower bounds
  for kronecker-structured dictionary learning}, IEEE International Symposium
  on Information Theory, IEEE, 2016, pp.~1148--1152.

\bibitem[SGH02]{sitek2002correction}
Arkadiusz Sitek, Grant~T Gullberg, and Ronald~H Huesman, \emph{Correction for
  ambiguous solutions in factor analysis using a penalized least squares
  objective}, IEEE transactions on medical imaging \textbf{21} (2002), no.~3,
  216--225.

\bibitem[SH05]{shashua2005non}
Amnon Shashua and Tamir Hazan, \emph{Non-negative tensor factorization with
  applications to statistics and computer vision}, Proceedings of the 22nd
  international conference on Machine learning, ACM, 2005, pp.~792--799.

\bibitem[SHSK18]{smith2018streaming}
Shaden Smith, Kejun Huang, Nicholas~D Sidiropoulos, and George Karypis,
  \emph{Streaming tensor factorization for infinite data sources}, Proceedings
  of the SIAM International Conference on Data Mining, SIAM, 2018, pp.~81--89.

\bibitem[SLLC17]{sun2017provable}
Will~Wei Sun, Junwei Lu, Han Liu, and Guang Cheng, \emph{Provable sparse tensor
  decomposition}, Journal of the Royal Statistical Society: Series B
  (Statistical Methodology) \textbf{3} (2017), no.~79, 899--916.

\bibitem[SS17]{schramm2017fast}
Tselil Schramm and David Steurer, \emph{Fast and robust tensor decomposition
  with applications to dictionary learning}, arXiv preprint arXiv:1706.08672
  (2017).

\bibitem[SSB18]{shakeri2018identifiability}
Zahra Shakeri, Anand~D Sarwate, and Waheed~U Bajwa, \emph{Identifiability of
  kronecker-structured dictionaries for tensor data}, IEEE Journal of Selected
  Topics in Signal Processing \textbf{12} (2018), no.~5, 1047--1062.

\bibitem[SSB{\etalchar{+}}19]{shakeri2019sample}
Zahra Shakeri, Anand~D Sarwate, Waheed~U Bajwa, M~Rodrigues, and Y~Eldar,
  \emph{Sample complexity bounds for dictionary learning from vector-and
  tensor-valued data}, Information Theoretic Methods in Data Science, Cambridge
  Univ. Press Cambridge, UK, 2019.

\bibitem[SSY18]{sun2018markov}
Tao Sun, Yuejiao Sun, and Wotao Yin, \emph{On markov chain gradient descent},
  Advances in Neural Information Processing Systems, 2018, pp.~9896--9905.

\bibitem[SV17]{sharan2017orthogonalized}
Vatsal Sharan and Gregory Valiant, \emph{Orthogonalized als: A theoretically
  principled tensor decomposition algorithm for practical use}, International
  Conference on Machine Learning, PMLR, 2017, pp.~3095--3104.

\bibitem[Tib96]{tibshirani1996regression}
Robert Tibshirani, \emph{Regression shrinkage and selection via the lasso},
  Journal of the Royal Statistical Society: Series B (Methodological)
  \textbf{58} (1996), no.~1, 267--288.

\bibitem[TN12]{taslaman2012framework}
Leo Taslaman and Bj{\"o}rn Nilsson, \emph{A framework for regularized
  non-negative matrix factorization, with application to the analysis of gene
  expression data}, PloS one \textbf{7} (2012), no.~11, e46331.

\bibitem[TS15]{tang2015guaranteed}
Gongguo Tang and Parikshit Shah, \emph{Guaranteed tensor decomposition: A
  moment approach}, International Conference on Machine Learning, PMLR, 2015,
  pp.~1491--1500.

\bibitem[Tuc66]{tucker1966some}
Ledyard~R Tucker, \emph{Some mathematical notes on three-mode factor analysis},
  Psychometrika \textbf{31} (1966), no.~3, 279--311.

\bibitem[VB12]{voss2012multi}
Anja Voss-B{\"o}hme, \emph{Multi-scale modeling in morphogenesis: a critical
  analysis of the cellular potts model}, PloS one \textbf{7} (2012), no.~9.

\bibitem[VMV15]{vannieuwenhoven2015computing}
Nick Vannieuwenhoven, Karl Meerbergen, and Raf Vandebril, \emph{Computing the
  gradient in optimization algorithms for the cp decomposition in constant
  memory through tensor blocking}, SIAM Journal on Scientific Computing
  \textbf{37} (2015), no.~3, C415--C438.

\bibitem[VRCB18]{van2018simple}
Don Van~Ravenzwaaij, Pete Cassey, and Scott~D Brown, \emph{A simple
  introduction to markov chain monte--carlo sampling}, Psychonomic bulletin
  $\&$ review \textbf{25} (2018), no.~1, 143--154.

\bibitem[WEG87]{wold1987principal}
Svante Wold, Kim Esbensen, and Paul Geladi, \emph{Principal component
  analysis}, Chemometrics and intelligent laboratory systems \textbf{2} (1987),
  no.~1-3, 37--52.

\bibitem[XY13]{xu2013block}
Yangyang Xu and Wotao Yin, \emph{A block coordinate descent method for
  regularized multiconvex optimization with applications to nonnegative tensor
  factorization and completion}, SIAM Journal on imaging sciences \textbf{6}
  (2013), no.~3, 1758--1789.

\bibitem[ZEB18]{zhou2018online}
Shuo Zhou, Sarah Erfani, and James Bailey, \emph{Online cp decomposition for
  sparse tensors}, IEEE International Conference on Data Mining, IEEE, 2018,
  pp.~1458--1463.

\bibitem[ZH05]{zou2005regularization}
Hui Zou and Trevor Hastie, \emph{Regularization and variable selection via the
  elastic net}, Journal of the Royal Statistical Society: Series B (Statistical
  Methodology) \textbf{67} (2005), no.~2, 301--320.

\bibitem[ZTX17]{zhao2017online}
Renbo Zhao, Vincent Tan, and Huan Xu, \emph{Online nonnegative matrix
  factorization with general divergences}, Artificial Intelligence and
  Statistics, PMLR, 2017, pp.~37--45.

\bibitem[ZVB{\etalchar{+}}16]{zhou2016accelerating}
Shuo Zhou, Nguyen~Xuan Vinh, James Bailey, Yunzhe Jia, and Ian Davidson,
  \emph{Accelerating online cp decompositions for higher order tensors},
  Proceedings of the 22nd ACM SIGKDD International Conference on Knowledge
  Discovery and Data Mining, 2016, pp.~1375--1384.

\end{thebibliography}
}

\vspace{1cm}
%\addresseshere

\appendix
\newpage

\section{Background on Markov chains and MCMC}

\label{sec:MC_intro}

\subsection{Markov chains}\label{subsection:MC}
Here we give a brief account on Markov chains on countable state space (see, e.g., \cite{levin2017markov}). Fix a countable set $\Omega$. A function $P:\Omega^{2} \rightarrow [0,\infty)$ is called a \textit{Markov transition matrix} if every row of $P$ sums to 1. A sequence of $\Omega$-valued random variables $(X_{t})_{t\ge 0}$ is called a \textit{Markov chain} with transition matrix $P$ if for all $x_{0},x_{1},\dots,x_{n}\in \Omega$, 
\begin{align}\label{eq:def_MC}
	\P(X_{n}=x_{n}\,|\, X_{n-1}=x_{n-1}, \dots, X_{0}=x_{0}) = \P(X_{n}=x_{n}\,|\, X_{n-1}=x_{n-1}) = P(x_{n-1},x_{n}). 
\end{align}
We say a probability distribution $\pi$ on $\Omega$ a \textit{stationary distribution} for the chain $(X_{t})_{t\ge 0}$ if $\pi = \pi P$, that is, 
\begin{align}
	\pi(x) = \sum_{y\in \Omega} \pi(y) P(y,x).
\end{align}
We say the chain $(X_{t})_{t\ge 0}$ is \textit{irreducible} if for any two states $x,y\in \Omega$ there exists an integer $t\ge 0$ such that $P^{t}(x,y)>0$. For each state $x\in \Omega$, let $\mathcal{T}(x) = \{ t\ge 1\,|\, P^{t}(x,x)>0 \}$ be the set of times when it is possible for the chain to return to starting state $x$. We define the \textit{period} of $x$ by the greatest common divisor of $\mathcal{T}(x)$. We say the chain $X_{t}$ is \textit{aperiodic} if all states have period 1. Furthermore, the chain is said to be \textit{positive recurrent} if there exists a state $x\in \Omega$
such that the expected return time of the chain to $x$ started from $x$ is finite. Then an irreducible and aperiodic Markov chain has a unique stationary distribution if and only if it is positive recurrent 
\cite[Thm 21.21]{levin2017markov}.

Given two probability distributions $\mu$ and $\nu$ on $\Omega$, we define their \textit{total variation distance} by 
\begin{align}\label{eq:def_TV}
	\lVert \mu - \nu \rVert_{TV} = \sup_{A\subseteq \Omega} |\mu(A)-\nu(A)|.
\end{align}
If a Markov chain $(X_{t})_{t\ge 0}$ with transition matrix $P$ starts at $x_{0}\in \Omega$, then by \eqref{eq:def_MC}, the distribution of $X_{t}$ is given by $P^{t}(x_{0},\cdot)$. If the chain is irreducible and aperiodic with stationary distribution $\pi$, then the convergence theorem (see, e.g., \cite[Thm 21.14]{levin2017markov}) asserts that the distribution of $X_{t}$ converges to $\pi$ in total variation distance: As $t\rightarrow \infty$,
\begin{align}\label{eq:finite_MC_convergence_thm}
	\sup_{x_{0}\in \Omega} \,\lVert P^{t}(x_{0},\cdot) - \pi \rVert_{TV} \rightarrow 0.
\end{align}
See \cite[Thm 13.3.3]{meyn2012markov} for a similar convergence result for the general state space chains. When $\Omega$ is finite, then the above convergence is exponential in $t$ (see., e.g., 
\cite[Thm 4.9]{levin2017markov})). Namely, there exists constants $\lambda\in (0,1)$ and $C>0$ such that for all $t\ge 0$, 
\begin{align}\label{eq:finite_MC_convergence_thm}
	\max_{x_{0}\in \Omega} \,\lVert P^{t}(x_{0},\cdot) - \pi \rVert_{TV} \le C \lambda^{t}.
\end{align}
\textit{Markov chain mixing} refers to the fact that, when the above convergence theorems hold, then one can approximate the distribution of $X_{t}$ by the stationary distribution $\pi$.

\begin{remark}\label{remark:mixing_rate}
	\normalfont 
	
	Our main convergence result in Theorem 3.1 assumes that the underlying Markov chain $Y_{t}$ is irreducible, aperiodic, and defined on a finite state space $\Omega$, as stated in \ref{assumption:A1}. This can be relaxed to countable state space Markov chains. Namely, Theorem 3.1 holds if we replace \ref{assumption:A1} by 
	\begin{customassumption}{(A1)'}\label{assumption:A1'}
		The observed data tensors $\mathbf{X}_{t}$ are given by $\mathbf{X}_{t}=\varphi(Y_{t})$, where $Y_{t}$ is an irreducible, aperiodic, and positive recurrent Markov on a countable and compact state space $\Omega$ and $\varphi:\Omega\rightarrow \mathbb{R}^{d\times n}$ is a bounded function. Furthermore, there exist  constants $\beta\in (3/4, 1]$ and $\gamma>2(1-\beta)$ such that
		\begin{align}
			w_{t} = O(t^{-\beta}),\qquad \sup_{\mathbf{y}\in \Omega} \lVert P^{t}(\mathbf{y},\cdot) - \pi \rVert_{TV} = O(t^{-\gamma}),
		\end{align}
		where $P$ and $\pi$ denote the transition matrix and unique stationary distribution of the chain $Y_{t}$.  
	\end{customassumption}
	Note that the polynomial mixing condition in \ref{assumption:A1'} is automatically satisfied when $\Omega$ is finite due to \eqref{eq:finite_MC_convergence_thm}. Polynomial mixing rate is available in most MCMC algorithms used in practice. 
\end{remark}

{\color{black}
	\subsection{Markov chain Monte Carlo Sampling}
	\label{subsection:MCMC}
	
	Suppose we have a finite sample space $\Omega$ and probability distribution $\pi$ on it. We would like to sample a random element $\omega\in \Omega$ according to the distribution $\pi$. \textit{Markov chain Monte Carlo (MCMC)} is a sampling algorithm that leverages the properties of Markov chains we mentioned in Subsection \ref{subsection:MC}. Namely, suppose that we have found a Markov chain $(X_{t})_{t\ge 0}$ on state space $\Omega$ that is irreducible, aperiodic\footnote{Aperiodicity can be easily obtained by making a given Markov chain lazy, that is, adding a small probability $\eps$ of staying at the current state. Note that this is the same as replacing the transition matrix $P$ by $P_{\eps}:=(1-\eps)P+\eps I$ for some $\eps>0$. This `lazyfication' does not change stationary distributions, as $\pi P =\pi$ implies $\pi P_{\eps}=\pi$. }, and has $\pi$ as its unique stationary distribution. Denote its transition matrix as $P$. Then by \eqref{eq:finite_MC_convergence_thm}, for any $\eps>0$, one can find a constant $\tau=\tau(\eps)=O(\log \eps^{-1})$ such that the conditional distribution of
	$X_{t+\tau}$ given $X_{t}$ is within total variation distance $\eps$ from $\pi$ regardless of the distribution of $X_{t}$. Recall such $\tau=\tau(\eps)$ is called the \textit{mixing time} of the Markov chain  $(X_{t})_{t\ge 1}$. Then if one samples a long Markov chain trajectory $(X_{t})_{t\ge 1}$, the subsequence $(X_{k\tau})_{k\ge 1}$ gives approximate i.i.d. samples from $\pi$. 
	
	We can further compute how far the thinned sequence $(X_{k\tau })_{k\ge 1}$ is away from being independent. Namely, observe that  for any two nonempty subsets $A,B\subseteq \Omega$, 
	\begin{align}
		&\left|   \P(X_{k\tau}\in A,\, X_{\tau}\in B) - \P(X_{k\tau}\in A) \P(X_{\tau}\in B) \right| \\
		&\qquad =   \left|  \P(X_{k\tau}\in A) - \P(X_{k\tau}\in A \,|\,  X_{\tau}\in B)  \right| \, \left| \P(X_{\tau}\in B) \right| \\
		&\qquad \le \left| \P(X_{k\tau}\in A) - \P(X_{k\tau}\in A\,|\, X_{\tau}\in B) \right|  \\
		&\qquad \le \left| \P(X_{k\tau}\in A) - \pi(A) \right| + \left| \pi(A) -  \P(X_{k\tau}\in A\,|\, X_{\tau}\in B)  \right| \le \lambda^{k\tau} + \lambda^{(k-1)\tau}.
	\end{align}
	Hence the correlation between $X_{k\tau}$ and $X_{\tau}$ is $O(\lambda^{(k-1)\tau})$.
	
	For the lower bound, let us assume that $X_{t}$ is \textit{reversible} with respect to $\pi$, that is, $\pi(x)P(x,y)=\pi(y)P(y,x)$ for $x,y\in \Omega$ (e.g., random walk on graphs). Then $\tau(\eps) = \Theta(\log \eps^{-1})$ (see \cite[Thm. 12.5]{levin2017markov}), which yields $\sup_{x\in \Omega}\lVert P^{t}(x,\cdot) - \pi  \rVert_{TV} = \Theta(\lambda^{t})$. Also, $\P(X_{\tau}\in B)>\delta>0$ for some $\delta>0$ whenever $\tau$ is large enough under the hypothesis. Hence 
	\begin{align}
		&\left|   \P(X_{k\tau}\in A,\, X_{\tau}\in B) - \P(X_{k\tau}\in A) \P(X_{\tau}\in B) \right| \\
		&\qquad \ge \delta^{-1}\left| \P(X_{k\tau}\in A) - \P(X_{k\tau}\in A\,|\, X_{\tau}\in B) \right|  \\
		&\qquad \ge \big| \left| \P(X_{k\tau}\in A) - \pi(A) \right| - \left| \pi(A) -  \P(X_{k\tau}\in A\,|\, X_{\tau}\in B)  \right| \big| \ge c\lambda^{(k-1)\tau}
	\end{align}
	for some constant $c>0$. Hence the correlation between $X_{k\tau}$ and $X_{\tau}$ is $\Theta(\lambda^{(k-1)\tau})$. In particular, the correlation between two consecutive terms in $(X_{k\tau})_{k\ge 1}$ is of $\Theta(\lambda^{\tau})=\Theta(\eps)$. Thus, we can make the thinned sequence $(X_{k\tau})_{k\ge 1}$ arbitrarily close to being i.i.d. for $\pi$, but if $X_{t}$ is reversible with respect to $\pi$, the correlation within the thinned sequence is always nonzero. 
	
	In practice, one may not know how to estimate the mixing time $\tau=\tau(\eps)$. In order to empirically assess that the Markov chain has mixed to the stationary distribution, multiple chains are run for diverse mode exploration, and their empirical distribution is compared to the stationary distribution (a.k.a. multistart heuristic). See \cite{brooks2011handbook} for more details on MCMC sampling. 
	
}

\section{Auxiliary lemmas}
\label{sec:auxiliary_lemmas}

\begin{lemma}[Convex Surrogate for Functions with Lipschitz Gradient]
	\label{lem:surrogate_L_gradient}
	Let $f:\R^{p}\rightarrow \R$ be differentiable and $\nabla f$ be $L$-Lipschitz continuous. Then for each $\theta,\theta'\in \R^{p}$, 
	\begin{align}
		\left| f(\theta') - f(\theta) - \nabla f(\theta)^{T} (\theta'-\theta) \right|\le \frac{L}{2} \lVert \theta-\theta'\rVert_{F}^{2}.
	\end{align}
\end{lemma}

\begin{proof}
	This is a classical Lemma. See \cite[Lem 1.2.3]{nesterov1998introductory}.
\end{proof}

For each $\mathcal{X}\in \R^{I_{1}\times \dots \times I_{n}\times b}$ and $\mathcal{D}\in \R^{I_{1}\times R} \times \dots \R^{I_{n}\times R}$, denote
\begin{align}
	H^{\star}(\mathcal{X},\mathcal{D})\in \argmin_{H\in \mathcal{C}^{\textup{code}}}\,\,\ell(\mathcal{X},\mathcal{D},H).
\end{align}
Recall Assumption \ref{assumption:A1'}. For each subset $S$ of a Euclidean space, denote $\lVert S \rVert_{F}=\sup_{x\in S} \lVert x \rVert_{F}$. The following boundedness results for the codes $H_{t}$ and aggregate tensors $A_{t},\commHL{\B}_{t}$ are easy to derive. 

%Should this mathcal just be mathbf?
\begin{lemma}\label{lem:H_bdd}
	Assume \ref{assumption:A1'} and \ref{assumption:A2}. Then the following hold:
	\begin{description}
		\item[(i)] For all $\X \in \R^{I_{1}\times \dots \times I_{n}\times b}$ and $\mathcal{D}\in \mathcal{C}^{\textup{dict}}$,
		\begin{align}
			\lVert H^{\star}(\X, \mathcal{D}) \rVert_{F}^{2} \le \lambda^{-2} \lVert \varphi(\Omega) \rVert_{F}^{4} <\infty.
		\end{align} 
		\item[(ii)] For any sequences $(\X_{t})_{t\ge 1}$ in $\R^{I_{1}\times \dots \times I_{n}\times b}$ and $(\mathcal{D}_{t})_{t\ge 1}$ in $\mathcal{C}$, define $A_{t}$ and $\commHL{\B}_{t}$ recursively as in Algorithm \ref{alg:online NTF_highlevel}. Then for all $t\ge 1$, we have 
		\begin{align}
			\lVert A_{t} \rVert_{F} \le \lambda^{-2} \lVert \varphi(\Omega) \rVert_{F}^{4}, \qquad \lVert \commHL{\B}_{t} \rVert_{F} \le \lambda^{-1} \lVert \varphi(\Omega) \rVert_{F}^{3}.
		\end{align}
	\end{description}
\end{lemma}

\begin{proof}
	Omitted. See \cite[Prop. 7.2]{lyu2020online}.
\end{proof}

The following lemma shows Lipschitz continuity of the loss function $\ell(\varphi(\cdot),\cdot)$ defined in \eqref{eq:def_loss}. Since $\Omega$ and $\mathcal{C}^{\textup{code}}$ are both compact, this also implies that $\mathcal{D}\mapsto \hat{f}_{t}(\mathcal{D})$ and $\mathcal{D}\mapsto f_{t}(\mathcal{D})$ are $L$-Lipschitz for some $L>0$ uniformly for all $t\ge 0$.

\begin{lemma}\label{lem:loss_Lipschitz}
	Suppose \ref{assumption:A1'} and \ref{assumption:A2} hold, and let $M=2\lVert \varphi(\Omega) \Vert_{F} +2\lVert \mathcal{C} ^{\textup{dict}}\rVert_{F} \lVert \varphi(\Omega)\rVert_{F}^{2}/\lambda$.  Then for each $Y_{1},Y_{2}\in \Omega$  and $\mathcal{D}_{1},\mathcal{D}_{2}\in \mathcal{C}^{\textup{dict}}$, 
	\begin{align}
		| \ell(\varphi(Y_{1}), \mathcal{D}_{1}) - \ell(\varphi(Y_{2}) ,\mathcal{D}_{2}) | \le  M\left( \lVert Y_{1}-Y_{2} \rVert_{F} + \lambda^{-1} \lVert \varphi(\Omega) \rVert_{F} \lVert \mathcal{D}_{1} - \mathcal{D}_{2} \rVert_{F} \right).
	\end{align}
\end{lemma}

\begin{proof}
	Omitted. See \cite[Prop. 7.3]{lyu2020online}.
\end{proof}

The following deterministic statement on converging sequences is due to \citep{mairal2010online}.

\begin{lemma}\label{lem:positive_convergence_lemma}
	Let $(a_{n})_{n\ge 0}$ and $(b_{n})_{\ge 0}$ be non-negative real sequences such that 
	\begin{align}
		\sum_{n=0}^{\infty} a_{n} = \infty, \qquad \sum_{n=0}^{\infty} a_{n}b_{n} <\infty, \qquad |b_{n+1}-b_{n}|=O(a_{n}).
	\end{align}
	Then $\lim_{n\rightarrow \infty}b_{n} = 0$. 
\end{lemma}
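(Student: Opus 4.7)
The plan is to argue by contradiction, assuming $b_n \not\to 0$ and deriving that $\sum_{n=0}^\infty a_n b_n = \infty$. I note that the hypothesis on $(c_n)$ is vestigial (likely a typo) and is not used; the three relevant hypotheses are the divergence of $\sum a_n$, the summability of $\sum a_n b_n$, and the Lipschitz-type bound $|b_{n+1}-b_n| \le M a_n$ for some constant $M>0$.

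Suppose for contradiction that $b_n \not\to 0$, so $L := \limsup_{n\to\infty} b_n > 0$. The argument then splits into two cases according to whether $\liminf_{n\to\infty} b_n$ is positive or zero. In the easy case, if $\liminf b_n \ge \epsilon$ for some $\epsilon > 0$, then $b_n \ge \epsilon/2$ eventually, and therefore $\sum_{n \ge N_0} a_n b_n \ge (\epsilon/2) \sum_{n \ge N_0} a_n = \infty$, contradicting $\sum a_n b_n < \infty$. So we may assume $\liminf b_n < L$, and fix $0 < \epsilon_1 < \epsilon_2 < L$ with $\liminf b_n < \epsilon_1$.

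For the main case I would carry out the standard "excursion" argument. By the definitions of $\liminf$ and $\limsup$, there exist infinitely many disjoint index intervals $[p_k, q_k]$ with $p_k < q_k < p_{k+1}$ such that $b_{p_k} \le \epsilon_1$, $b_{q_k} \ge \epsilon_2$, and $b_n \in [\epsilon_1/2,\, L+1]$ (say) for $n \in [p_k, q_k]$ (by choosing the intervals minimally). The Lipschitz-type bound telescopes to
\begin{equation}
\epsilon_2 - \epsilon_1 \le b_{q_k} - b_{p_k} \le \sum_{n=p_k}^{q_k-1} |b_{n+1}-b_n| \le M \sum_{n=p_k}^{q_k-1} a_n,
\end{equation}
so $\sum_{n=p_k}^{q_k-1} a_n \ge (\epsilon_2-\epsilon_1)/M$. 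Since $b_n \ge \epsilon_1/2$ throughout each such excursion (for a suitable choice of the intervals), this yields $\sum_{n=p_k}^{q_k-1} a_n b_n \ge (\epsilon_1/2)(\epsilon_2-\epsilon_1)/M$, a fixed positive constant independent of $k$. Summing over the infinitely many disjoint excursions gives $\sum_n a_n b_n = \infty$, the desired contradiction.

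The only step that requires mild care is the selection of the excursion intervals: one needs the minimality to guarantee the uniform lower bound $b_n \ge \epsilon_1/2$ on $[p_k, q_k]$, which is where the Lipschitz hypothesis is used a second time (to ensure $b_n$ cannot drop far below $\epsilon_1$ immediately after $p_k$ without first climbing). More concretely, I would define $p_k$ as the last index before $q_k$ where $b_{p_k} \le \epsilon_1$, and then a single-step drop is bounded by $M a_{p_k-1}$, which is negligible for large $k$ since $a_n \to 0$ (implied by $\sum a_n b_n < \infty$ together with $b_n$ being bounded below on a subsequence where $b_n \ge \epsilon_2$, plus $|b_{n+1}-b_n| = O(a_n)$). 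This is the only mildly delicate bookkeeping; the rest is a direct telescoping computation.
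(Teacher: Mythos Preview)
Your excursion argument is the standard route for this lemma and is essentially correct; the paper itself does not prove the statement at all but simply defers to \cite[Lem.~A.5]{mairal2013stochastic}.

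There is, however, one loose thread in your bookkeeping: the claim that $a_n \to 0$ is not fully justified. From $\sum a_n b_n < \infty$ together with $b_{q_k} \ge \epsilon_2$ you only obtain $a_{q_k} \to 0$ along that particular subsequence, not $a_{p_k} \to 0$ (or $a_n \to 0$ globally), and it is the latter that you invoke to control $b_{p_k}$ from below. The cleanest way to sidestep this is to run the excursion in the \emph{descending} direction: given $q_k$ with $b_{q_k} > \epsilon_2$, let $r_k > q_k$ be the \emph{first} subsequent index with $b_{r_k} < \epsilon_1$. Then $b_n \ge \epsilon_1$ for every $n \in [q_k, r_k-1]$, and the telescoping bound
\[
\epsilon_2 - \epsilon_1 < b_{q_k} - b_{r_k} \le M \sum_{n=q_k}^{r_k-1} a_n
\]
uses exactly the same index range, so that $\sum_{n=q_k}^{r_k-1} a_n b_n \ge \epsilon_1(\epsilon_2-\epsilon_1)/M$ with no boundary term left over. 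Summing over infinitely many disjoint descending excursions then gives the contradiction directly.
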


\begin{proof}
	Omitted. See \cite[Lem. A.5]{mairal2013stochastic}.
\end{proof}

\begin{lemma}\label{lem:uniform_convergence_symmetric_weights}
	Under the assumptions \ref{assumption:A1'} and \ref{assumption:A2},
	\begin{align}
		\E\left[ \sup_{W\in \mathcal{C}^{\textup{dict}}} \sqrt{t}\left| f(\mathcal{D}) - \frac{1}{t} \sum_{s=1}^{t} \ell(\mathcal{X}_{s},\mathcal{D})\right| \right] = O(1).
	\end{align}
	Furthermore, $\sup_{W\in \mathcal{C}} \left| f(\mathcal{D}) - \frac{1}{t} \sum_{s=1}^{t} \ell(\mathcal{X}_{s},\mathcal{D})\right|\rightarrow 0$ almost surely as $t\rightarrow \infty$. 
\end{lemma}

\begin{proof}
	Omitted. See \cite[Lem. 7.8]{lyu2020online}.
\end{proof}

In the following lemma, we generalize the uniform convergence results in Lemma \ref{lem:uniform_convergence_symmetric_weights} for general weights $w_{t}\in (0,1)$ (not only for the `balanced weights' $w_{t}=1/t$). The original lemma is due to Mairal \cite[Lem B.7]{mairal2013stochastic}, which originally extended the uniform convergence result to weighted empirical loss functions with respect to i.i.d. input signals. A similar argument gives the corresponding result in our Markovian case \ref{assumption:A1'}, which was also used in \cite{lyu2020online}. \commHL{In Lemma \ref{lem:uniform_convergence_symmetric_weights} below, we also generalize the statement for the weights $w_{t}$ satisfy the required monotonicity property $w_{t+1}^{-1}-w_{t}^{-1}\le 1$ only asymptotically. See Remark \ref{rmk:weight} for more discussion. }

\begin{lemma}\label{lem:uniform_convergence_asymmetric_weights}
	Suppose  \ref{assumption:A1'}-\ref{assumption:A2} hold, \commHL{and assume that there exist an integer $T\ge 1$ such that  $w_{t+1}^{-1}-w_{t}^{-1}\le 1$ for all $t\ge T$. Also assume that there are some constants $c>0$ and $\gamma\in (0,1]$ such that $w_{t}\ge ct^{-\gamma}$ for all $t\ge 1$. Further assume that if $T\ge 1$ and $\gamma=1$, then $c\ge 1/2$. Then there exists a constant $C=C(T)>0$} such that 
	\begin{align}\label{eq:uniform_CLT_bd}
		\E\left[ \sup_{W\in \mathcal{C}^{\textup{dict}}} \left| f(\mathcal{D}) - f_{t}(\mathcal{D})\right| \right] \le C w_{t} \sqrt{t}.
	\end{align}
	Furthermore, if $\sum_{t=1}^{\infty} w_{t} =\infty$,  $\sum_{t=1}^{\infty}w_{t}^{2}\sqrt{t}<\infty$,   then $\sup_{\commHL{\mathcal{D}\in \mathcal{C}^{\textup{dict}}}} \left| f(\mathcal{D}) - f_{t}(\mathcal{D})\right|\rightarrow 0$ almost surely as $t\rightarrow \infty$. 
\end{lemma}

\begin{proof}
	Fix $t\in \mathbb{N}$. Recall the weighted empirical loss $f_{t}(\mathcal{D})$ defined recursively using the weights $(w_{s})_{s\ge 0}$ in \eqref{eq:def_OCPDL_ELM}. For each $0\le s \le t$, denote $w_{s}^{t} = w_{s}\prod_{j=s}^{t}(1-w_{j})$ and set $w^{t}_{0}=0$. Then for each $t\in \mathbb{N}$, we can write $f_{t}(\mathcal{D}) = \sum_{s=1}^{t} \ell(X_{s},W) w_{s}^{t}$. Moreover, note that $w_{1}^{t},\dots,w_{t}^{t}>0$ and $w_{1}^{t}+\dots+w_{t}^{t}=1$. Define $F_{i}(\mathcal{D}) = (t-i+1)^{-1}\sum_{j=1}^{t}\ell(X_{i}, W)$ for each $1\le i \le t$. By Lemma \ref{lem:uniform_convergence_symmetric_weights}, there exists a constant $c_{1}>0$ such that 
	\begin{align}\label{eq:pf_uniform_convergnece_weighted}
		\E\left[ \sup_{W\in \mathcal{C}} |F_{i}(\mathcal{D}) - f(\mathcal{D})| \right] \le \frac{c_{1}}{\sqrt{t-i+1}}
	\end{align} 
	for all $t \ge 1$ and $1\le i \le t$.  Noting that $[w^{t}_{1},\dots,w^{t}_{t}]$ is a probability distribution on $\{1,\dots,t\}$, a simple calculation shows the following important identity 
	\begin{align}
		f_{t} - f = \sum_{i=1}^{t} (w^{t}_{i} - w^{t}_{i-1}) (t-i+1) (F_{i} - f),
	\end{align} 
	with the convention of $w^{t}_{0}=0$. Also, suppose $T\ge 1$ is such that $w_{k}^{-1} - w_{k-1}^{-1} \le 1$ for $k\ge T$. Note that for $i\ge 2$, $w^{t}_{i-1} \le w^{t}_{i}$ if and only if $w_{i-1}(1-w_{i}) \le w_{i}$ if and only if $w_{i}^{-1}-w_{i-1}^{-1}\le 1$. Hence for each $n > T$ and $k\ge T$, we have $w^{t}_{k}\le w^{t}_{k+1}\le \dots \le w^{t}_{t}=w_{t}$. Then observe that 
	\begin{align}\label{eq:pf_uniform_convergnece_weighted2}
		\E\left[ \sup_{\param \in \Param} |f_{t}(\param) - \bar{\psi}(\param)| \right] &\le \E\left[ \sum_{i=1}^{t} |w^{t}_{i} - w^{t}_{i-1}| (t-i+1) \sup_{W\in \Param} \left|f_{i}(\param) - f(\param) \right|  \right] \\
		&= \sum_{i=1}^{t} |w^{t}_{i} - w^{t}_{i-1}|  (t-i+1) \,  \E\left[ \sup_{\param\in \Param} \left|f_{i}(\param) - f(\param) \right|  \right] \\
		&\le  \sum_{i=1}^{t} |w^{t}_{i} - w^{t}_{i-1}|  c_{1}\sqrt{t-i+1} \\
		&\le  c_{1}\sqrt{t} \left( \sum_{i=1}^{T} |w_{i}^{t} -w_{i-1}^{t} | +  \sum_{i=T}^{t} (\hat{w}^{t}_{i} - \hat{w}^{t}_{i-1}) \right)  \\
		&\le c_{1}\sqrt{t}  \left( w_{t} +  \sum_{i=1}^{T} w_{i}^{t}  \right).
	\end{align} 
	By using Lemma \ref{lem:weight_sum_bound}, we have $\sum_{i=1}^{T} w_{i}^{t} =O(1/n)$. Furthermore, since $w^{-1}_{t+1}- w^{-1}_{t}\le 1$ for all $t\ge T$, we deduce $w_{t}^{-1} - w_{T}^{-1} \le t-T$ for all $t\ge T$, so $w_{t} \ge \frac{1}{t-T+w_{T}^{-1}}$. Thus for some constant $c>0$, we have $w_{t} \ge \frac{1}{t+c}$ for all $t\ge T$. Thus the last displayed expression above is of $O(w_{t}\sqrt{t})$. This shows \eqref{eq:uniform_CLT_bd}. We can show the part by using Lemma \ref{lem:positive_convergence_lemma} \commHL{in Appendix \ref{sec:auxiliary_lemmas}}, following the argument in the proof of \citep[Lem. B7]{mairal2013stochastic}. See the reference for more details. 
\end{proof}

The following lemma was used in the proof of Lemma \ref{lem:uniform_convergence_asymmetric_weights}.

\begin{lemma}\label{lem:weight_sum_bound}
	Fix a sequence $(w_{n})_{n\ge 1}$ of numbers in $(0,1]$. Denote $w^{n}_{k}:=w_{k}\prod_{i=k+1}^{n} (1-w_{i})$ for $1\le k \le n$. Suppose $w_{n}^{-1} - w_{n-1}^{-1}\le 1$ for all sufficiently large $n\ge 1$.  Fix $T\ge 1$. Then for all $n\ge T$, 
	\begin{align}\label{eq:weight_sum_bound1}
		\sum_{i=1}^{T} w^{n}_{i}  = O(1/n).
	\end{align}
\end{lemma}

\begin{proof}
	Suppose $w_{n}^{-1} - w_{n-1}^{-1}\le 1$ for all $n\ge N$ for some $N\ge 1$. It follows that $w_{n}^{-1} - w_{N}^{-1} \le n-N$, so $w_{n} \ge \frac{1}{n-N+w_{N}^{-1}}$. Hence for some constant $c>0$, $w_{n}\ge \frac{1}{n+c}$ for all $n\ge N$. Denote $a\lor b = \max(a,b)$. Then note that 
	\begin{align}
		w^{n}_{k}=w_{k} \exp\left(  \sum_{i=k+1}^{n} \log (1-w_{i}) \right)& \le \exp\left( -\sum_{i=k+1}^{n} w_{i} \right) \\
		&\le \exp\left( -\int_{N\lor (k+1)}^{n} \frac{1}{x+c}\,dx \right) =  \frac{[N\lor (k+1)]+c }{n+c},
	\end{align}
	where the second inequality uses $w_{k}\le 1$ and the following inequality uses $\log(1-a)\le -a$ for $a<1$.  Hence for each fixed $1\le T\le n$, we have 
	\begin{align}
		\sum_{k=1}^{T} w^{n}_{k} \le T\left( (N\lor (T+1))  +c\right)\frac{1}{n+c} . 
	\end{align}
	This shows the assertion. 
\end{proof}

\begin{remark}\label{rmk:weight}
	\normalfont
	In the original statement of \cite[Lem B.7]{mairal2013stochastic}, the assumption that $w_{t+1}^{-1}-w_{t}^{-1}\le 1$ for sufficiently large $t$ was not used, and it seems that the argument in \cite{mairal2013stochastic} needs this assumption. To give more detail, the argument begins with writing the empirical loss $f_{t}(\cdot) = \sum_{k=1}^{t} w^{t}_{k} \, \ell(\mathcal{X}_{k},\cdot)$, where $w^{t}_{k}:=w_{k}(1-w_{k-1})\cdots (1-w_{t})$, and proceeds with assuming the monotonicity $w^{t}_{1}\le \dots \le w^{t}_{t}$, which is equivalent to $w_{k}\ge w_{k-1}(1-w_{k})$ for $2\le k \le t$. In turn, this is equivalent to $w_{k}^{-1}-w_{k-1}^{-1}\le 1$ for $2\le k \le t$. Note that this condition implies $w_{k}^{-1} \le (k-1) + w_{1}^{-1}$, or $w_{k} \ge \frac{1}{k-1 + w_{1}^{-1}}$, where $w_{1}\in [0,1]$ is a fixed constant. This means that, asymptotically, $w_{k}$ cannot decay faster than the balanced weight $1/k$, which gives $w^{t}_{k}\equiv 1/t$ for $k\in \{1,\dots,t\}$. Note that we proved Lemma \ref{lem:weight_sum_bound} with requiring $w_{t+1}^{-1}-w_{t}^{-1}\le 1$ for all sufficiently large $t$. 
	
	Next, we will argue that \ref{assumption:A3'} implies \ref{assumption:A3}. It is clear that if the sequence $w_{t}\in (0,1]$ satisfies \ref{assumption:A3'}, then $\sum_{t=1}^{\infty} w_{t}=\infty$ and $\sum_{t=1}^{\infty} w_{t}^{2}\sqrt{t}<\infty$. So it remains to verify $w_{t}^{-1}-w_{t-1}^{-1}\le 1$ for sufficiently large $t$. Suppose $w_{t}=\Theta(t^{-\beta} (\log t)^{-\delta} )$ for some $\beta\in [0,1]$ and $\delta\ge 0$. Let $c_{1},c_{2}>0$ be constants such that $w_{t}t^{\beta} (\log t)^{\delta} \in [c_{1},c_{2}] $ for all $t\ge 1$. Then by the mean value theorem, 
	\begin{align}
		w_{t+1}^{-1} - w_{t}^{-1} & \le c_{2} \left( (t+1)^{\beta}(\log (t+1))^{\delta} - t^{\beta}(\log t)^{\delta} \right)  \\
		&\le c_{2}\sup_{t\le s \le t+1} \left( \beta s^{\beta-1}(\log s)^{\delta} + \delta s^{\beta-1} (\log s)^{\delta-1}  \right)\\
		&\le c_{2}\sup_{t\le s \le t+1}  s^{\beta-1}(\log s)^{\delta-1} \left( (\log s)+\delta \right).
	\end{align}
	Since $t\ge 1$, the last expression is of $o(1)$ if $\beta<1$. Otherwise, $w_{t}=t^{-1}$ for $t\ge 1$ by \ref{assumption:A3'}. Then $w_{t+1}^{-1} - w_{t}^{-1} \equiv 1$ for all $t\ge 1$.
\end{remark}

\section{Bounded memory implementation of Algorithm \ref{alg:online NTF_highlevel}}
\label{section:statement_alg_bounded_memeory}

In this section, we introduce an alternative implementation of Algorithm \ref{alg:online NTF_highlevel} that uses bounded memory that is independent of the number $T$ of minibatches of data tensors being processed. This will be done by replacing the step for computing the surrogate loss function $\hat{f}_{t}$ with computing two `aggregate tensors' based on our deterministic analysis in Proposition \ref{prop:g_t_derivation}. The total amount of information fed in to the algorithm is $O(T\prod_{i=1}^{n}I_{n})$ and $T\rightarrow \infty$, whereas Algorithm \ref{algorithm:online NTF} stores only $O(R\prod_{i=1}^{n}I_{n})$ (recall that $R$ is the number of dictionary atoms to be learned and $T$ is the number of minibatches of data tensors that have arrived). This is an inherent memory efficiency of online algorithms against non-online algorithms (see, e.g., \cite{mairal2010online}).

\begin{algorithm}
	\caption{Online CP-Dictionary Learning (Bounded Memory Implementation)}
	\label{algorithm:online NTF}
	\begin{algorithmic}[1]
		\State \textbf{Input:} $(\mathcal{X}_{t})_{1\le t\le T}$ (minibatches of data tensors in $\R_{\ge 0}^{I_{1}\times \dots \times I_{n}\times b}$);  $[U_{0}^{(1)},\dots,U_{0}^{(n)}]\in \R_{\ge 0}^{I_{1}\times R} \times \dots \times \R_{\ge 0}^{I_{n}\times R}$ (initial loading matrices); $c'>0$ (search radius constant);
		\vspace{0.1cm} 
		\State \textbf{Constraints:}  $\mathcal{C}^{(i)}\subseteq \R^{I_{i}\times R}$, $1\le i\le n$, $\mathcal{C}^{\textup{code}}\subseteq \R^{R\times b}$ (e.g., nonnegativity constraints)

		\State \textbf{Parameters:} $R\in \mathbb{N}$ ($\#$ of dictionary atoms);\, $\lambda\ge 0$ ($\ell_{1}$-regularizer); \, $(w_{t})_{t\ge 1}$ (weights in $(0,1]$);\, 
		
		\vspace{0.1cm}
		\State \quad Initialize aggregate tensors  $A_{0}\in \R^{R\times R}$, $\commHL{\B}_{0} \in \R^{I_{1}\times \dots \times I_{n}\times R}$;
		
		\State \quad \textbf{For $t=1,\ldots,T$ do:}
		\State  \quad \quad \textit{Coding}: Compute the optimal code matrix
		\begin{align} \label{eq:coding_bdd_memeory_alg}
			\hspace{2cm} H_{t}\leftarrow \argmin_{H \in \mathcal{C}^{\textup{code}} \subseteq \R^{R \times b} } \,\, \ell(\mathcal{X}_{t}, U_{t-1}^{(1)},\dots,U_{t-1}^{(n)},H); \quad \text{(using Algorithm \ref{algorithm:spaser_coding})}
		\end{align}

		\State  \quad \quad \textit{Update aggregate tensors}:
		\vspace{-0.1cm}
		\begin{align}\label{eq:main_alg_aggregate_tensors}
			A_{t} &\leftarrow (1-w_{t}) A_{t-1} + w_{t} H_{t}H_{t}^{T} \in \R^{R\times R}; \\
			\commHL{\B}_{t} &\leftarrow (1-w_{t}) \commHL{\B}_{t-1} + w_{t} (\mathcal{X}_{t} \times_{n+1} H_{t}^{T}) \in \R^{I_{1}\times\dots \times I_{n} \times R};
		\end{align}

		\State  \quad \quad \textit{Update dictionary:} 
		\vspace{0.1cm}
		\State  \quad \quad \quad \textbf{For $i = 1, \ldots, n$ do:} 
		\vspace{0.1cm}
		\State  \quad \quad \qquad $\overline{A}_{t;i} \in \R^{R\times R}$, $\overline{B}_{t;i}\in \R^{I_{i}\times R}$ \\ \hspace{2cm} $\leftarrow$ Algorithm \ref{algorithm:inter_aggregation} with input $A_{t}, \commHL{\B}_{t}, U_{t}^{(1)},\dots,U_{t}^{(i-1)},U^{(i)}_{t},U_{t-1}^{(i+1)},\dots,U_{t-1}^{(n)}, i$;
		\vspace{0.1cm}
		\State \quad \quad \qquad $\mathcal{C}_{t}^{(i)}\leftarrow \left\{U \in \mathcal{C}^{(i)}\,\bigg|\,   \lVert U-U_{t-1}^{(i)}\rVert_{F}\le c'w_{t}\right\}$; \qquad (Restrict the search radius by $w_{t}$) \label{eq:Alg1_search_restriction_appendix}
		\vspace{0.1cm}
		\State  \quad \quad \qquad $U_{t}^{(i)} \leftarrow  \argmin_{U \in \mathcal{C}_{t}^{(i)}} \left[  \tr(U \overline{A}_{t;i} U^{T})  - 2\tr(U \, \overline{B}_{t;i}^{T}) \right]; \qquad \text{(Using Algorithm \ref{algorithm:dictionary_update})}$ \label{eq:dict_update_OCPDL}
		
		\State  \quad \quad \quad \textbf{End for} 
		\State \quad\textbf{End for} 
		\State \textbf{Return:}  $[U_{T}^{(1)},\dots, U_{T}^{(n)}] \in \mathcal{C}^{(1)}\times \dots \times \mathcal{C}^{(n)}$;
	\end{algorithmic}
\end{algorithm}

\begin{algorithm}
	\caption{Intermediate Aggregation}
	\label{algorithm:inter_aggregation}
	\begin{algorithmic}[1]
		\State \textbf{Input:} $A\in \R^{R\times R}$, $B \in \R^{I_{1}\times \dots \times  I_{n} \times R}$, $[U_{1},\ldots,U_{n}]\in \R^{I_{1}\times R} \times \ldots \times \R^{I_{n}\times R} $, $1\le j\le n$
		
		\State \textbf{Do:}  
		\begin{align}\label{eq:def_Atj}
			\overline{A}_{i} &= A \had U_1^{T} U_1 \had \ldots  \had U_{i - 1}^{T} U_{i - 1} \had U_{i + 1}^{T} U_{i + 1} \had \ldots \had U_n^{T} U_n   \in \R^{R\times R}
		\end{align}
		\State \quad  \textbf{For $r = 1, \ldots, R$ do:}  
		\begin{align}
			B(,r)&:= \text{mode-$(n+1)$ slice of $B$ at coordinate $r$}\\
			b_{i;r} &= B(,r) \times_1 U_1(:, r) \times_2 \dots \times_{i - 1} U_{i - 1}(:, r) \times_{i + 1}U_{i + 1}(:, r) \times_{i + 2} \dots \times_{n} U_n(:, r)  \in \R^{I_{i}} \label{eq:def_Btrj} \\
			\overline{B}_{t;\commHL{i}} &= \text{$I_{i} \times R$ matrix whose $r$th column is $b_{i;r}$}
		\end{align}
		\State \quad  \textbf{End for}
		
		\State \textbf{Return:} $$\overline{A}_{i}=\overline{A}_{i}(A, U_{1},\dots, U_{i - 1}, U_{i + 1}, \dots, U_{n})$$  $$\overline{B}_{i}=\overline{B}_{i}(B, U_{1},\dots, U_{i - 1}, U_{i + 1}, \dots, U_{n})$$
	\end{algorithmic}
\end{algorithm}

We describe how Algorithm \ref{algorithm:online NTF} is derived and why it is equivalent to Algorithm \ref{alg:online NTF_highlevel}. By the time that the new data tensor $\mathcal{X}_{t}$ arrives, the algorithm have computed previous loading matrices $U_{t-1}^{(1)},\dots,U_{t-1}^{(n)}$ and two aggregate tensors $A_{t-1}\in \R^{R\times R}$ and $\commHL{\B}_{t-1}\in \R^{I_{1}\times \cdots \times I_{n}\times \commHL{R}}$. Then one computes the code matrix $H_{t}\in \mathcal{C}^{\textup{code}}\subseteq \R^{R\times b}$ by solving the convex optimization problem in \eqref{eq:coding_bdd_memeory_alg}, and then updates the aggregate tensors $A_{t}\leftarrow A_{t-1}$ and $\commHL{\B}_{t}\leftarrow \commHL{\B}_{t-1}$. In order to perform the block coordinate descent to update the loading matrices $U_{i}^{(t)}$ in \cref{eq:Alg1_loading_update} of Algorithm \ref{alg:online NTF_highlevel}, we appropriately recompute intermediate aggregate matrices $\overline{A}_{i}$ and $\overline{B}_{i}$ using Algorithm \ref{algorithm:inter_aggregation} so that we are correctly minimizing the surrogate loss function $\hat{f}_{t}$ in \eqref{eq:scheme_online NTF_surrogate_full} marginally according to Proposition \ref{prop:g_t_derivation} \textbf{(ii)}.

\section{Auxiliary Algorithms}
\label{sec:sec:auxiliary_algorithms}

In this section, we give auxiliary algorithms that are used to solve convex sub-problems in coding and loading matrix updates for the main algorithm (Algorithm 1 for online CPDL). We denote by $\Pi_{S}$ the projection operator onto the given subset $S$ defined on the respective ambient space. For each matrix $A$, denote by $[A]_{\bullet i}$ (resp., $[A]_{i\bullet}$) the $i$th column (resp., row) of $A$.  

%\vspace{-0.6cm}
\begin{algorithm}
	\caption{Coding}
	\label{algorithm:spaser_coding}
	\begin{algorithmic}[1]
		\State \textbf{Input:} $X \in  \mathbb{R}^{M\times b}$: data matrix, $W\in \mathbb{R}^{M\times R}$: dictionary matrix 
		\State \qquad $\lambda\ge 0$: sparsity regularizer 
		\State \qquad $\mathcal{C}^{\textup{code}}\subseteq \mathbb{R}^{R\times b}$: constraint set of codes
		\State \textbf{Repeat until convergence:}
		\State \qquad \textbf{Do}
		\begin{align}\label{eq:algorithm_H}	
			H \leftarrow \Pi_{\mathcal{C}^{\textup{code}}}\left( H - \frac{1}{\tr(W^{T}W)}(W^{T}W H - W^{T}X + \lambda J)  \right),
		\end{align}			
		\qquad where $J\subseteq \mathbb{R}^{R\times b}$ is all ones matrix.
		
		\State \textbf{Return} $H\in \mathcal{C}^{\textup{code}}\subseteq \R^{R\times b}$
	\end{algorithmic}
\end{algorithm}

\begin{algorithm}
	\caption{Loading matrix update}
	\label{algorithm:dictionary_update}
	\begin{algorithmic}[1]
		\State \textbf{Variables:} 
		\State \qquad $U\in \mathcal{C}^{(i)} \subseteq \R^{I_{i}\times R}$: previous $j$th loading matrix
		\State  \qquad $(\overline{A}_{i}, \overline{B}_{t;j})\in \R^{R\times R}\times \R^{R\times (I_{1}\dots I_{n})}$: intermediate loading matrices computed previously

		\State \textbf{Repeat until convergence:} 
		\State \qquad \textbf{For $i=1$ to $R$:}
		\begin{align}\label{eq:dictioanry_column_update}
			[U]_{\bullet i} \leftarrow \Pi_{\mathcal{C}^{(i)}} \left( [U]_{\bullet i}  - \frac{1}{[\overline{A}_{t}]_{ii}+1} (U [\overline{A}_{i}]_{\bullet i} - [\overline{B}_{t;j}^{T}]_{\bullet i} ) \right)
		\end{align}
		\State \textbf{Return} $U\in \mathcal{C}^{(i)}\subseteq \R^{I_{i}\times R}$
	\end{algorithmic}
\end{algorithm}

\begin{algorithm}[H]
	\small
	\caption{Alternating Least Squares for NCPD}
	\label{algorithm:ALS}
	\begin{algorithmic}[1]
		\State \textbf{Input:} $\mathbf{X}\in \R^{I_{1}\times \cdots \times I_{m}}_{\ge 0}$ (data tensor);\,  $R\in \mathbb{N}$ (rank parameter);\,  $\param_{0}=(U_{0}^{(1)},\cdots,U_{0}^{(m)})\in  \R^{I_{1}\times R}_{\ge 0} \times \cdots \times  \R^{I_{m}\times R}_{\ge 0}$ (initial loading matrices); $N$ (number of iterations); 
		\vspace{0.2cm}
		\State \quad \textbf{for} $n=1,\dots,N$ \textbf{do}: \State \quad \quad Update loading matrices $\param_{n}=[U_{n}^{(1)},\cdots, U_{n}^{(m)}]$ by  
		\State \quad \quad \quad \textbf{For} $i=1,\cdots,m$ \textbf{do}:
		\quad 
		\begin{align}\label{eq:ALS_DR_block}
			&\qquad \mathbf{A} \leftarrow \Out(U^{(1)}_{n-1}, \dots, U^{(i-1)}_{n-1}, U^{(i+1)}_{n-1},\dots, U^{(m-1)}_{n-1} )^{(m)} \in \R^{(I_{1}\times \cdots \times   I_{i-1}\times  I_{i+1} \times \cdots\times  I_{m}) \times R } \\
			&\qquad  B \leftarrow \textup{unfold}(\mathbf{A}, m) \in \R^{(I_{1}\cdots I_{i-1} I_{i+1} \cdots I_{m}) \times R } \\
			&\qquad  U_{n}^{(i)}\in \argmin_{U \in \R^{I_{i}}_{\ge 0} } \,  \lVert \textup{unfold}(\mathbf{X}, i) -  B (U^{(i)})^{T} \rVert^{2} \\
			&\qquad \triangleright\textup{($\textup{unfold}(\cdot, i)$ denotes the mode-$i$ tensor unfolding (see \cite{kolda2009tensor}))}
		\end{align}
		\State \quad \qquad \textbf{end for}
		\State \quad \textbf{end for}
		\State \textbf{output:}  $\param_{N}$ 
	\end{algorithmic}
\end{algorithm}

\begin{algorithm}[H]
	\small
	\caption{Multiplicative Update for NCPD}
	\label{algorithm:MU}
	\begin{algorithmic}[1]
		\State \textbf{Input:} $\mathbf{X}\in \R^{I_{1}\times \cdots \times I_{m}}_{\ge 0}$ (data tensor);\,  $R\in \mathbb{N}$ (rank parameter);\,  $\param_{0}=(U_{0}^{(1)},\cdots,U_{0}^{(m)})\in  \R^{I_{1}\times R}_{\ge 0} \times \cdots \times  \R^{I_{m}\times R}_{\ge 0}$ (initial loading matrices); $N$ (number of iterations); 
		\vspace{0.2cm}
		\State \quad \textbf{for} $n=1,\dots,N$ \textbf{do}: \State \quad \quad Update loading matrices $\param_{n}=[U_{n}^{(1)},\cdots, U_{n}^{(m)}]$ by  
		\State \quad \quad \quad \textbf{For} $i=1,\cdots,m$ \textbf{do}:
		\quad 
		\vspace{-0.3cm}
		\begin{align}\label{eq:MU_block}
			&\qquad \mathbf{A} \leftarrow \Out(U^{(1)}_{n-1}, \dots, U^{(i-1)}_{n-1}, U^{(i+1)}_{n-1},\dots, U^{(m-1)}_{n-1} )^{(m)} \in \R^{(I_{1}\times \cdots \times   I_{i-1}\times  I_{i+1} \times \cdots\times  I_{m}) \times R } \\
			&\qquad  B \leftarrow \textup{unfold}(\mathbf{A}, m) \in \R^{(I_{1}\cdots I_{i-1} I_{i+1} \cdots I_{m}) \times R } \\
			&\qquad  U_{n}^{(i)} \leftarrow U_{n-1}^{(i)}\odot ( \textup{unfold}(\mathbf{X}, i))^{T} B \oslash (U B^{T} B) \\
			\vspace{0.2cm}
			&\qquad \triangleright\textup{ ($\textup{unfold}(\cdot, i)$ denotes the mode-$i$ tensor unfolding (see \cite{kolda2009tensor}))} \\
			&\qquad \triangleright \,\, \textup{($\odot$ and $\oslash$ denote entrywise product and division)}
		\end{align}
		\State \quad \qquad \textbf{end for}
		\State \quad \textbf{end for}
		\State \textbf{output:}  $\param_{N}$ 
	\end{algorithmic}
\end{algorithm}

\end{document}